\definecolor{crimson}{rgb}{0.86, 0.08, 0.24}
\definecolor{dodgerblue}{rgb}{0.12, 0.56, 1.0}
\newtheorem{theorem}{Theorem}
\newtheorem{proposition}{Proposition}
\newtheorem{lemma}{Lemma}
\newtheorem{definition}{Definition}
\newtheorem{remark}{Remark}
\newtheorem{assumption}{Assumption}
\newtheorem{exmp}{Example}
\newcommand{\order}{\ensuremath{\mathcal{O}}}
\newcommand{\w}{\bm{w}}
\newcommand{\al}{\bm{\alpha}}
\long\def\comment#1{}
\newcommand{\bm}[1]{\boldsymbol{#1}}
\newcommand{\llVert}{\left\lVert}
\newcommand{\rrVert}{\right\rVert}
\newlength{\widebarargwidth}
\newlength{\widebarargheight}
\newlength{\widebarargdepth}
\let\origtheassumption\theassumption
\definecolor{aoenglish}{rgb}{0.0, 0.5, 0.0}
\definecolor{burgundy}{rgb}{0.5, 0.0, 0.13}
\definecolor{crimson}{rgb}{0.86, 0.08, 0.24}
\definecolor{dodgerblue}{rgb}{0.12, 0.56, 1.0}
\newlength{\commentWidth}
\newcommand\StartAppendixEntries{}
  \renewcommand\StartAppendixEntries{\value{tocdepth}=-10000\relax}%
  \edef\maintocdepth{\the\value{tocdepth}}%
  \renewcommand\StartAppendixEntries{\value{tocdepth}=\maintocdepth\relax}%
\newcommand*\appendixwithtoc{%
\hypersetup{linkcolor=black}
  \addtocontents{toc}{\protect\StartAppendixEntries}
  \listofatoc
 \hypersetup{linkcolor=crimson}
}
\date{}
\begin{document}

\begin{center}
{\bf{\LARGE{{Pareto Efficient Fairness in Supervised Learning: \\[5pt] From Extraction to Tracing}}}}
\vspace*{.2in}

{\large{
 \begin{tabular}{cccc}
  Mohammad Mahdi Kamani$^\dagger$ & Rana Forsati$^\star$ &   James Z. Wang$^\dagger$ & Mehrdad Mahdavi$^\dagger$\\
 \end{tabular}
 }}

 \vspace*{.2in}

 \begin{tabular}{cc}
 \begin{tabular}{c}
$^\dagger$The Pennsylvania State University \\
\texttt{ \{mqk5591,jzw11,mzm616\}@psu.edu}
\end{tabular}
&
\begin{tabular}{c}
$^\star$Microsoft Bing \\
\texttt{raforsat@microsoft.com}
\end{tabular}
 \end{tabular}

 \vspace*{.1in}
 
\end{center}

\begin{abstract}%
As algorithmic decision-making systems are becoming more pervasive, it is crucial to ensure such systems do not become mechanisms of unfair discrimination on the basis of gender, race, ethnicity, religion, etc. Moreover, due to the inherent trade-off between fairness measures and accuracy, it is desirable to learn fairness-enhanced models without significantly compromising the accuracy. In this paper, we propose Pareto efficient Fairness (PEF) as a suitable fairness notion for supervised learning, that can ensure the optimal trade-off between overall loss and other fairness criteria. The proposed PEF notion is definition-agnostic, meaning that any well-defined notion of fairness can be reduced to the PEF notion. To efficiently find a PEF classifier, we cast the fairness-enhanced classification as a bilevel optimization problem and propose a gradient-based method that can guarantee the solution belongs to the Pareto frontier with provable guarantees for convex and non-convex objectives. We also generalize the proposed algorithmic solution to extract and trace arbitrary solutions from the Pareto frontier for a given  preference over accuracy and fairness measures. This approach is generic and can be generalized to any multicriteria optimization problem to trace points on the Pareto frontier curve, which is interesting by its own right. We empirically demonstrate the effectiveness of the PEF solution and the extracted Pareto frontier on real-world datasets compared to state-of-the-art methods.
\end{abstract}
\section{Introduction}\label{sec:intro}
Attracting momentous attention during the past few years, machine learning models are significantly impacting nearly every aspect of modern society we live in. Hence, it is of paramount importance to study how these models are affecting our lives, and to what extent they  are upholding to the moral standards of our society. The degree of fairness in algorithmic decision-making systems has been the center of a heated debate in numerous fields, such as criminal justice~\citep{propublica,tolan2019machine}, advertising~\citep{dwork2018fairness}, admissions~\citep{marcinkowski2020implications}, and hiring~\citep{lambrecht2019algorithmic,bogen2018help}. Recently, the concerns about algorithmic fairness have resulted in a resurgence of interest to  develop fairness-aware  predictive models to ensure such models do not become a source of unfair discrimination on the basis of gender, race, ethnicity, religion, etc. Devising a fairness-enhanced mechanism necessitates i)  a precise   \textit{fairness measure} to quantify fairness by taking into account the proper legal, ethical, and social context and ii) developing an \textit{algorithmic solution} to learn models that guarantee such metrics during test time. 

Inspired by legal notions of fairness, a great deal of efforts has been invested  in defining  the right notion of  algorithmic fairness  such as demographic  (or statistical) parity~\citep{dwork2012fairness}, equalized odds and equality of opportunity~\citep{hardt2016equality}, individual fairness~\citep{dwork2012fairness,kusner2017counterfactual}, representational fairness~\citep{zemel2013learning}, and fairness under composition~\citep{dwork2018group}. Yet, there is a lack of consensus on which definition can best satisfy fairness requirements as different measures exhibit different advantages and disadvantages. Interestingly, several studies suggest that some of these definitions are incompatible with each other~\citep{pleiss2017fairness}.

From an algorithmic standpoint, we seek to design machine learning algorithms that yield predictive models, which are demonstrably fair  and  bolster to alleviate the impact of unfair decisions~\citep{hardt2016equality,zafar2015fairness,zafar2017parity,donini2018empirical,pleiss2017fairness}. This can be accomplished  by changing the learning procedure, whether in pre-processing,  in-process training, or post-processing stages~\citep{barocas2017fairness}. Most of the existing algorithmic or in-process approaches mainly aim at solving a constrained optimization problem  by imposing a constraint on the level of fairness while optimizing the main learning objective, {\it e.g.}, accuracy~\citep{donini2018empirical,woodworth2017learning,zafar2017parity}. However, due to the inherent trade-off between accuracy and fairness which is  demonstrated in recent studies both empirically and theoretically~\cite{kearns2019ethical,lipton2017does,zafar2017parity}, simply imposing fairness constraint to the main learning objective may \textit{significantly compromise accuracy}. That is because there is no such thing as a \textit{free lunch}, meaning, imposing fairness constraints to the main learning task, introduces a trade-off between these objectives. These trade-offs between fairness constraints and accuracy have been asserted in several studies~\citep{kearns2019ethical,lipton2017does,zafar2017parity,kamani2019efficient,wick2019unlocking,kamani2020multiobjective}.

In light of the incompatibility of fairness measures and the inherent trade-off between accuracy and fairness -- as perusing a higher degree of fairness will compromise accuracy, a fundamental research question is:  \emph{for a given fairness measure(s), can we learn a model that allows for higher fairness without significantly compromising accuracy?} And a more important question is: \emph{Can we find a set of solutions with different levels of the trade-off between accuracy and fairness, so a decision-maker can choose from depending on the preference of one objective over the other in an application?}

Answers to these questions need to be compatible with the fairness measure while minimizing the price of fairness, {\it i.e.}, the relative reduction in the accuracy under the fair solution compared to the best accuracy without fairness consideration. Dealing with multiple and possibly conflicting objectives, a conspicuous approach is to seek a Pareto optimal solution to guarantee optimal compromises between accuracy and fairness.  A model is Pareto  optimal if there is no alternative model to make one of the objectives (accuracy or fairness) better off without making the other worse. To find solutions with different level of trade-off between accuracy and fairness we ought to characterize their Pareto frontier in the objective space.

 The  main practical difficulty of the aforementioned constrained-based minimization approaches such as FERM~\citep{donini2018empirical} is that in general, the obtained solution might not be a Pareto optimal solution. As a result, the optimal trade-off between fairness measure and accuracy is not guaranteed.  Moreover, to manage non-convex constraints resulted from fairness constrains, one needs to employ some relaxations, such as linear relaxation, to make the optimization problem  more tractable.  Furthermore, most of the state-of-the-art algorithms for fairness have been proposed for binary sensitive ({\it i.e.}, protected) features in binary classification problems; however, the generalization of these methods to multiple group sensitive features, or multi-label classification tasks are limited or computationally inefficient~\citep{donini2018empirical,zafar2015fairness,samadi2018price}.

A remarkable attempt to achieve better accuracy-fairness trade-offs  has been made in~\cite{agarwal2018reductions}, where the fairness constrained optimization problem is cast as a saddle point problem, and  an approximate equilibrium is sought by utilizing  the exponentiated-gradient algorithm.  While this approach shows promising experimental results, however, the sequence of randomized classifiers returned by their algorithm requires solving a cost-sensitive classification at each computationally burdensome step. Also, the non-dominating solutions resulting from their algorithm are not necessarily from the Pareto frontier, and hence, they might not be Pareto efficient. Finally, the proposed grid-search reduction to learn a deterministic fair classifier from randomized classifiers is prohibitively costly for non-binary sensitive features. A similar attempt to find Pareto efficient points of the minmax problem for fairness-aware learning has been recently made in~\cite{martinezminimax}. Despite its similarity with our goal in finding the Pareto efficient points in fair learning, their approach is limited to only one fairness measure (equal risk among groups) and have different objective vector than our proposal, hence, they are seeking points from a different Pareto frontier. Also, since the total accuracy is not included in their objective vector, their solutions do not show the accuracy-fairness trade-off. In addition, they use a strong condition of convexity for Pareto frontier, which is not always met, even when all the objectives are convex. Finally, they do not provide a solution to find points from different parts of the Pareto frontier.

 To mitigate these issues, in this paper, we introduce the notion of Pareto efficient fairness (PEF)\footnote{Notion of PEF is also used in~\cite{balashankar2019fair} for their specific fairness measure that is different from our definition of the PEF here. Our notion is more generic and is not bounded to any fairness measure. For more discussion refer to Section~\ref{sec:related}.}, through which we indicate the optimal trade-off between fairness and accuracy. This notion is definition-agnostic, and encompasses many other previously studied definitions of fairness as special cases. Moreover, it is straightforward to adopt the proposed PEF notion to multiple group sensitive features or other machine learning problems beyond binary classification. To learn a PEF classifier,  we propose a bilevel Pareto descent optimization algorithm  that can be utilized by any classification model-- which is trainable  via gradient descent, to generate a fairness-enhanced model from Pareto frontier of the problem. 
We rigorously analyze the convergence of the proposed algorithm to a Pareto optimal solution which guarantees the optimal accuracy-fairness trade-off. 

An answer to the second question requires characterizing the entire Pareto frontier set to extract solutions with desired levels of trade-off between accuracy and fairness. This task by itself is quite challenging and still ongoing research question in learning with multiple objectives (please see~\citep{das1997closer,mahapatramulti,ma2020efficient} for  various dedicated studies trying to tackle this problem). In this paper, we propose a novel first-order algorithm to trace points from the Pareto frontier, which can be generalized to any other multi-criteria optimization problem. To the best of our knowledge, this is the first proposal that is specifically designed to trace points on the Pareto frontier of the fairness-aware learning problem.  

\paragraph{Contributions.}Below, we clarify the relationship and differences between our work and earlier research in fair classification. In particular, the main contributions of this paper consists of proposing an algorithmic solution unifying and extending existing  methods in several aspects:
\begin{itemize}
    \item This work introduces a general algorithmic framework for fairness-aware classification, that can be effectively  applied to different notions of fairness, with convergence guarantees to the Pareto stationary points of the optimization problem. Unlike prior methods for fairness-aware learning, the proposed framework does not employ any relaxation assumptions for objectives in the optimization problem, hence it could achieve state-of-the-art results using a gradient descent based method.
    \item We propose a  bilevel structure  to learn a single Pareto efficient classifier, which is not only novel from methodology perspective, but also grants us an efficient tool for convergence analysis of these optimization problems. This structure can pave the way for the convergence analysis of the stochastic multi-objective optimization in future work. 
    \item We propose an algorithmic solution to trace points on the Pareto frontier of the fairness-accuracy, which not only outperforms other state-of-the-art methods but also provides a set of solutions with different levels of compromises to choose from depending on the application. This novel approach is not bounded only to fairness-aware learning problems and can be applied to any optimization problem dealing with multiple objectives.
\end{itemize}

\paragraph{Organization.}The rest of this paper is organized as follows.
Section~\ref{sec:framework} introduces the Pareto efficient fairness and discusses how to reduce known notions of fairness to an instance of it. We then propose a novel bilevel optimization method to find a single PEF classifier using gradient descent. In Section~\ref{sec:pf}, we investigate the geometry of the Pareto frontier and propose our novel algorithm for tracing points on the Pareto frontier. Section~\ref{sec:converge} lays the theoretical groundwork for the proposed algorithm in various settings and analyzes its convergence rates. Section~\ref{sec:exp} provides extensive empirical results that corroborate our claims and theoretical findings. In Section~\ref{sec:related}, we discuss additional related works to this work. All the proofs as well as some discussions on multiobjective optimization are deferred to the appendix.   

\section{Pareto Efficient Fairness}\label{sec:framework}

In this section, we formally set up our problem and introduce a few key concepts for our algorithm design and analysis. In a typical fairness-aware supervised learning scenario, we have $n$ i.i.d. training examples in the form of $\mathcal{D}=\left\{(\bm{x}_1,a_1,y_1),\ldots,(\bm{x}_n,a_n,y_n) \right\}$, where $a_i \in \mathcal{A}$, is a sensitive feature that represents group membership of each sample among $c$ different groups, $\mathcal{A}=\left\{s_1,\ldots,s_c \right\}$. Our goal is to learn a function $f: \mathcal{X} \mapsto \mathcal{Y}$ from input space $\mathcal{X} \subseteq \mathbb{R}^d$ to output  space $\mathcal{Y}$ parametrized by a vector $\boldsymbol{w} \in \mathcal{W} \subseteq \mathbb{R}^d$.  Note that, sensitive feature $a$ might or might not be a part of the input feature $\bm{x}$. The performance of $\boldsymbol{w}$ is assessed using a loss function $\ell: \mathcal{W} \times \mathcal{X} \times \mathcal{Y} \mapsto \mathbb{R}_{+}$. In empirical risk minimization, we minimize the average  loss, namely, 
\begin{equation}\label{eq:total-loss}
    \arg\min_{\boldsymbol{w} \in \mathcal{W}}\mathcal{L}(\bm{w};\mathcal{D}) := \frac{1}{n}\sum_{(\bm{x}_i,y_i)\in\mathcal{D}} \ell\left(\bm{w}; (\bm{x}_i,y_i)\right).
\end{equation}
Solely minimizing the empirical risk would result in an unfair solution with respect to different sensitive groups. To learn a fair classifier, the main idea is to  define a notion of fairness and  impose the corresponding constraints during learning process in addition to minimizing the empirical risk.  Recently different notions of algorithmic fairness have been proposed in the literature including demographic parity~\citep{dwork2012fairness}, equalized odds and equal opportunity~\citep{hardt2016equality}, individual fairness~\citep{kearns2019average}, and disparate mistreatment~\citep{zafar2017fairness}. These constraints try to reduce the effects of the sensitive feature $a$ on the output of the classifier $\hat{y}$. For instance, equality of opportunity introduced by~\cite{hardt2016equality} desires to ensure that the true positive rate of each sensitive group, $\mathsf{TP}_k = \mathbb{P}\left[\hat{y}=1 |  a = s_k, y=1\right]$, is the same for all $k \in \{1,\ldots,c\}$. A stronger notion, equalized odds~\citep{hardt2016equality}, requires that classifier's output $\hat{y}$ and sensitive feature $a$ to be independent conditional on the true label $y$. This reflects on not only having the same true positive rate among different groups but also having an equal false positive rate for each group, $\mathsf{FP}_k = \mathbb{P}\left[\hat{y}=1 |  a = s_k, y=-1\right]$ for all $k \in \{1,\ldots,c\}$. Another notion, disparate mistreatment~\citep{zafar2017fairness}, calls for equal misclassification probability, $\mathsf{F}_k = \mathbb{P}\left[\hat{y}\neq y |  a = s_k\right]$ for all $k \in \{1,\ldots,c\}$.

\subsection{Pareto efficient fairness}
As it has alluded to before, the ultimate goal is to learn a classifier that satisfies fairness constraint (possibly) without  compromising the accuracy. Since no single solution would generally minimize both objectives  simultaneously, we start by defining the  notion of Pareto efficient fairness, that mathematically forms the optimal trade-off. Then we reduce some well-known definitions of fairness to the notion of Pareto efficient fairness. For some preliminary notions in multiobjective optimization and Pareto efficiency refer to Appendix~\ref{app:MOO_prem}.

\begin{wrapfigure}{r}{0.4\textwidth}
\vspace{-0.7cm}
  \begin{center}
    \includegraphics[width=0.36\textwidth]{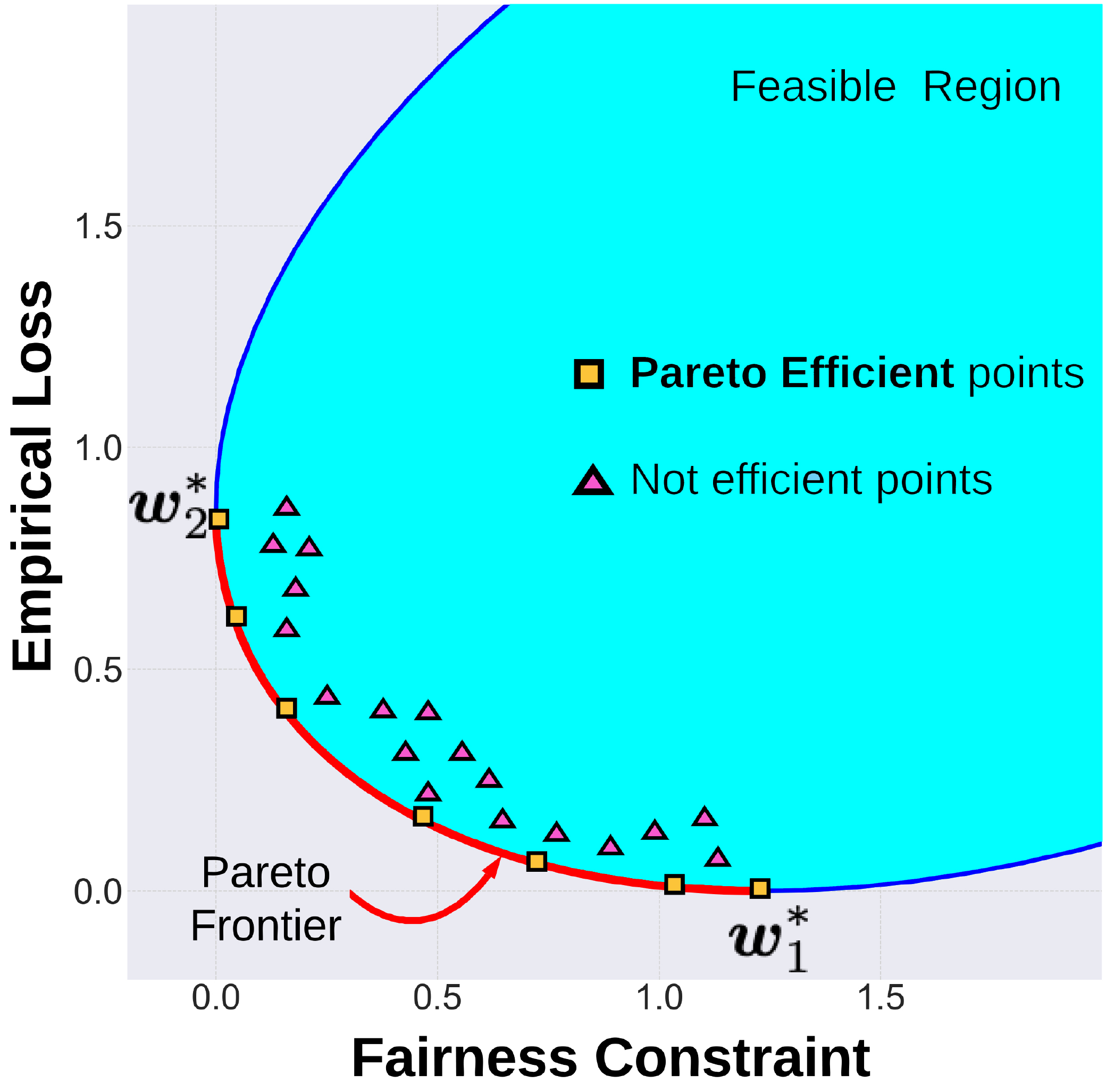}
  \end{center}
  \caption{Hypothetical trade-off between accuracy and any fairness constraint.  $\bm{w}_1^*$ is the optimum solution that minimizes the empirical risk, while $\bm{w}_2^*$ is the optimum solution to satisfy the fairness constraint. Each solution on the Pareto frontier cannot be dominated in terms of both empirical risk and the fairness constraint by any other feasible solution. }\label{fig:pf}
  \vspace{-0.7cm}
\end{wrapfigure}
As depicted in Figure~\ref{fig:pf}, in a trade-off between empirical loss of a learning problem as one objective and any fairness constraint as the second objective, we are seeking the Pareto efficient solutions that belong to the Pareto frontier of the problem. These points on the red line cannot be dominated by other points in the feasible set. The scalarization of this multi-objective optimization such as constrained optimization can potentially end up in any  non-efficient points in the feasible region; thereby lacking any guarantee on the optimally of trade-offs. Only considering one objective could end up in their respective optimal solutions $\bm{w}_1^*$ and $\bm{w}_2^*$. Thus, not only it is important to find a solution from the Pareto frontier of the problem, but also we should be able to choose the desired level of accuracy in the cost of fairness violation. In Section~\ref{sec:pf}, we incorporate our algorithm to find the desired levels of trade-off between accuracy and fairness constraints on Pareto frontier. 

We now turn to defining the notion of Pareto efficient fairness in classification problems.
\begin{definition}[Pareto efficient fairness]
Consider any fairness objectives $h_i(\bm{w}),\;i\in[m]$, that ought to be minimized in addition to the main learning objective, $\mathcal{L}(\bm{w})$. The solution $\bm{w}^*$ is called Pareto efficient fair if no other feasible solution $\bm{w} \in \mathcal{W}$ can dominate $\bm{w}^*$. That is, for the objective vector $\bm{\mathrm{h}}(\bm{w}) = \left[ \mathcal{L}(\bm{w}), \mathrm{h}_1(\bm{w}),\ldots,\mathrm{h}_m(\bm{w}) \right]$, we have 
$\nexists  \bm{w}\in\mathcal{W}$ such that $\bm{\mathrm{h}}(\bm{w}) \prec_{m} \bm{\mathrm{h}}(\bm{w}^*)$.
\label{def:pef}
\end{definition}

We call this point, with optimal compromises between the main loss and fairness objectives, Pareto efficient fair (PEF) solution. While achieving a PEF solution seems to be a conspicuous choice for this problem, and has been asserted in other studies~\citep{zafar2015fairness,kearns2019ethical,balashankar2019fair}, the crucial challenge of how to achieve such a point remains unsolved. In Section~\ref{sec:eff-opt}, we propose an iterative  efficient  algorithm that guarantees converges to a PEF solution of the described problem.

It is worthy to note that, by learning a fairness-enhanced model via PEF formulation, we are no longer confined to binary sensitive groups nor binary classification tasks, where most of the current algorithms are designed for. As a result, we could apply this notion to a broader range of learning tasks, to satisfy fairness objectives. We note that, based on the characteristics of a PEF solution from Definition~\ref{def:pef}, such a point is not unique. The set of these solutions is called the Pareto frontier of the learning problem. However, the points in the Pareto frontier cannot dominate each other, and hence, they all can be considered as a solution to the learning problem. With indicating the degrees of preference of fairness measures over the predictive performance of the model, we can find points from specific parts of this set, as it is discussed in Section~\ref{sec:pf}.

With the definition of PEF at hand, the task of learning  a fair classifier reduces to solving the following vector-valued optimization problem:
\begin{equation}\label{eq:mul-fair}
    \bm{w}^\star = \underset{\bm{w}\in \mathcal{W}}{\arg\min}\, \boldsymbol{\mathrm{h}}(\bm{w})\;,
\end{equation}
where $\boldsymbol{\mathrm{h}}:\mathcal{W} \to \mathbb{R}^m$ constitutes the empirical loss and fairness constraints.  Unlike existing methods that try to solve a relaxed version of the above optimization problem, either by relaxing to a single objective constrained optimization~\citep{donini2018empirical} or reducing to a saddle point problem~\citep{agarwal2018reductions}, we aim at directly solving the vector-valued problem.

\subsection{Reduction of known notions of fairness to PEF}
\label{sec:peeo}
To further elaborate on PEF, we now show  how different predefined notions of fairness such as equality of opportunity, equalized odds, and disparate mistreatment~\citep{zafar2017fairness} can be reduced to an instance of (\ref{eq:mul-fair}). Note that, we are not defining new notions of fairness, rather we are showing how we can reduce any notion of fairness to PEF. Here we only show the reduction of equality of opportunity notion to the PEF. More examples for reductions of fairness notions to PEF are deferred to Appendix~\ref{app:peeo}.

\paragraph{Pareto efficient equality of opportunity}
Using Definition~\ref{def:pef}, we introduce a variant of equality of opportunity dubbed as \textit{Pareto efficient equality of opportunity}. For simplicity and adjusting to current state-of-the-art fairness definitions, we consider binary learning case, that is $\mathcal{Y}=\{ -1, +1\}$; however, as we denote later, the proposed algorithm can be simply generalized to multi-class learning. To satisfy equalized opportunity criteria, which is having the same true positive rate among different groups, we translate it as having equal loss on subset of samples of each group with positive labels, {\it i.e.},  $\mathcal{S}_k^+ = \left\{ (\bm{x}_i,a_i,y_i) \in \mathcal{D} \, | \, a_i = s_k, y_i = 1,  1 \leq k \leq c\right\}$. This loss can be defined as:
\begin{equation}\label{eq:pos-loss}
    \mathcal{L}_k^+(\bm{w}) = \frac{1}{|\mathcal{S}_k^+|}\sum_{(\bm{x}_i,y_i)\in\mathcal{S}_k^+} \ell\left(\bm{w}; (\bm{x}_i,y_i)\right),\;\; k \in \{1,\ldots,c\}.
\end{equation}
To achieve the equality of opportunity, we can use this empirical loss instead of the probability of a positive outcome for each group using the classifier, as suggested by other studies~\citep{donini2018empirical,agarwal2018reductions}. In this scenario, to have an equal probability, we try to have equal empirical loss for each group for positive outcome $\mathcal{L}^+_k\left(\bm{w}\right)$. Hence, the fairness objectives  $\mathcal{H}_{i,j}^+$, in this case, reduces to the minimization of pairwise empirical losses between each pair of sensitive groups:
\begin{equation}\label{eq:eo_obj}
    \mathcal{H}_{i,j}^+(\bm{w}) = \phi\left(\mathcal{L}_i^+(\bm{w}) - \mathcal{L}_j^+(\bm{w}) \right), \;\; 1\leq i,j\leq c, i\neq j,
\end{equation}
where $\phi(\cdot):\mathbb{R}\to\mathbb{R}_{+}$ is any penalization function, such as $\phi(z)=|z|$, $\phi(z)=\frac{1}{2}z^2$, or $\phi(z)=e^{-z}$, however, for convergence analysis we will stick to smooth ones, like squared or exponential penalization.
Then, the objective vector for this fairness problem is $\boldsymbol{\mathrm{h}}_{\text{EO}}(\bm{w}) = \big[\mathcal{L}(\bm{w}), \mathcal{H}_{1,2}^+(\bm{w}), \ldots, \mathcal{H}_{c-1,c}^+(\bm{w}) \big]^\top \in \mathbb{R}^m_{+}$, where $m=1 + \binom{c}{2}$. A solution has the property of Pareto efficient equality of opportunity if it belongs to the PEF solution set of the optimization in~(\ref{eq:mul-fair}) using objective vector $\boldsymbol{\mathrm{h}}_{\text{EO}}(\bm{w})$.

\subsection{Learning  PEF Classifiers}\label{sec:eff-opt}
Having multiple, rather contradictory objectives it seems hard to find an optimal solution for (\ref{eq:mul-fair}) that has the best compromises between all the objectives. As mentioned earlier, most previous works reduce the problem into an instance of a constrained optimization problem. In contrast, we take an alternative approach and introduce a bilevel optimization, by which we can find the solution to the vector optimization problem of (\ref{eq:mul-fair}), with a guarantee of convergence to a PEF point. This  approach leads to a significantly efficient method and allows us to find an optimal trade-off between accuracy and fairness constraints. In Section~\ref{sec:pf}, we  extend the proposed idea to construct the Pareto frontier or learn a PEF solution with an apriori preference over objectives.

The proposed algorithm is motivated by the key drawbacks of scalarization methods. In scalarization methods, as one of the simplest methods to tackle vector-valued optimization problems, one aims at reducing the  optimization problem to a single objective one by combining the objectives into a single objective by assigning each objective function a non-negative weight, {\it e.g.}, a convex combination of the objectives $\sum{\alpha_i \mathrm{h}_i(\bm{w})}, \alpha_i \geq 0, \sum{\alpha_i} = 1$. By optimizing over various values of the parameters used to combine the objectives,  one is guaranteed to obtain a solution from the entire Pareto front. While being conceptually appealing, it is difficult to decide the weights apriori or it requires investigating exponentially many parameters~\cite{fukuda2014survey}. It is also  observed evenly distributed set of weights fails to produce an even distribution of Pareto minimizers in the front~\cite{das1998normal}. Finally, it might be impossible to find the entire Pareto front if some of the objectives are nonconvex. Recently, \citet{cortes2020agnostic} introduced ALMO algorithm to find optimal weights for a Pareto efficient point using agnostic learning and a minimax optimization for convex objectives. However, ALMO cannot trace different points from the Pareto frontier of the problem and converges to a single solution on the front.

To tackle the aforementioned issues,  we propose a bi-level programming idea to adaptively learn the combination weights that correspond to a single solution in Pareto front. The main idea stems from the fact that a solution is a first-order Pareto stationary if the convex hull of the individual gradients contains the origin. Specifically, let $\bm{w}^*$ be a PEF solution of the optimization in~(\ref{eq:mul-fair}). We know that there exists a vector $\al^* \in \mathbb{R}^m$, where $\w^* = \arg\min_{\w}\Psi\left(\bm{\mathrm{h}}\left(\bm{w}\right), \al^*\right)\triangleq \sum_{i=1}^{m}\alpha_i^*\mathrm{h}_i(\bm{w})$. Therefore, the key question is how to determine the optimal weights, $\al^* =  \left[\alpha_1^*,\ldots,\alpha_m^*\right]^\top$. To this end, we first note that based on Karush-Kuhn-Tucker (KKT) conditions~\cite{boyd2004convex}, the optimal weights should belong to  the following set:
\begin{equation}\label{eq:kkt}
  \left\{ \sum_{i=1}^{m} \alpha_{i}\bm{\mathrm{g}}_i(\bm{w}^*) = \bm{0},\;
    \alpha_{i} \geq 0, \; \forall\;1\leq i \leq m,\;
    \sum_{i=1}^{m}\alpha_{i} = 1\right\},
\end{equation}
where $\bm{\mathrm{g}}_i(\bm{w}^*) = \nabla \mathrm{h}_i(\bm{w}^*)$ is the normalized gradient vector of the $i$th objective at point $\bm{w}^*$ in the objective vector $\bm{\mathrm{h}}\left(\bm{w}\right)$.  

The above condition is a necessary condition for a point $\bm{w}^*$ to be a PEF point for the optimization problem (i.e., Pareto stationary) but  not sufficient as  some of the functions might be non-convex. 
To find an optimal weight, following the condition in~(\ref{eq:kkt}), we propose the following bilevel optimization problem:
\begin{equation}\label{eq:bilevel}
\begin{aligned}
 \bm{w}^* \in \arg\underset{\bm{w}}{\min} \; & {\;\Psi\left(\bm{\mathrm{h}}\left(\bm{w}\right), \bm{\alpha}^*\left(\bm{w}\right)\right) =  \sum_{i\in[m]}\alpha_i^*\left(\bm{w}\right)\mathrm{h}_i(\bm{w})} \\
 \mbox{s.t.} \quad & \bm{\alpha}^*(\bm{w}) \in \arg\underset{\bm{\alpha} \in \Delta_m}{\min} \Phi\left(\bm{\mathrm{h}}\left(\bm{w}\right),\bm{\alpha}\right) = \left\|\sum_{i=1}^m \alpha_i\bm{\mathrm{g}}_i\left(\bm{w}\right)\right\|_2^2,
\end{aligned}
\end{equation}
where $\Delta_m = \Big\{\boldsymbol{p}=[p_1, \ldots, p_m]^{\top} \in \mathbb{R}^m: p_i \geq 0, \sum_{i=1}^{m}{p_i} = 1 \Big\}$ is the $m$-dimensional simplex.

As can be seen, bilevel optimization consists of two nested optimization problems, namely inner and outer levels. Each level has its objective function, but the solution of the inner level is being used in the optimization of the outer level.  The following theorem shows that the solution of the outer-level optimization problem belongs to the PEF solution set of the optimization in~(\ref{eq:mul-fair}).
\begin{theorem}\label{theo:converge_pareto}
The solution returned by the optimization~(\ref{eq:bilevel}) is a Pareto efficient  solution to the  problem in~(\ref{eq:mul-fair}).
\end{theorem}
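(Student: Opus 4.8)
The plan is to show that any $\bm{w}^*$ produced by the bilevel program~(\ref{eq:bilevel}) satisfies the first‑order Pareto stationarity system~(\ref{eq:kkt}), and then to promote this, under convexity of the objectives, to membership in the Pareto frontier of~(\ref{eq:mul-fair}). First I would analyze the inner problem: fix $\bm{w}$, let $\bm{\alpha}^\star$ minimize $\Phi(\bm{\mathrm{h}}(\bm{w}),\cdot)$ over $\Delta_m$, and set $\bm{d}:=\sum_{i=1}^m\alpha_i^\star\bm{\mathrm{g}}_i(\bm{w})$. Since $\bm{\alpha}\mapsto\big\|\sum_i\alpha_i\bm{\mathrm{g}}_i(\bm{w})\big\|_2^2$ is convex with $i$‑th partial derivative $2\langle\bm{\mathrm{g}}_i(\bm{w}),\bm{d}\rangle$, the variational inequality for a minimizer on the simplex, tested against each vertex $\bm{e}_i$, yields $\langle\bm{\mathrm{g}}_i(\bm{w}),\bm{d}\rangle\ge\|\bm{d}\|_2^2$ for all $i\in[m]$. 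Hence either $\bm{d}=\bm{0}$ --- in which case $(\bm{w},\bm{\alpha}^\star)$ satisfies exactly~(\ref{eq:kkt}), so $\bm{w}$ is Pareto stationary --- or $\bm{d}\neq\bm{0}$ and $-\bm{d}$ is a strict descent direction common to all objectives, $\langle\nabla\mathrm{h}_i(\bm{w}),-\bm{d}\rangle\le-\|\bm{d}\|_2^2<0$ for every $i$. This is the standard minimum‑norm‑element (MGDA) characterization of Pareto stationarity, and it is exactly why the inner level is designed as a squared norm of a convex combination of gradients.

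Next I would exploit outer minimality. Let $\bm{w}^*$ be the returned point, with weights $\bm{\alpha}^*=\bm{\alpha}^*(\bm{w}^*)$ and $\bm{d}^*=\sum_i\alpha_i^*\bm{\mathrm{g}}_i(\bm{w}^*)$, and suppose for contradiction that $\bm{d}^*\neq\bm{0}$. The map $\bm{w}\mapsto\sum_i\alpha_i^*\mathrm{h}_i(\bm{w})$ --- the outer objective with the weights held at $\bm{\alpha}^*$, whose value at $\bm{w}^*$ is $\Psi(\bm{\mathrm{h}}(\bm{w}^*),\bm{\alpha}^*)$ --- is continuously differentiable, using the smoothness of $\phi$ posited for the analysis, with gradient $\bm{d}^*\neq\bm{0}$ at $\bm{w}^*$. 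Therefore for all small $\eta>0$ the point $\bm{w}^*-\eta\bm{d}^*$ stays in $\mathcal{W}$ (for $\bm{w}^*$ interior; otherwise one replaces~(\ref{eq:kkt}) by its normal‑cone form) and, by Step~1, strictly decreases every component $\mathrm{h}_i$, hence strictly decreases $\sum_i\alpha_i^*\mathrm{h}_i(\cdot)$ below $\Psi(\bm{\mathrm{h}}(\bm{w}^*),\bm{\alpha}^*)$, contradicting the outer optimality of $\bm{w}^*$. Thus $\bm{d}^*=\bm{0}$, i.e.\ the pair $(\bm{w}^*,\bm{\alpha}^*)$ satisfies~(\ref{eq:kkt}) and $\bm{w}^*$ is Pareto stationary.

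Finally, since $\bm{d}^*=\bm{0}$ there is no feasible direction that simultaneously decreases all components of $\bm{\mathrm{h}}$ at $\bm{w}^*$, so $\bm{w}^*$ is locally non‑dominated; when $\mathcal{L},\mathrm{h}_1,\ldots,\mathrm{h}_m$ are convex (e.g.\ a convex loss $\ell$ with convex penalty $\phi$) convexity upgrades this to the global statement $\nexists\,\bm{w}\in\mathcal{W}$ with $\bm{\mathrm{h}}(\bm{w})\prec_m\bm{\mathrm{h}}(\bm{w}^*)$ of Definition~\ref{def:pef}, i.e.\ $\bm{w}^*$ is Pareto efficient. The hard part will be Step~2: the genuine outer objective $\Psi(\bm{\mathrm{h}}(\bm{w}),\bm{\alpha}^*(\bm{w}))$ depends on $\bm{w}$ also through the set‑valued, generally nonsmooth inner map $\bm{\alpha}^*(\cdot)$, so I would circumvent differentiating through it by working with the weights frozen at the candidate point --- consistent with how the descent step is actually taken --- which amounts to reading ``the solution of~(\ref{eq:bilevel})'' as a consistent pair $(\bm{w}^*,\bm{\alpha}^*)$ that is inner‑optimal in $\bm{\alpha}$ and outer‑optimal in $\bm{w}$ with $\bm{\alpha}^*$ fixed, i.e.\ a fixed point of the alternating scheme. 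A secondary caveat is that~(\ref{eq:kkt}) is only sufficient for Pareto efficiency under convexity; in the non‑convex regime the sharp conclusion is Pareto stationarity, which the paper also refers to as a PEF point.
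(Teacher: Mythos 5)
Your proposal is correct in what it actually establishes, but it takes a genuinely different route from the paper's proof, and the two do not prove quite the same thing. The paper's argument (Appendix~\ref{app:theo:converge_pareto}) is global and never touches the inner level: it supposes the returned point $\bm{w}_b^*$ is not Pareto efficient, takes a point $\bm{w}_{\text{P}}$ that dominates it, observes that then $\sum_i\alpha_i\mathrm{h}_i(\bm{w}_{\text{P}})<\sum_i\alpha_i\mathrm{h}_i(\bm{w}_b^*)$ for weights on the simplex, and declares this a contradiction with the outer minimality in~(\ref{eq:bilevel}). That argument needs no gradients and no convexity, but it quietly assumes (i) that every non-efficient point is dominated by a Pareto \emph{efficient} one (supplied only under the boundedness hypotheses of the existence theorem), (ii) that the strictly improved coordinate carries positive weight, without which the scalarized inequality need not be strict, and (iii) that the comparison can be made at a common $\bm{\alpha}$ even though the outer objective evaluates each $\bm{w}$ at its own $\bm{\alpha}^*(\bm{w})$. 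Your route instead goes through first-order stationarity: the minimum-norm/MGDA inequality $\langle\bm{\mathrm{g}}_i(\bm{w}),\bm{d}\rangle\ge\|\bm{d}\|_2^2$ (which is exactly the paper's Lemma~\ref{theo:paret}), outer minimality with frozen weights to force $\bm{d}^*=\bm{0}$, i.e.\ condition~(\ref{eq:kkt}), and convexity to upgrade stationarity to efficiency. What you buy is an argument that matches what Algorithm~\ref{alg:fairpareto} actually computes (a fixed point of the alternating scheme) and an honest account of the nonconvex case, where, as the paper itself concedes right after~(\ref{eq:kkt}), only Pareto stationarity follows; what you give up is the unconditional claim in the theorem statement, which your proof recovers only under convexity. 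One loose end you share with the paper: a zero of $\sum_i\alpha_i\nabla\mathrm{h}_i$ with $\bm{\alpha}\in\Delta_m$, equivalently a minimizer of the scalarization with possibly vanishing weights, guarantees only \emph{weak} Pareto efficiency unless all $\alpha_i>0$; both your final convexity step and the paper's strict inequality elide this point.
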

\begin{proof}
For the proof please refer to Appendix~\ref{app:theo:converge_pareto}.
\end{proof}
To solve the optimization in~(\ref{eq:bilevel}) using gradient descent algorithm, for every gradient step we take in the outer level, we have to find the optimal values for $\bm{\alpha}$ based on the optimization of the inner level. The following lemma shows that the gradient of the outer-level objective computed at the optimal solution of the inner-level problem is a descent direction  to all objectives in $\bm{\mathrm{h}}\left(\bm{w}\right)$.
\begin{lemma}[Pareto Descent]\label{theo:paret}
Using the solution of the inner level, $\bm{\alpha^*}$, the gradient of the outer level is either zero or a descent direction for all the objectives of $\bm{\mathrm{h}}\left(\bm{w}\right)$. That is, for $\nabla \Psi \left(\bm{\mathrm{h}}\left(\bm{w}\right),\bm{\alpha}^*\right)$, we have
\begin{equation}\label{eq:paretcond}
    - \left\langle\nabla \Psi \left(\bm{\mathrm{h}}\left(\bm{w}\right),\bm{\alpha}^*\right), \bm{\mathrm{g}}_i(\bm{w}) \right\rangle\leq 0, \;\; \forall i\in\{1,\ldots,m\}\;.
\end{equation}
\end{lemma}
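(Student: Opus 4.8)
The plan is to recognize \eqref{eq:paretcond} as the defining variational inequality of the minimum-norm element of the convex hull of the objective gradients (the ``steepest common descent'' direction from multiple-gradient descent). Concretely, when differentiating the outer objective we hold $\bm{\alpha}^*$ frozen at its inner-level optimal value, so
$\bm{d} \defn \nabla \Psi(\bm{\mathrm{h}}(\bm{w}),\bm{\alpha}^*) = \sum_{i\in[m]}\alpha_i^*\,\bm{\mathrm{g}}_i(\bm{w})$,
which is exactly the convex combination selected by the inner problem $\min_{\bm{\alpha}\in\Delta_m}\Phi(\bm{\mathrm{h}}(\bm{w}),\bm{\alpha}) = \|\sum_i \alpha_i \bm{\mathrm{g}}_i(\bm{w})\|_2^2$. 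The goal is then to show $\langle \bm{d},\bm{\mathrm{g}}_j(\bm{w})\rangle \ge \|\bm{d}\|_2^2 \ge 0$ for every $j\in[m]$, which gives \eqref{eq:paretcond} term by term; the case $\bm{d}=\bm{0}$ is precisely the Pareto-stationary case, since $\sum_i\alpha_i^*\bm{\mathrm{g}}_i(\bm{w})=\bm{0}$ with $\bm{\alpha}^*\in\Delta_m$ is the KKT system \eqref{eq:kkt}.

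First I would fix $j\in[m]$ and test optimality of $\bm{\alpha}^*$ along the feasible segment $\bm{\alpha}(t)\defn(1-t)\bm{\alpha}^* + t\,\bm{e}_j$, $t\in[0,1]$, where $\bm{e}_j$ is the $j$-th vertex of $\Delta_m$; convexity of $\Delta_m$ keeps $\bm{\alpha}(t)\in\Delta_m$. Substituting into the inner objective gives the one-dimensional quadratic $\varphi(t)\defn \Phi(\bm{\mathrm{h}}(\bm{w}),\bm{\alpha}(t)) = \|\bm{d} + t(\bm{\mathrm{g}}_j(\bm{w})-\bm{d})\|_2^2$. Since $\bm{\alpha}^*$ minimizes $\Phi$ over $\Delta_m$ and $\bm{\alpha}(0)=\bm{\alpha}^*$, the function $\varphi$ attains its minimum over $[0,1]$ at $t=0$, so $\varphi'(0)\ge 0$.

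It then remains to compute $\varphi'(0)$: we have $\varphi'(t) = 2\langle \bm{d}+t(\bm{\mathrm{g}}_j(\bm{w})-\bm{d}),\,\bm{\mathrm{g}}_j(\bm{w})-\bm{d}\rangle$, hence $\varphi'(0) = 2\big(\langle \bm{d},\bm{\mathrm{g}}_j(\bm{w})\rangle - \|\bm{d}\|_2^2\big)\ge 0$, i.e. $\langle \bm{d},\bm{\mathrm{g}}_j(\bm{w})\rangle \ge \|\bm{d}\|_2^2$. As $j$ is arbitrary, $-\langle \nabla\Psi(\bm{\mathrm{h}}(\bm{w}),\bm{\alpha}^*),\bm{\mathrm{g}}_i(\bm{w})\rangle \le -\|\bm{d}\|_2^2 \le 0$ for all $i$, which is \eqref{eq:paretcond}; and if $\bm{d}=\bm{0}$ the outer gradient vanishes (so $\bm{w}$ is Pareto stationary by \eqref{eq:kkt}), while if $\bm{d}\neq\bm{0}$ then $\|\bm{d}\|_2^2>0$ and $-\bm{d}$ strictly decreases every objective to first order.

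I do not expect a genuine obstacle here: the core is a two-line optimality-condition argument for the Euclidean projection of the origin onto a polytope. The one point needing care is the bookkeeping around $\nabla\Psi$ — one must state explicitly that the outer-level gradient is taken with $\bm{\alpha}^*$ held fixed at the inner optimum (not differentiated through the map $\bm{\alpha}^*(\bm{w})$), matching the actual descent step, so that $\nabla\Psi(\bm{\mathrm{h}}(\bm{w}),\bm{\alpha}^*)=\sum_i\alpha_i^*\bm{\mathrm{g}}_i(\bm{w})$ exactly. A minor additional remark worth including is that, although the inner minimizer $\bm{\alpha}^*$ need not be unique, the resulting direction $\bm{d}$ is unique (it is the projection of $\bm{0}$ onto the convex hull $\{\sum_i\alpha_i\bm{\mathrm{g}}_i(\bm{w}):\bm{\alpha}\in\Delta_m\}$), so the statement is well posed.
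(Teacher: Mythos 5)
Your proof is correct and uses essentially the same argument as the paper: restrict the inner quadratic $\Phi$ to the segment of $\Delta_m$ joining $\bm{\alpha}^*$ to the vertex $\bm{e}_j$ and apply the first-order optimality condition at the minimizing endpoint to obtain $\langle \bm{d},\bm{\mathrm{g}}_j(\bm{w})\rangle \ge \|\bm{d}\|_2^2$. The only difference is presentational — you argue directly while the paper frames the identical computation as a contradiction — and your added remarks on holding $\bm{\alpha}^*$ fixed in $\nabla\Psi$ and on the uniqueness of $\bm{d}$ are sound clarifications rather than deviations.
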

\begin{proof}
The proof using contradiction is provided in Appendix~\ref{app:lemma}.
\end{proof}
\begin{remark}
Lemma~\ref{theo:paret} implies that in every step the gradient of the outer level is either zero, which means we reach a Pareto stationary point, or a descent direction to all objectives, which can be used to decrease all the objective by moving in the negative direction descent direction.
\end{remark}
Equipped with this descent direction, we can guarantee that all the objectives at every iteration are non-increasing, until we reach a point that this descent direction is $\bm{0}$, which means that we cannot improve all objectives, without hurting others. This indicates that we are in a Pareto stationary point of the problem. Note that from the first-order optimality condition in (\ref{eq:kkt}), there exists a pair of $(\al^*,\w^*)$  such that $\al^* \in \Delta_m$ and $\sum_{i=1}^{m} \al_{i}^*\bm{\mathrm{g}}_i(\w^*) = \bm{0}$.

To solve the optimization in~(\ref{eq:bilevel}), we propose an approximate procedure in Algorithm~\ref{alg:fairpareto}. having $m < d$ in our optimization, the inner level would be a strongly convex function, and hence, we can converge to its global minimum quickly. Using gradient descent, we can converge to an $\epsilon$-accurate solution in $\order\left(\log\frac{1}{\epsilon}\right)$ steps~\cite{bubeck2015convex}. This error will then be propagated to the outer level using $\hat{\bm{\alpha}}$. We will bound this error in Section~\ref{sec:converge} alongside the convergence analysis of the algorithm for convex and non-convex objectives under customary assumptions.

\begin{algorithm2e}[t]
\DontPrintSemicolon
\caption{Bilevel Pareto Descent Optimization (\texttt{PDO})}
\label{alg:fairpareto}
\SetNoFillComment
 \SetKwFunction{PF}{PDO}
 \SetKwProg{Fn}{function}{:}{\KwRet $\bm{w}^{(T)}$}
\textbf{input} $\hat{\bm{\alpha}}^{(0)} \in \mathbb{R}^m$, $\bm{w}^{(0)} \in \mathbb{R}^d$, $\rho$, $\eta$, $K$\\
\Fn{\PF{$\bm{\mathrm{h}}(\bm{w}), \eta, \rho$}}{
\For{$t=0,1,\ldots,T-1$}{
 \For{$k=0,1,\ldots, K-1$}{
  $\bm{\alpha}^{(k+1)} = \bm{\alpha}^{(k)} - \rho\cdot \nabla_{\bm{\alpha}}\Phi\left(\bm{\mathrm{h}}\left(\bm{w}^{(t)}\right),\al \right)\Big\vert_{\al = \al^{(k)}}$ \\ 
   $\bm{\alpha}^{(k+1)} = {\Pi}_{\Delta_m}\left(\bm{\alpha}^{(k+1)}\right)$ \\
}
  \textbf{set} $\hat{\bm{\alpha}}^{(t)} = \bm{\alpha}^{(K)}$ \\
  $\bm{w}^{(t+1)} = \bm{w}^{(t)} - \eta \cdot \nabla \Psi\left(\bm{\mathrm{h}}\left(\bm{w}\right),\hat{\al}^{(t)}\right)\Big\vert_{\w=\w^{(t)}}$
}
}
\end{algorithm2e}

\section{Tracing the Pareto Frontier}\label{sec:pf}
Using Algorithm~\ref{alg:fairpareto} (\texttt{PDO}), we can only guarantee converge to  a single Pareto stationary point  from the entire  set of the fairness-enhanced solution in Pareto frontier. However, from a practical point of view, it is more desirable to construct different solutions from the entire spectrum of  the Pareto frontier to be able to pick a solution with the desired trade-off. For instance, in some cases, the goal might be to keep  the model as accurate as possible while imposing some degree of fairness constraints; in some others, we want to strictly satisfy the fairness constraints, even if it hurts the accuracy of the model; in yet some others, we want a half-point compromises in between of these two extreme points. Thus, it is important to find a set of Pareto efficient points, which is called the Pareto frontier. Having access to the Pareto frontier would help a decision-maker choose from a variety of optimal solutions with different compromises between objectives.

Despite its importance, extracting points on the Pareto frontier is the one of main challenges of multi-objective optimization. 
In Appendix~\ref{app:pf_prem}, prior algorithmic solutions for finding points on the Pareto frontier is discussed.
Moreover, in Appendix~\ref{app:pf_geo}, we explore the geometry of the Pareto frontier and the relationship between optimal weights of objectives on the PEF solution ($\bm{\alpha}\left(\bm{w}^*\right)$) and the Pareto frontier surface. We show that if the Pareto frontier surface is smooth and convex a simple reweighting of the objectives with a good initialization will get us to the desired point on the Pareto frontier. However, these conditions are not met in most cases, even if all the objectives are smooth and convex. Hence, we need a concrete approach that can be applied to any case. Thus, we propose a novel algorithm that can extract points from the Pareto frontier of different objectives with only access to the first-order information. This approach is generic and can be applied to any multiobjective optimization problem.

\subsection{Pareto frontier extraction}\label{sec:pf_extract}
The goal of finding different points from the Pareto frontier and constructing a Pareto frontier set for a vector minimization problem has been investigated in a vast number of studies so far~\citep{hillermeier2001nonlinear}. However, achieving this goal is still a challenging problem that requires an algorithmic approach with tractable computational complexity for different scenarios. In general, we can categorize the main approaches that aim to find points from the Pareto frontier or trace the points on this set, into four groups. There are a number of heuristic approaches as well, however, for this paper we focus on the main analytic ones. These four main categories are:
(1)\textit{ Normal Boundary Intersection (NBI)}, (2) \textit{Geometrical Exploration}, (3) \textit{Weight Perturbation}, and (4) \textit{Preference-based solutions}. The detailed description of each of these approaches can be find in Appendix~\ref{app:pf_prem}. Our approach to finding points from the Pareto frontier belongs to the preference-based methods similar to~\cite{mahapatramulti}, but instead of targeting a single point, we use this approach to trace the points on the Pareto frontier before reaching that specific point. Traversing the points on the Pareto frontier in our approach is similar to the geometrical exploration approaches. Despite this similarity, our approach only uses \textit{first-order} information, which is computationally efficient than those methods, where they use second-order information.

\subsubsection{Preference-based Pareto descent optimization}
We are pursuing to find points from the Pareto frontier using a user predefined preference over different objectives. These preferences can be represented as a vector $\bm{\pi} \in \mathbb{R}^m_+$. In this setting, the ultimate goal is to reach a point from the Pareto frontier, where the ratio between objectives' value is proportional to the ratio of corresponding preference values in this vector. That is for this point we aim to satisfy:
\begin{equation}\label{eq:front_cond}
    \pi_1 \mathrm{h}_1\left(\bm{w}^*_{\mathsf{PB}}\right) = \pi_2 \mathrm{h}_2\left(\bm{w}^*_{\mathsf{PB}}\right) = \ldots = \pi_m \mathrm{h}_m\left(\bm{w}^*_{\mathsf{PB}}\right),
\end{equation}
where $\bm{w}^*_{\mathsf{PB}}$ is a preference-based solution from the Pareto frontier that satisfies this condition\footnote{The assumption is that such a point on the Pareto frontier that satisfies this condition exists. If this point does not exist, in practice, we reach its closest point on the Pareto frontier.}. This condition implies that if we increase the weight for one objective, we require to find a Pareto optimal solution with lower objective value for that specific objective. Besides, the condition in~(\ref{eq:front_cond}) suggests that this point is the intersection of the Pareto frontier and a line in the objective space, determined by:
\begin{equation}\label{eq:front_line}
    \bm{\mathrm{h}} = \left[\mathrm{h}_1,\ldots,\mathrm{h}_m\right] = \bm{p} \circ \left(c\mathbf{1}_m\right),
\end{equation}
where $\bm{p} = \left[1/\pi_1,\ldots,1/\pi_m\right]$, $\mathbf{1}_m$ is an all-ones vector with size $m$, $c \in \mathbb{R}$ is an independent variable, and $\bm{a}\circ \bm{b}$ shows the Hadamard or element-wise multiplication of two vectors $\bm{a}$ and $\bm{b}$. We call this line the ``preference line''. Using the following simple example for a bi-objective problem ($m=2$), this line and its intersection point for an arbitrary $\bm{\pi}$ and $c$ is depicted in Figure~\ref{fig:pref_line}. This line intercepts the origin and $\bm{p}$.

\begin{exmp}~\label{ex:pf}
For the sake of exposition for generating Pareto frontier, we use a simple and classic example~\citep{peitz2018gradient}, where we have a bi-objective optimization with two objectives as:
\begin{equation}\label{eq:pf_ex}
    \mathrm{h}_1\left(\bm{w}\right) = 1 - e^{-\frac{\llVert\bm{w}-\bm{\nu}\rrVert^2}{s^2}},\quad \mathrm{h}_2\left(\bm{w}\right) = 1 - e^{-\frac{\llVert\bm{w}+\bm{\nu}\rrVert^2}{s^2}},
\end{equation}
where $\bm{\nu}\in\mathbb{R}^d$ and $s\in\mathbb{R}$ are the mean and variance parameters. For instance, the objective values for $\bm{w}\in\mathbb{R}^2$, $\bm{\nu}=\mathbf{1}_2$, and $s=1.5$ are depicted in Figure~\ref{fig:exp:pf}.
\end{exmp}

\begin{figure}
    \centering
    \begin{minipage}{0.31\textwidth}
    \centering
    \includegraphics[width=\textwidth]{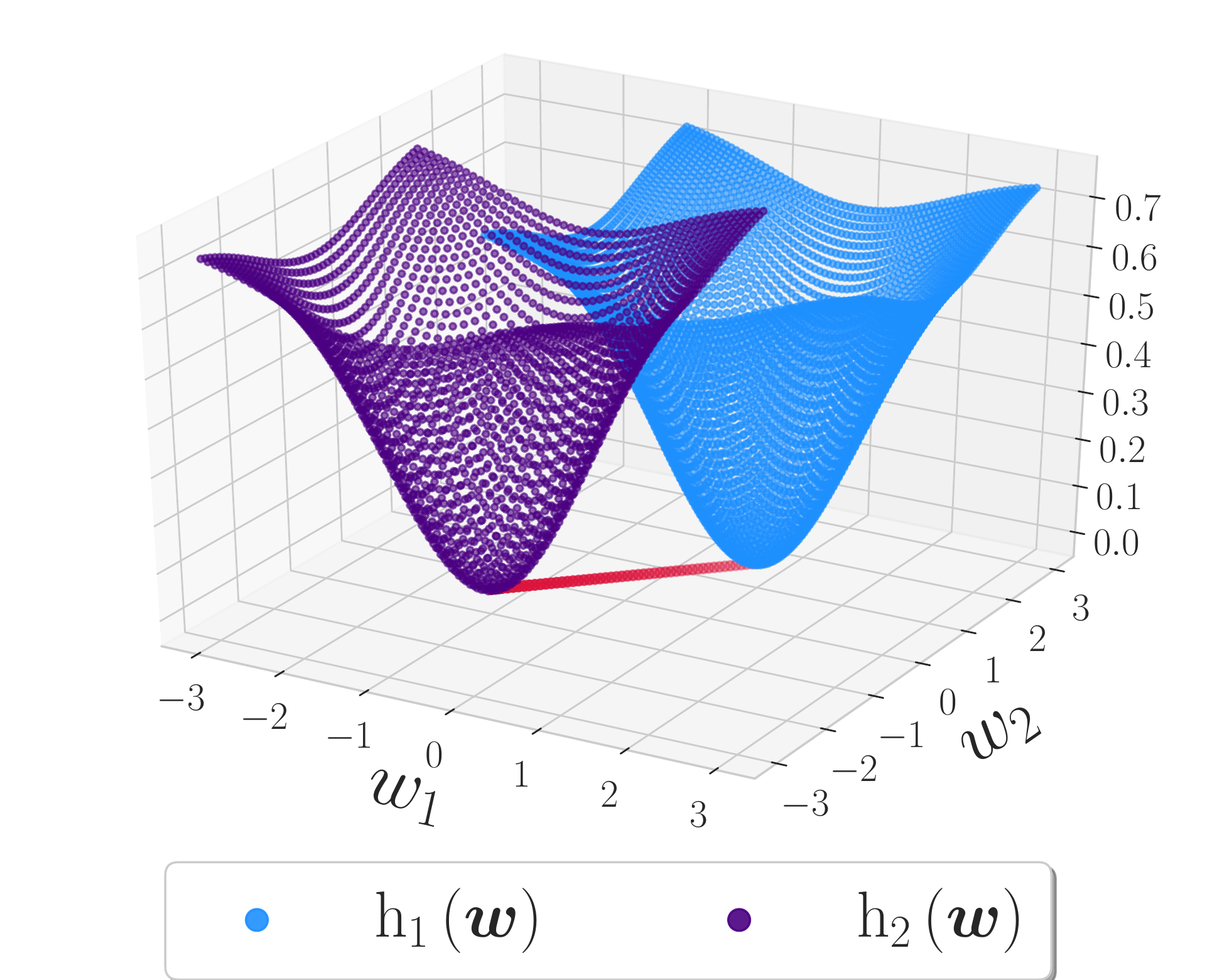}\vspace{0.15cm}
    \caption{Example objectives, defined in~(\ref{eq:pf_ex}), in the parameter space with $\bm{w}\in\mathbb{R}^2$, $\bm{\nu}=\mathbf{1}_2$, and $s=1.5$. The red line is the projection of the Pareto frontier from the objective space on the parameter space.}
    \label{fig:exp:pf}
    \end{minipage}
    \hspace{0.2cm}
    \begin{minipage}{0.64\textwidth}
    \centering
    \includegraphics[width=0.86\textwidth]{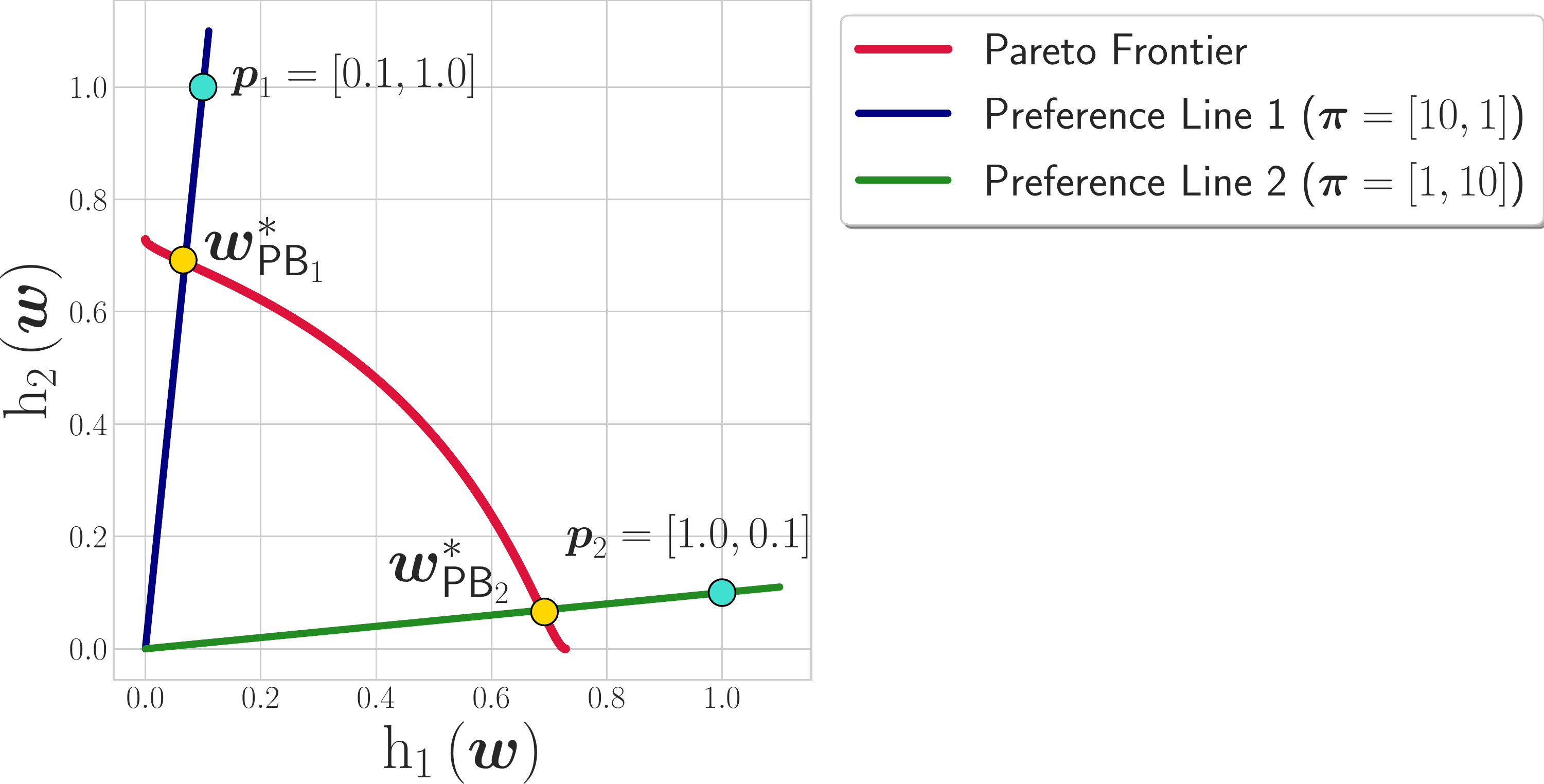}
    \caption{An arbitrary Pareto frontier with two different preference lines parameterized by their $\bm{\pi}$ or $\bm{p}$ vectors. The Pareto frontier is from the bi-objective problem depicted in Figure~\ref{fig:exp:pf}. The desired solution for each line is on the intersection of that preference line and the Pareto frontier, which are $\bm{w}_{\mathsf{PB}_1}^*$ and $\bm{w}_{\mathsf{PB}_2}^*$ in the figure. It can be inferred that when we increase one objective's preference value, the desired solution is getting closer to that objective's minimizer.}
    \label{fig:pref_line}
    \end{minipage}
\end{figure}

To converge to the desired point $\bm{w}_{\mathsf{PB}}^*$, we should define a new objective that measures how the distribution of objective values on each solution point are deviating from the condition in~(\ref{eq:front_cond}). We can project the vector $\bm{\pi} \circ \bm{\mathrm{h}}$ into a $\Delta_m$ simplex, for each arbitrary solution point $\bm{w}$, using a Softmax function:
\begin{equation}\label{eq:sigmoid}
    \sigma_i\left(\bm{w},\bm{\pi}\right) = \frac{e^{\pi_i \mathrm{h}_i\left(\w\right)}}{\sum_{j\in[m]}e^{\pi_j \mathrm{h}_j\left(\w\right)}},\;\; \forall{i} \in [m],
\end{equation}
then the $\sigma_i\left(\bm{w}\right)$ values can be considered as a probability distribution. Now, the condition in~(\ref{eq:front_cond}) reduces to:
\begin{equation}\label{eq:uniform}
    \sigma_i\left(\bm{w}^*_{\mathsf{PB}},\bm{\pi}\right) = \frac{1}{m},\;\; \forall{i} \in [m],
\end{equation}
which is the uniform distribution. Thus, the best choice for the objective function to measure the discrepancy between probability values in~(\ref{eq:sigmoid}) and~(\ref{eq:uniform}) seems to be KL-divergence between these two distributions:
\begin{align}\label{eq:kl-loss}
    \mathrm{h}_{\mathsf{KL}}\left(\bm{w},\bm{\pi}\right) & = \mathsf{KL}\left(\bm{\sigma}\left(\bm{w},\bm{\pi}\right) \| \bm{\sigma}\left(\bm{w}^*_{\mathsf{PB}},\bm{\pi}\right)\right) \nonumber \\ 
    & = \sum_{i\in[m]} \sigma_i\left(\bm{w},\bm{\pi}\right) \log\left(\frac{\sigma_i\left(\bm{w},\bm{\pi}\right)}{ \sigma_i\left(\bm{w}^*_{\mathsf{PB}},\bm{\pi}\right)}\right) \nonumber\\
    & = \sum_{i\in[m]} \sigma_i\left(\bm{w},\bm{\pi}\right) \log\left(m \sigma_i\left(\bm{w},\bm{\pi}\right)\right)\;.
\end{align}
We want to minimize this objective, which indicates that we want to minimize the entropy of $\bm{\sigma}\left(\bm{w},\bm{\pi}\right)$ and maximize the cross entropy between $\bm{\sigma}\left(\bm{w},\bm{\pi}\right)$ and $\bm{\sigma}\left(\bm{w}^*_{\mathsf{PB}},\bm{\pi}\right)$. This objective has its minimum value of zero on all the points on the line defined in~(\ref{eq:front_line}).

The way to minimize the objective $\mathrm{h}_{\mathsf{KL}}$ in~(\ref{eq:kl-loss}), in addition to the main objective vector $\bm{\mathrm{h}}$, is to add it as another objective to the objective vector and have an $m+1$-dimensional vector to minimize. Using this approach we make sure that at each step we find a direction that is descent for all the objectives in $\bm{\mathrm{h}}$ as well as $\mathrm{h}_\mathsf{KL}$. Hence, in general, we consider the following objective vector as the main vector to minimize for our problem:
\begin{equation}
    \bm{\mathrm{h}}_{\mathsf{PB}} \left(\bm{w},\bm{\pi}\right) = \left[\mathrm{h}_1\left(\bm{w}\right), \ldots, \mathrm{h}_m\left(\bm{w}\right),  \mathrm{h}_{\mathsf{KL}}\left(\bm{w},\bm{\pi}\right)\right] \in \mathbb{R}^{m+1},
\end{equation}
where $\mathsf{PB}$ in $ \bm{\mathrm{h}}_{\mathsf{PB}}$ stands for a preference-based objective with the preference vector indicated by $\bm{\pi}$. To minimize this vector, we need to compute the gradient of the $\mathrm{h}_{\mathsf{KL}}$ at each iteration, similar to other objectives indicated in~(\ref{eq:bilevel}). The following proposition computes this gradient for this problem.
\begin{proposition}\label{prop:grad_kl}
The gradient vector of the objective  $\mathrm{h}_{\mathsf{KL}}\left(\bm{w},\bm{\pi}\right)$ with respect to any arbitrary solution point $\bm{w} \in \mathbb{R}^d$, is denoted by $\bm{\mathrm{g}}_{\mathsf{KL}}\left(\bm{w},\bm{\pi}\right) \in \mathbb{R}^d$ has the following form:
\begin{align}\label{eq:kl_grad}
    \bm{\mathrm{g}}_{\mathsf{KL}}\left(\bm{w},\bm{\pi}\right)  &= \nabla_{\bm{w}} \mathrm{h}_{\mathsf{KL}}\left(\bm{w},\bm{\pi}\right) = \sum_{i\in[m]} \lambda_i \bm{\mathrm{g}}_i \\ \nonumber
    \mbox{s.t.} \quad &\;\;\;  \lambda_i = \pi_i \sigma_i\left(\bm{w},\bm{\pi}\right) \left( \log\left(m\sigma_i\left(\bm{w},\bm{\pi}\right)\right) - \mathrm{h}_{\mathsf{KL}}\left(\bm{w},\bm{\pi}\right)\right),
\end{align}
where $\bm{g}_i, i\in[m]$ are the gradients of objectives in the objective vector $\bm{\mathrm{h}}$.
\end{proposition}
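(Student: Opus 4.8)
The plan is to compute $\nabla_{\bm{w}} \mathrm{h}_{\mathsf{KL}}(\bm{w},\bm{\pi})$ directly by the chain rule, pushing the derivative through the three nested layers: the KL expression as a function of the probabilities $\sigma_i$, the softmax $\sigma_i$ as a function of the scores $\pi_i \mathrm{h}_i(\bm{w})$, and finally the scores as functions of $\bm{w}$. First I would recall from (\ref{eq:kl-loss}) that, since the target distribution is uniform, $\mathrm{h}_{\mathsf{KL}}(\bm{w},\bm{\pi}) = \sum_{i\in[m]} \sigma_i \log(m\sigma_i) = \log m + \sum_{i} \sigma_i \log \sigma_i$, i.e. the KL to uniform is $\log m$ minus the Shannon entropy of $\bm{\sigma}$. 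So it suffices to differentiate $\sum_i \sigma_i \log \sigma_i$, which requires only $\partial \sigma_i / \partial \bm{w}$.

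Next I would record the standard softmax Jacobian. Writing $z_j = \pi_j \mathrm{h}_j(\bm{w})$ so that $\sigma_i = e^{z_i}/\sum_j e^{z_j}$, one has $\partial \sigma_i / \partial z_j = \sigma_i(\delta_{ij} - \sigma_j)$, and by the chain rule $\partial z_j / \partial \bm{w} = \pi_j \bm{\mathrm{g}}_j$ where $\bm{\mathrm{g}}_j = \nabla \mathrm{h}_j(\bm{w})$. Therefore
\[
\nabla_{\bm{w}} \sigma_i = \sum_{j\in[m]} \sigma_i(\delta_{ij}-\sigma_j)\,\pi_j \bm{\mathrm{g}}_j = \sigma_i\Big(\pi_i \bm{\mathrm{g}}_i - \sum_{j} \sigma_j \pi_j \bm{\mathrm{g}}_j\Big).
\]
Then I would substitute this into $\nabla_{\bm{w}} \mathrm{h}_{\mathsf{KL}} = \sum_i (\log(m\sigma_i) + 1)\nabla_{\bm{w}}\sigma_i$ (the derivative of $\sigma_i\log(m\sigma_i)$ with respect to $\sigma_i$ being $\log(m\sigma_i)+1$) and collect terms. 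The key simplification is that $\sum_i \nabla_{\bm{w}}\sigma_i = \nabla_{\bm{w}}(\sum_i \sigma_i) = \nabla_{\bm{w}} 1 = \bm{0}$, so the constant $+1$ contributes nothing and we may replace $\log(m\sigma_i)$ by $\log(m\sigma_i) - c$ for any constant $c$; choosing $c = \mathrm{h}_{\mathsf{KL}}(\bm{w},\bm{\pi})$ matches the stated form. Expanding, the coefficient of $\bm{\mathrm{g}}_i$ after grouping the "diagonal" and "mixing" contributions collapses to $\lambda_i = \pi_i \sigma_i\big(\log(m\sigma_i) - \mathrm{h}_{\mathsf{KL}}\big)$, again using $\sum_i \sigma_i(\log(m\sigma_i)-\mathrm{h}_{\mathsf{KL}}) = \mathrm{h}_{\mathsf{KL}} - \mathrm{h}_{\mathsf{KL}} = 0$ to cancel the cross term uniformly.

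The main obstacle is purely bookkeeping: correctly tracking which index is being summed over in the softmax Jacobian term and verifying that the two places where the identity $\sum_i \sigma_i(\log(m\sigma_i)-\mathrm{h}_{\mathsf{KL}})=0$ is invoked really do eliminate the $\sum_j \sigma_j\pi_j\bm{\mathrm{g}}_j$ cross term, leaving a clean single sum $\sum_i \lambda_i \bm{\mathrm{g}}_i$. There is no analytic difficulty — no convexity, no optimality conditions are needed — so the proof is a short, self-contained chain-rule computation, and I would present it as such, flagging the entropy reformulation and the $\sum_i \nabla\sigma_i = \bm{0}$ identity as the two observations that make the algebra painless.
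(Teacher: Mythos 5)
Your computation is correct and takes essentially the same route as the paper's proof: a direct chain-rule calculation through the softmax, with the identity $\sum_i \sigma_i\left(\log\left(m\sigma_i\right)+1\right) = \mathrm{h}_{\mathsf{KL}}+1$ (equivalently your $\sum_i \sigma_i\left(\log\left(m\sigma_i\right)-\mathrm{h}_{\mathsf{KL}}\right)=0$) doing the collapsing. The only cosmetic difference is that the paper differentiates $\mathrm{h}_{\mathsf{KL}}$ with respect to the scores $z_j=\pi_j\mathrm{h}_j(\bm{w})$ first and then chains to $\bm{w}$, whereas you differentiate the $\sigma_i$ with respect to $\bm{w}$ directly and use $\sum_i\nabla_{\bm{w}}\sigma_i=\bm{0}$ to absorb the constant; both arrive at $\lambda_i=\pi_i\sigma_i\left(\log\left(m\sigma_i\right)-\mathrm{h}_{\mathsf{KL}}\right)$ (and your Jacobian $\sigma_i\left(\delta_{ij}-\sigma_j\right)$ is the correctly signed version of the paper's displayed formula, whose off-diagonal entry is missing a minus sign that the paper's subsequent algebra nonetheless uses).
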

\begin{proof}
The proof is deferred to Appendix~\ref{app:grad_kl}.
\end{proof}

The form of the gradient direction for $\mathrm{h}_\mathsf{KL}$ indicated by Proposition~\ref{prop:grad_kl} shows that similar to the descent direction of the main objective vector $\bm{\mathrm{h}}$, this gradient is a linear combination of objective's gradients. We will use this gradient direction in our algorithm to minimize the loss for the preference-based objective.

In general, minimizing the preference-based objective vector $\bm{\mathrm{h}}_{\mathsf{PB}}$ would converge to the desired point $\bm{w}^*_{\mathsf{PB}}$, however, there are two cases depending on the position of the initial point in the problem that might not be able to converge to that desired solution. These two cases happen when (I) the initial solution is too close to the preference line defined in~(\ref{eq:front_line}); or (II) it is far away from the desired solution $\bm{w}^*_{\mathsf{PB}}$. The reason that minimizing the vector $\bm{\mathrm{h}}_{\mathsf{PB}}$ might not converge in these two cases is that we reach a local minimizer of either of its two objectives (a point from the preference line in~(\ref{eq:front_line}) for $\mathrm{h}_{\mathsf{KL}}$ or a point from the Pareto frontier of $\bm{\mathrm{h}}$) before the desired point $\bm{w}^*_{\mathsf{PB}}$. In either of these cases, the resulting descent direction would be zero and it cannot escape that point. These two situations are depicted in Figure~\ref{fig:pf_cases_un}, where the model converges to either $\bm{w}_{\text{I}}$ or $\bm{w}_{\text{II}}$ for cases I and II, respectively. Hence, we need to develop an algorithm to address such cases. It is worth mentioning that, even in these two cases the algorithm is converging to a stationary point of the preference-based objective
$\bm{\mathrm{h}}_{\mathsf{PB}}$, which is in line with the theoretical results. However, since the goal is to reach a desired point on the Pareto frontier of $\bm{\mathrm{h}}$, these stationary points are not satisfying that goal.

\begin{figure}[t]
    \centering
    \subfigure[]{
    \includegraphics[width=0.35\linewidth]{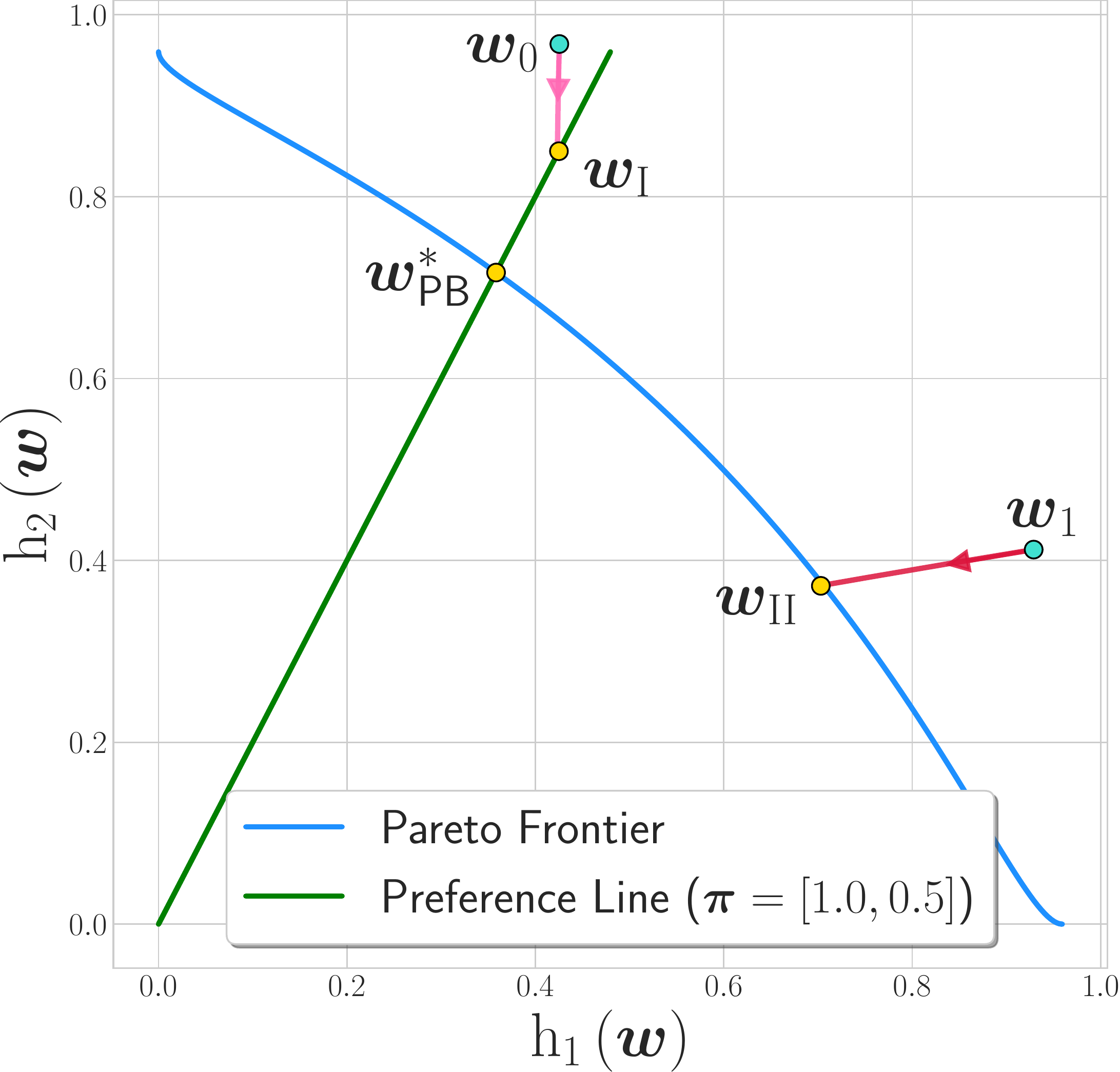}
    \label{fig:pf_cases_un}
    }
    \subfigure[]{
    \includegraphics[width=0.35\linewidth]{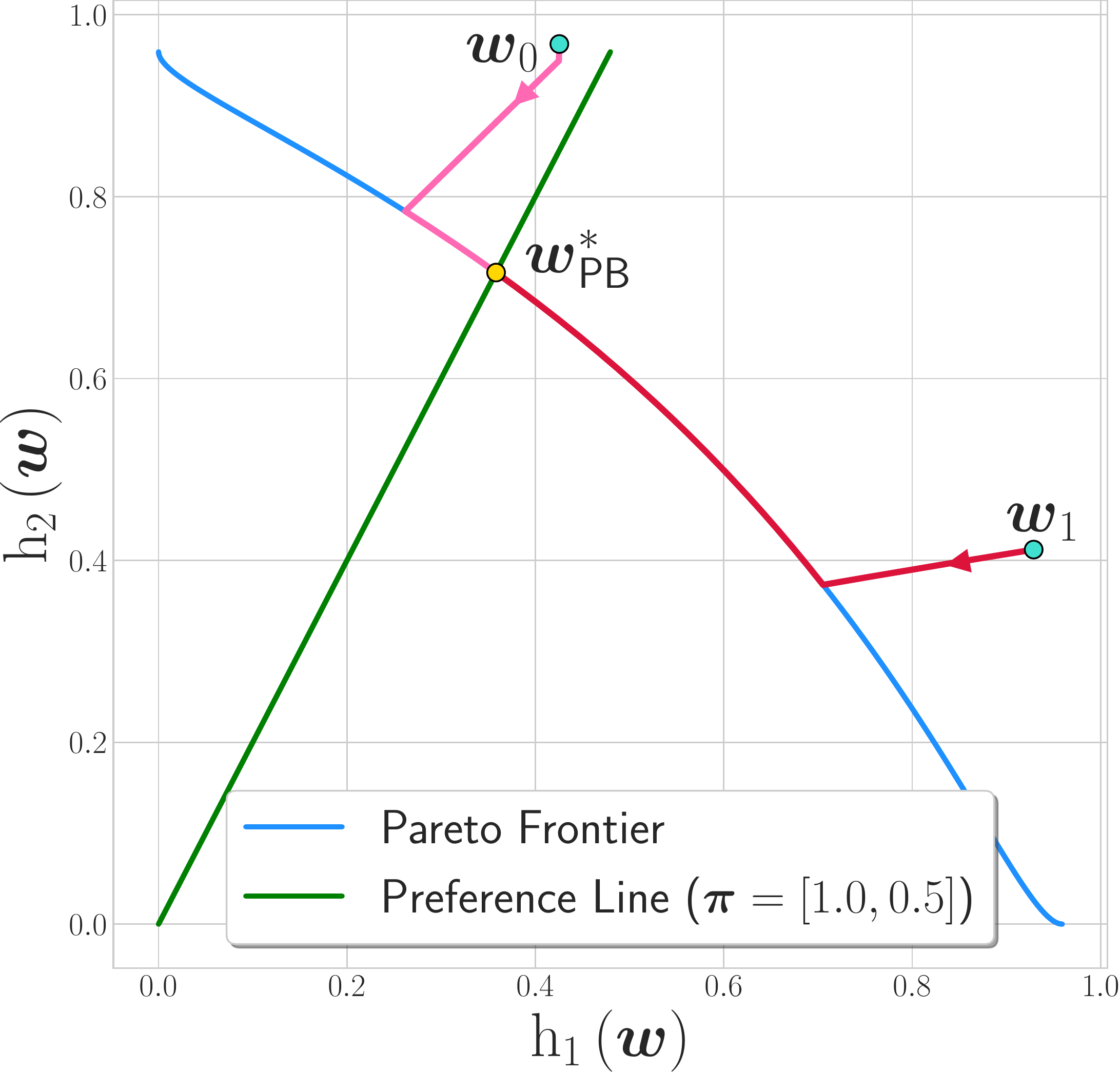}
    \label{fig:pf_cases_sol}
    }
\caption{Using objectives in Example~\ref{ex:pf} with $\bm{w}\in \mathbb{R}^{20}$ to (a) demonstrate two challenging cases, which cannot converge to the desired point $\bm{w}_{\mathsf{PB}}^*$ by only using $\bm{\mathrm{h}}_{\mathsf{PB}}$ as the objective; and (b) how \texttt{PB-PDO} by adaptively choosing objectives can achieve this goal. In (a) we converge to points $\bm{w}_{\text{I}}$ and $\bm{w}_{\text{II}}$ for cases I and II, respectively, which are stationary points of the $\bm{\mathrm{h}}_{\mathsf{PB}}$ but they are not the desired point.}\label{fig:pf_cases}
\end{figure}

\begin{algorithm2e}[t]
\DontPrintSemicolon
\caption{Preference-Based Pareto Descent Optimization (\texttt{PB-PDO})}
\label{alg:PBpareto}
\SetNoFillComment
 \SetKwFunction{PF}{PB-PDO}
 \SetKwProg{Fn}{function}{:}{\KwRet $\bm{w}^{(T)}$}
 \SetKwFunction{DD}{DescentDirection}
 \SetKwProg{Sb}{function}{:}{\KwRet $\nabla \Psi\left(\bm{\mathrm{f}}\left(\bm{w}^{(t)}\right),\hat{\bm{\alpha}}^{(t)}\right)$}
\textbf{input} $\hat{\bm{\alpha}}^{(0)} \in \mathbb{R}^m$, $\bm{w}^{(0)} \in \mathbb{R}^d$, $\rho$, $\eta$, $T$, $K$, $\bm{\pi}$, $\epsilon_1$, $\epsilon_2$\\
\Fn{\PF{$\bm{\mathrm{h}}(\bm{w}), \bm{\pi}, \eta, \rho, \epsilon_1, \epsilon_2$}}{
\For{$t=0,1,\ldots,T-1$}{
  Compute $\bm{\mathrm{g}}_{\mathsf{KL}}\left(\bm{w}^{(t)},\bm{\pi}\right)$ using (\ref{eq:kl_grad})\\ 
  \uIf(\tcp*[f]{Case (I)}){$\llVert\bm{\mathrm{g}}_\mathsf{KL}\left(\bm{w},\bm{\pi}\right)\rrVert \leq \epsilon_1$}{
  $\bm{\mathrm{d}}^{(t)}$ = \DD{$\bm{\mathrm{h}}$, $\bm{w}^{(t)}$, $\rho$} 
  }\Else{
  $\bm{\mathrm{d}}^{(t)}$ = \DD{$\bm{\mathrm{h}_\mathsf{PB}}$, $\bm{w}^{(t)}$, $\rho$}
  }
  \If(\tcp*[f]{Case (II)}){$\frac{\llVert \bm{\mathrm{d}}^{(t)}\rrVert}{\llVert \bm{\mathrm{g}}_\mathsf{KL}\left(\bm{w},\bm{\pi}\right)\rrVert} \leq \epsilon_2$}{
   $\bm{\mathrm{d}}^{(t)} = \bm{\mathrm{g}}_{\mathsf{KL}}\left(\bm{w}^{(t)},\bm{\pi}\right)$
  }
  $\bm{w}^{(t+1)} = \bm{w}^{(t)} - \eta \cdot \bm{\mathrm{d}}^{(t)}$\\
}
}
\Sb{\DD{$\bm{\mathrm{f}}$, $\bm{w}^{(t)}$, $\rho$}}{
\For{$k=0,1,\ldots, K-1$}{
  $\bm{\alpha}^{(k+1)} = \bm{\alpha}^{(k)} - \rho\cdot \nabla_{\bm{\alpha}}\Phi\left(\bm{\mathrm{f}}\left(\bm{w}^{(t)}\right),\al\right)\Big\vert_{\al = \al^{(k)}}$ \\ 
   $\bm{\alpha}^{(k+1)} = {\Pi}_{\Delta_m}\left(\bm{\alpha}^{(k+1)}\right)$ \\
}
  \textbf{set} $\hat{\bm{\alpha}}^{(t)} = \bm{\alpha}^{(K)}$ \\
}
\end{algorithm2e}
 
We design Algorithm~\ref{alg:PBpareto} that introduces Preference-Based Pareto Descent Optimization (\texttt{PB-PDO}). In this algorithm, the user determines a preference vector $\bm{\pi}$ over the set of different objectives we have, and the algorithm finds the best $\bm{w}^*_{\mathsf{PB}}$ on the Pareto frontier of $\bm{\mathrm{h}}$ that satisfies the condition in~(\ref{eq:front_cond}). To avoid the two cases described before we need to adaptively define our set of objectives for finding the descent direction at each iteration. In general, our algorithm at each iteration finds a descent direction $\bm{\mathrm{d}} = \nabla \Psi\left(\bm{\mathrm{h}}_{\mathsf{PB}}\left(\bm{w},\bm{\pi}\right),\bm{\alpha}^*\left(\w\right)\right)$, where $\bm{\alpha}^*\left(\w\right)$ is determined similar to~(\ref{eq:bilevel}), for the objective vector $\bm{\mathrm{h}}_{\mathsf{PB}}$. If either of the following conditions met, which are indicators of the aforementioned cases, we will tune our objective vector and accordingly its descent direction, to escape those points toward the desired point:
\begin{enumerate}[(I)]
    \item The first case happens when we reach a point on the preference line in~(\ref{eq:front_line}), \textit{other} than the desired point on the Pareto frontier $\bm{w}^*_{\mathsf{PB}}$. The reason is that the initial point is close to the preference line (see Figure~\ref{fig:pf_cases_un} and the trajectory from $\bm{w}_0$ toward $\bm{w}_\text{I}$). For this point we have $\bm{\mathrm{g}}_\mathsf{KL} = \bm{0}$, however, this point is not on the Pareto Frontier of the main objective vector $\bm{\mathrm{h}}$. It is worth noting that this point is a stationary point of $\bm{\mathrm{h}}_\mathsf{PB}$, however, it is not a stationary point of $\bm{\mathrm{h}}$. Thus, whenever we have:
    \begin{equation}
        \llVert \bm{\mathrm{g}}_\mathsf{KL}\left(\bm{w},\bm{\pi}\right)\rrVert \leq \epsilon_1\;,
    \end{equation}
    we use the main objective vector $\bm{\mathrm{h}}$ instead of the preference-based one  $\bm{\mathrm{h}}_\mathsf{PB}$, to find the descent direction. This would help us to reach a point from the Pareto frontier of $\bm{\mathrm{h}}$ before reaching the preference line. In this condition, $\epsilon_1$ is a small arbitrary threshold. See Figure~\ref{fig:pf_cases_sol} and the trajectory from $\bm{w}_0$ toward $\bm{w}_{\mathsf{PB}}^*$, where using this condition would not allow the model to reach a point on the preference line before touching the Pareto frontier of $\bm{\mathrm{h}}$.
    \item The second case happens when we reach a point from the Pareto frontier, \textit{before} the desired point $\bm{w}^*_{\mathsf{PB}}$ that satisfies the condition in~(\ref{eq:front_cond}). This scenario occurs when the initial solution is far away from the desired point, and hence, using descent directions would get us to a point from the Pareto frontier sooner than that desired point (see Figure~\ref{fig:pf_cases_un} and the trajectory from $\bm{w}_1$ toward $\bm{w}_\text{II}$). In this case, when we reach a point from the Pareto frontier, the $\ell_2$-norm of the descent direction would be almost zero since the point is a stationary point of the main objective vector $\bm{\mathrm{h}}$. However, since the $\mathrm{h}_{\mathsf{KL}}$ is not minimized yet, the $\ell_2$-norm of its gradient  $\bm{g}_\mathsf{KL}$ is large. Therefore, the case happens when we have:
    \begin{equation}\label{eq:cond1}
        \frac{\llVert \bm{\mathrm{d}}\rrVert}{\llVert \bm{\mathrm{g}}_\mathsf{KL}\left(\bm{w},\bm{\pi}\right)\rrVert} \leq \epsilon_2\;,
    \end{equation}
    where $\epsilon_2$ is a small arbitrary threshold. Whenever the condition in~(\ref{eq:cond1}) is met, which means that we reach such a point, instead of using the descent direction from one of the objective vectors ($\bm{\mathrm{h}}$, or $\bm{\mathrm{h}}_\mathsf{PB}$), we only use $\bm{g}_\mathsf{KL}\left(\bm{w},\bm{\pi}\right)$. This means that we are only minimizing $\mathrm{h}_\mathsf{KL}$. This case, in fact, is beneficial for tracing the Pareto frontier, which we elaborate on later. In Figure~\ref{fig:pf_cases_sol}, the trajectory from $\bm{w}_1$ toward $\bm{w}_{\mathsf{PB}}^*$ shows that using this approach we can trace points from the Pareto frontier of the main objective $\bm{\mathrm{h}}$.
\end{enumerate}
The full procedure is defined in Algorithm~\ref{alg:PBpareto}. It is worth mentioning that it is crucial to have $\mathrm{h}_\mathsf{KL}$ in our objective vector to gets us as close as possible to the preference line and speed up the convergence. In practice, if we do not include it and the initial point is far away from the preference line we might not be able to converge to the desired point. Now, the following remarks regarding the novelties of the proposed algorithm (\texttt{PB-PDO}) over previous methods and how to trace points from the Pareto frontier are in order.
\begin{remark}
The proposed algorithm (\texttt{PB-PDO}), despite some of the previous approaches, does not require the minimizers of individual objectives. Also, since the preference vector is not bounded to a simplex, the ratio between preference values of different objectives could be any arbitrary positive number. Hence, this approach can be applied to objectives with different scales, which was a limitation for some other approaches. Finally, this algorithm only uses first-order information to reach the desired point and traverse the Pareto frontier, which is a huge computational advantage over similar counterparts.
\end{remark}
\begin{remark}
Using the proposed algorithm (\texttt{PB-PDO}), we can traverse points on the Pareto frontier, which is missing from other preference-based approaches such as~\citep{lin2019pareto,mahapatramulti}. The closest approach to ours~\citep{mahapatramulti} uses ascent directions in some iterations to reach the desired point, and hence, needs to set some constraints to avoid divergence. Their approach makes sure that does not touch the Pareto frontier before reaching that desired point. However, our approach, using only descent directions, not only converges to the desired point but also able to trace points on the Pareto frontier. The comparison between these two algorithms as well as Algorithm~\ref{alg:fairpareto} (\texttt{PDO}) is depicted in Figure~\ref{fig:pf_compare}.
\end{remark}

\begin{figure}[t]
    \centering
    \subfigure[\texttt{PDO}]{
    \includegraphics[width=0.3\linewidth]{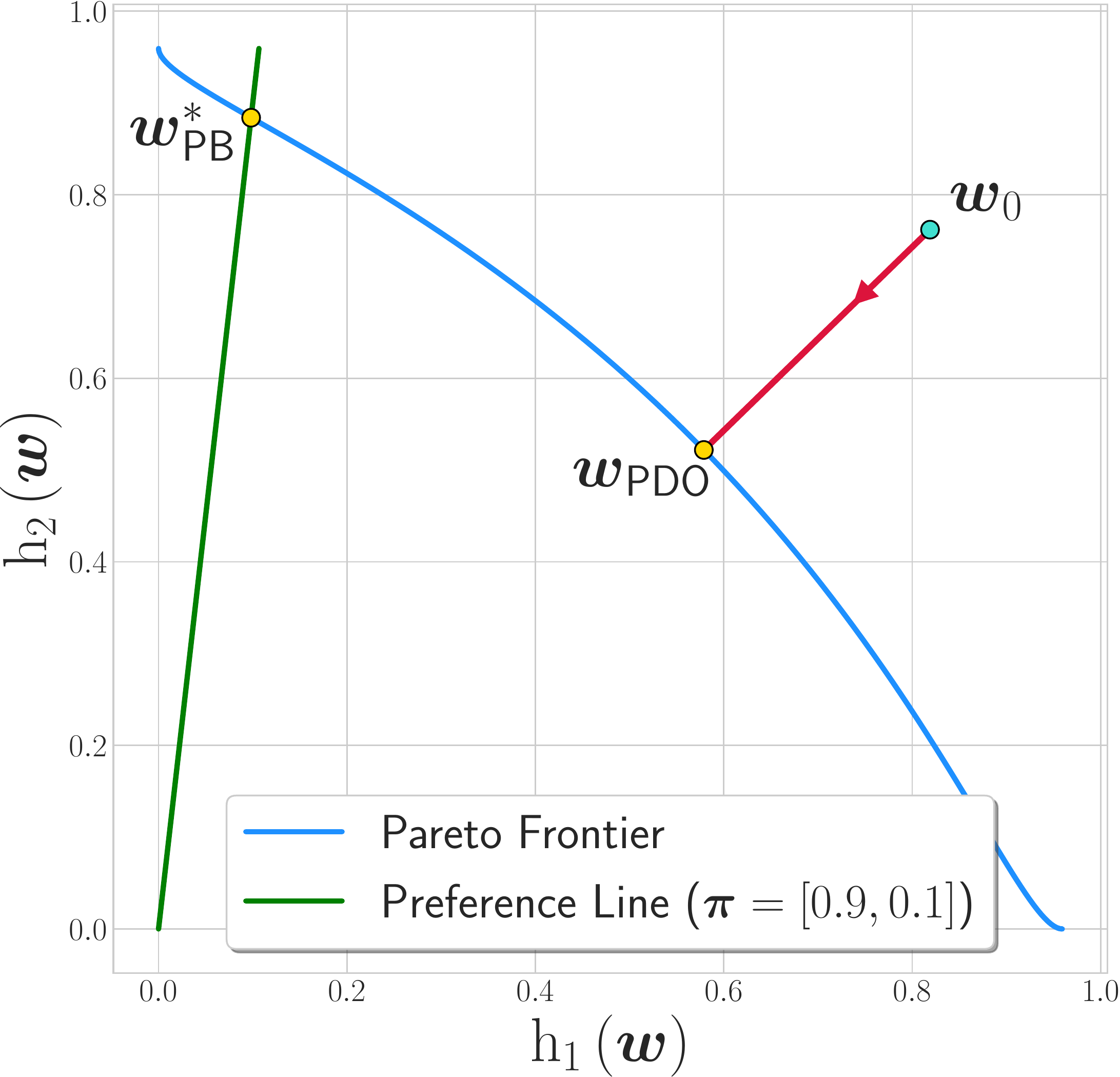}
    \label{fig:pf_compare_PDO}
    }
    \subfigure[Preference-based~\citep{mahapatramulti}]{
    \includegraphics[width=0.3\linewidth]{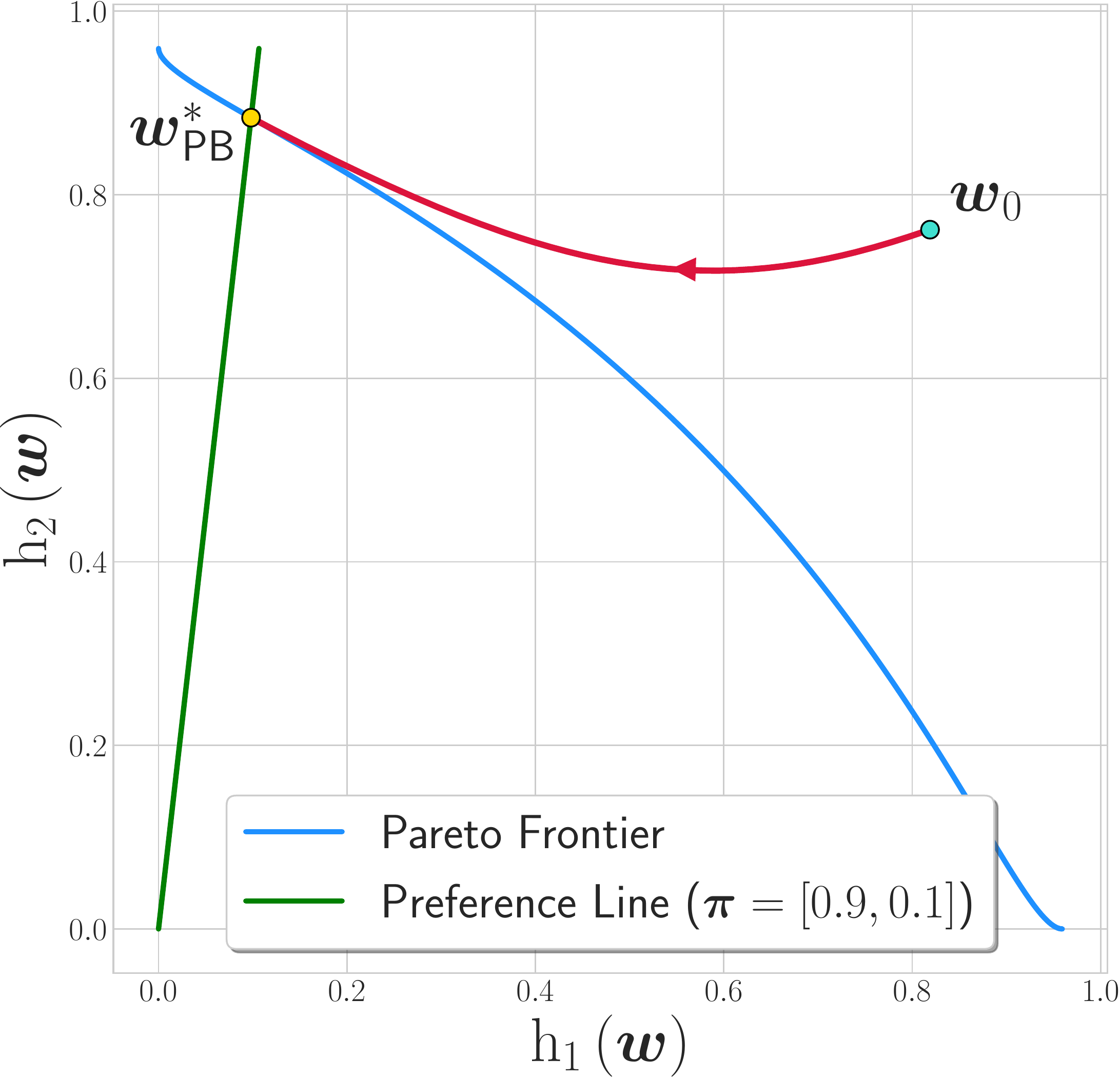}
    \label{fig:pf_compare_KL}
    }
    \subfigure[\texttt{PB-PDO}]{
    \includegraphics[width=0.3\linewidth]{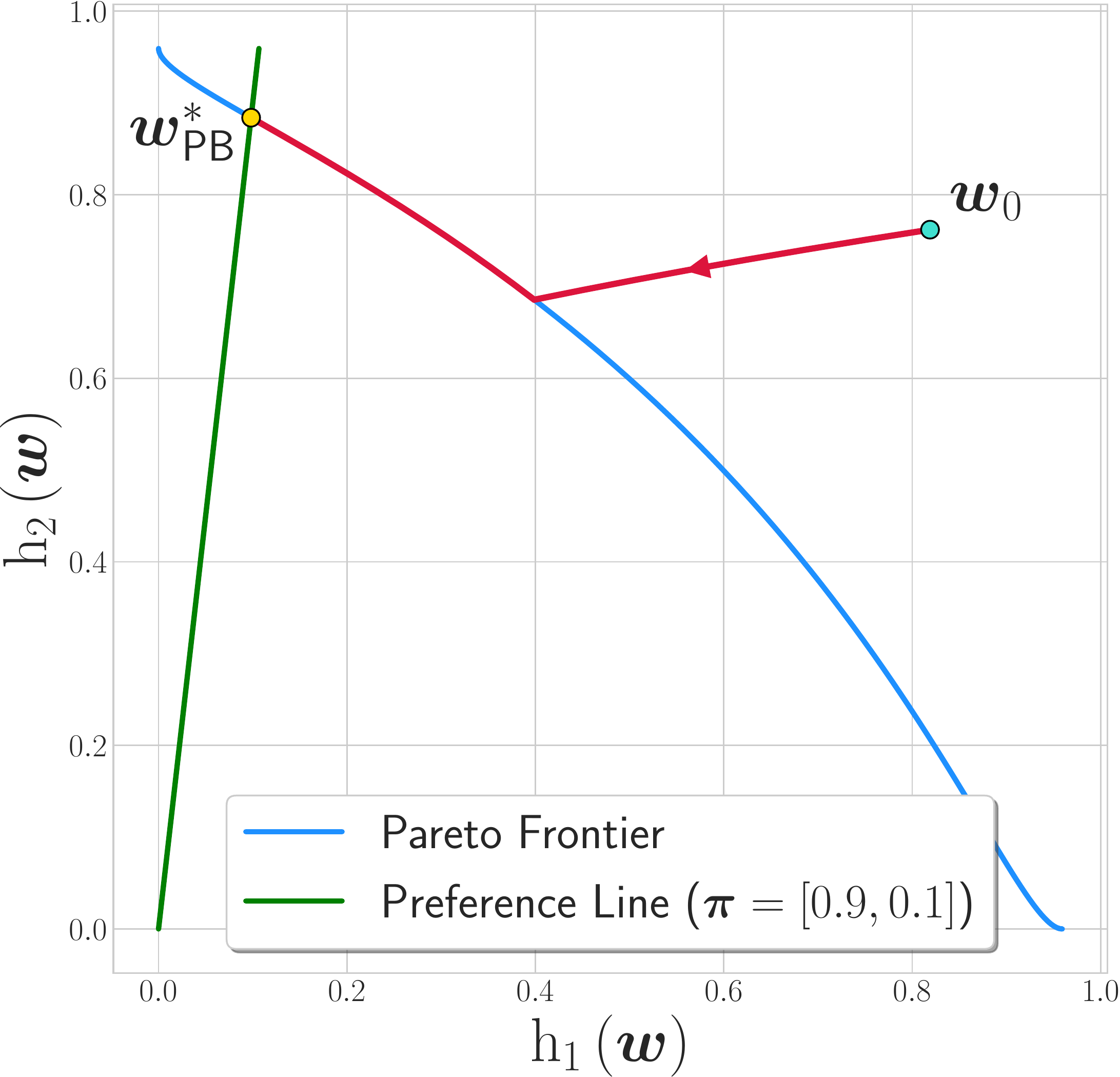}
    \label{fig:pf_compare_PBPDO}
    }
\caption{Comparing the convergence trajectory of our Algorithm~\ref{alg:fairpareto} (\texttt{PDO}), the preference-based algorithm introduced in~\cite{mahapatramulti}, and our proposed Algorithm~\ref{alg:PBpareto} (\texttt{PB-PDO}), when they all start from the same initial solution $\bm{w}_0$. \texttt{PDO} uses descent directions based on $\bm{\mathrm{h}}$ to reach a point from the Pareto frontier. The preference-based approach introduced in~\cite{mahapatramulti} uses ascent and descent directions to converge to the desired point and not other points from the Pareto frontier. On the other hand, \texttt{PB-PDO} using descent directions on $\bm{\mathrm{h}_\mathsf{PB}}$ to converge to the desired points, while tracing other points from the Pareto frontier during the convergence. }\label{fig:pf_compare}
\end{figure}

To find points from different parts of the Pareto frontier using the proposed algorithm we can set different preference vectors and run the algorithm several times. Then, by choosing non-dominating points in the trajectories of these runs, we can extract points from different parts in the Pareto frontier. For instance, for the Adult dataset (description in Section~\ref{sec:exp}), by running \texttt{PB-PDO} for only 10 times using different preference vector each time we can extract its Pareto frontier as in Figure~\ref{fig:adult_pf}. In this experiment, we use gender as the sensitive feature and equality of opportunity as the fairness loss per its definition in~(\ref{eq:eo_obj}). For more results on extracting Pareto frontier refer to Section~\ref{sec:exp}.

\begin{figure}[t]
    \centering
    \includegraphics[width=0.4\textwidth]{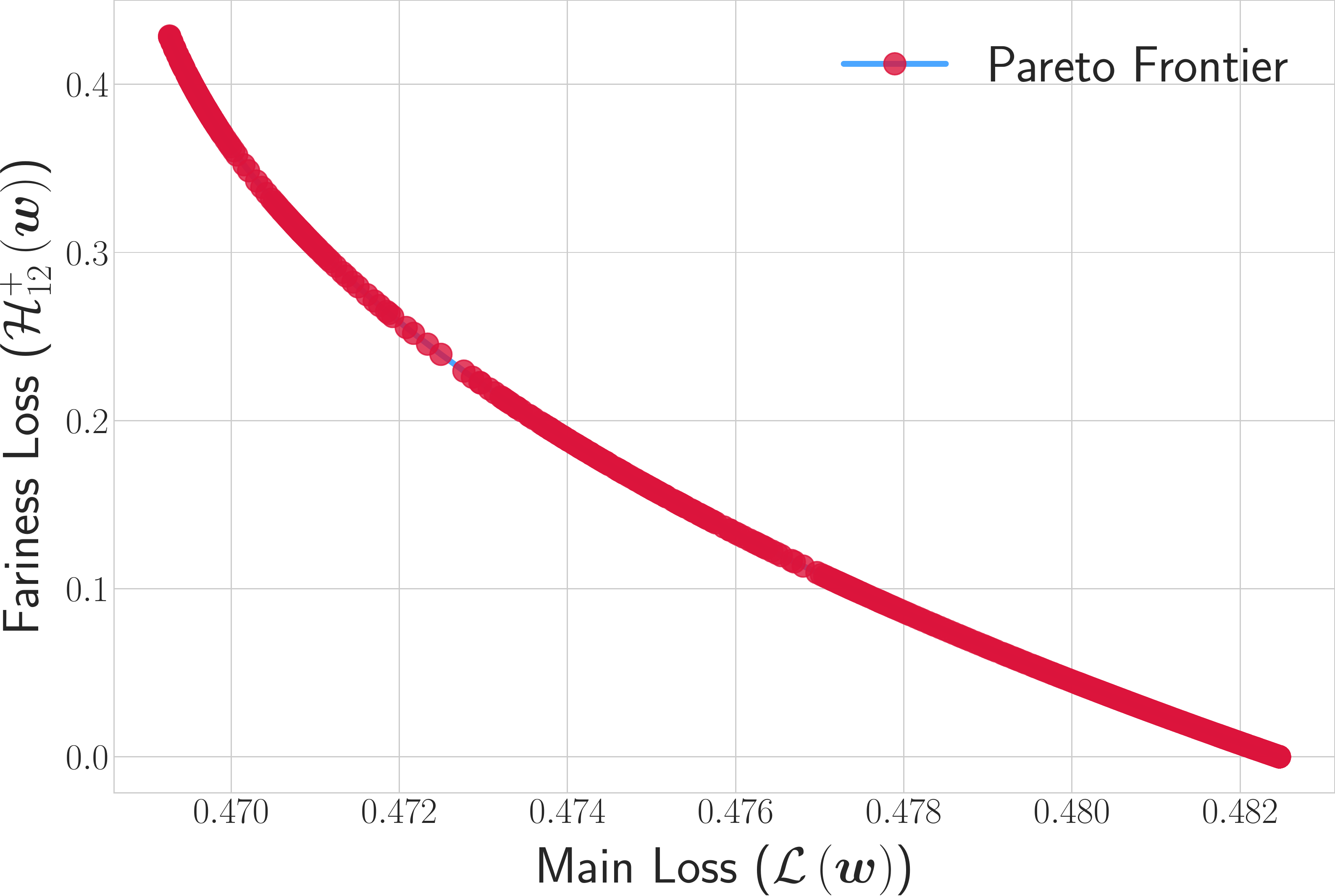}
    \caption{Pareto frontier extracted for the Adult dataset using the proposed \texttt{PB-PDO} algorithm with logistic regression as the main loss. It shows the trade-off between empirical risk and the fairness loss (equality of opportunity) defined in~(\ref{eq:eo_obj}). We run \texttt{PB-PDO} for $10$ times each with different preference vector, and it results in more than $15K$ non-dominated points on the Pareto frontier for this objective vector.}
    \label{fig:adult_pf}
\end{figure}
\section{Convergence Analysis}\label{sec:converge}
We now turn to analyzing the convergence of the proposed algorithm. To prove the convergence of the bilevel optimization introduced in~(\ref{eq:bilevel}), we first need to discuss the effect of the approximation of the inner level's solution on that of the outer level. Because we are solving both levels with gradient descent, a residual error from the inner level will be propagated through the outer level updates. Before diving into the convergence analysis, we indicate the assumptions used in the analysis. The first set of assumptions is related to the outer-level objective function, which is the weighted sum of objective functions in vector optimization of~(\ref{eq:mul-fair}).
\edef\oldassumption{\the\numexpr\value{assumption}+1}
\setcounter{assumption}{0}
\renewcommand{\theassumption}{\oldassumption.\alph{assumption}}
\begin{assumption}\label{ass:f}
Objective functions $\mathrm{h}_i\left(\bm{w}\right),\, i\in[m]$, are bounded by a constant  $\mathsf{D}_{\bm{\mathrm{h}}} = \underset{\w\in\mathcal{W}}{\max}  \llVert \bm{\mathrm{h}} \left(\w\right)\rrVert$.
\end{assumption}
\begin{assumption}\label{ass:f1}
Objective functions $\mathrm{h}_i\left(\bm{w}\right),\, i\in[m]$, are differentiable and have bounded gradients of $\llVert \bm{\mathrm{g}}_i \left(\w\right)\rrVert \leq \mathsf{B}_{\bm{\mathrm{h}}}, \forall i \in [m]$.
\end{assumption}
\begin{assumption}\label{ass:f2}
Objective functions $\mathrm{h}_i\left(\bm{w}\right),\, i\in[m]$, are smooth with Lipschitz constant of $L_i, i\in[m]$. That is, for every $\bm{w}_1,\bm{w}_2 \in \mathcal{W}$ and for every $i\in[m]$, we have:
\begin{equation}
    \llVert \nabla \mathrm{h}_i\left(\bm{w}_1\right) - \nabla \mathrm{h}_i\left(\bm{w}_2\right)\rrVert \leq L_i \llVert\bm{w}_1 - \bm{w}_2\rrVert\;.
\end{equation}
\end{assumption}
\let\theassumption\origtheassumption
Now, we elaborate on the set of assumptions for the inner level objective function $\Phi\left(\bm{\mathrm{h}}\left(\bm{w}\right),\bm{\alpha}\right)$ as follows:
\edef\oldassumption{\the\numexpr\value{assumption}-1}
\setcounter{assumption}{0}
\renewcommand{\theassumption}{\oldassumption.\alph{assumption}}
\begin{assumption}\label{ass:inner}
By having $m < d$, the inner-level objective function $\Phi\left(\bm{\mathrm{h}}\left(\bm{w}\right),\al\right)$ is strongly convex by parameter $\mu_\Phi$. That is, $\nabla^2_{\w\w} \Phi\left(\bm{\mathrm{h}}\left(\bm{w}\right),\al\right) \succeq \mu_\Phi \bm{I}$\;.
\end{assumption}
\begin{assumption}\label{ass:inner1}
The inner objective function $\Phi\left(\bm{\mathrm{h}}\left(\bm{w}\right),\al^*\left(\w\right)\right)$ is smooth with Lipschitz parameter $L_\Phi$. That is, for every $\bm{w}_1,\bm{w}_2 \in \mathcal{W}$, we have:
\begin{equation}
    \llVert \nabla \Phi\left(\bm{\mathrm{h}}\left(\bm{w}_1\right),\al^*\left(\w_1\right)\right) - \nabla \Phi\left(\bm{\mathrm{h}}\left(\bm{w}_2\right),\al^*\left(\w_2\right)\right) \rrVert 
    \leq L_\Phi \llVert\bm{w}_1 - \bm{w}_2\rrVert\;. 
\end{equation}
\end{assumption}
\begin{assumption}\label{ass:inner2}
The second-order derivative of the inner-level objective $\nabla^2_{\w\al}\Phi\left(\bm{\mathrm{h}}\left(\bm{w}\right),\al\right)$ is $L_{\w\al}$-Lipschitz continuous and bounded by $\llVert \nabla^2_{\w\al}\Phi\left(\bm{\mathrm{h}}\left(\bm{w}\right),\al\right) \rrVert \leq \mathsf{H}_{\Phi}$.
\end{assumption}
The last assumption for the inner-level objective is being commonly used by other machine learning and optimization frameworks in~\citep{nesterov2006cubic,jin2017escape,rajeswaran2019meta,ghadimi2018approximation}. Now, using these assumptions, we turn into the convergence analysis of Algorithm~\ref{alg:fairpareto}. First, we should bound the error on the solution of the inner level.
\begin{lemma}\label{lemma:sconv-converg}
Let $\Phi\left(\bm{\mathrm{h}}\left(\bm{w}\right),\bm{\alpha}\right)$ be a strongly convex and smooth function with respect to $\bm{\alpha}$ with $\mu_\Phi$ and $L_\Phi$ as its strong convexity and smoothness parameters, respectively. Then, if the inner learning rate is $\rho\leq \frac{2}{L_\Phi}$, the solution found by the inner level of Algorithm~\ref{alg:fairpareto} after $K$ updates is bounded by:
\begin{equation}\label{eq:strongconvex}
    \left\lVert \hat{\bm{\alpha}}^{(t)} - \bm{\alpha}^*\left(\bm{w}^{(t)}\right)\right\rVert^2 \leq \exp\left(-\frac{K}{\kappa}\right)\left\lVert \bm{\alpha}^{(0)} - \bm{\alpha}^*\left(\bm{w}^{(t)}\right) \right\rVert^2,
\end{equation}
where $\bm{\alpha}^*\left(\bm{w}^{(k)}\right)$ is the optimal weights for point $\bm{w}^{(k)}$ and $\kappa = \frac{\mu_\Phi}{L_\Phi}$ is the condition number of function $\Phi\left(\bm{\mathrm{h}}\left(\bm{w}\right),\bm{\alpha}\right)$. 
\end{lemma}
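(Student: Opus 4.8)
The plan is to recognize the inner \texttt{for}-loop of Algorithm~\ref{alg:fairpareto} as projected gradient descent applied to the map $\bm{\alpha}\mapsto\Phi\big(\bm{\mathrm{h}}(\bm{w}^{(t)}),\bm{\alpha}\big)$ over the simplex $\Delta_m$, with the outer iterate $\bm{w}^{(t)}$ held fixed, and then to run the textbook linear-convergence argument for a strongly convex, smooth objective. First I would note that with $\bm{w}^{(t)}$ frozen the inner objective is simply the quadratic $\bm{\alpha}\mapsto\big\|\sum_{i\in[m]}\alpha_i\bm{\mathrm{g}}_i(\bm{w}^{(t)})\big\|_2^2$, whose Hessian in $\bm{\alpha}$ has eigenvalues in $[\mu_\Phi,L_\Phi]$ by the strong-convexity and smoothness hypotheses; hence $\bm{\alpha}^*(\bm{w}^{(t)})$ is its unique constrained minimizer, and by the first-order optimality condition on a convex set it is a fixed point of the projected-gradient operator $T(\bm{\alpha})=\Pi_{\Delta_m}\big(\bm{\alpha}-\rho\,\nabla_{\bm{\alpha}}\Phi(\bm{\mathrm{h}}(\bm{w}^{(t)}),\bm{\alpha})\big)$.

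The core step is a one-step contraction in Euclidean distance to $\bm{\alpha}^*:=\bm{\alpha}^*(\bm{w}^{(t)})$. Using that $\Pi_{\Delta_m}$ is nonexpansive and that $\bm{\alpha}^*=T(\bm{\alpha}^*)$, and writing $\nabla_{\bm{\alpha}}\Phi(\bm{\alpha})$ for $\nabla_{\bm{\alpha}}\Phi(\bm{\mathrm{h}}(\bm{w}^{(t)}),\bm{\alpha})$, one gets
\[
\big\|\bm{\alpha}^{(k+1)}-\bm{\alpha}^*\big\|^2 \le \big\|(\bm{\alpha}^{(k)}-\bm{\alpha}^*) - \rho\big(\nabla_{\bm{\alpha}}\Phi(\bm{\alpha}^{(k)})-\nabla_{\bm{\alpha}}\Phi(\bm{\alpha}^*)\big)\big\|^2 .
\]
Expanding the square, bounding the cross term below by strong convexity via $\langle\nabla_{\bm{\alpha}}\Phi(\bm{\alpha}^{(k)})-\nabla_{\bm{\alpha}}\Phi(\bm{\alpha}^*),\,\bm{\alpha}^{(k)}-\bm{\alpha}^*\rangle\ge\mu_\Phi\|\bm{\alpha}^{(k)}-\bm{\alpha}^*\|^2$, and invoking co-coercivity of the gradient of an $L_\Phi$-smooth convex function, the quadratic-in-$\rho$ term is absorbed under the step-size condition $\rho\le 2/L_\Phi$; taking for concreteness $\rho=1/L_\Phi$ yields the clean bound $\|\bm{\alpha}^{(k+1)}-\bm{\alpha}^*\|^2\le(1-\mu_\Phi/L_\Phi)\,\|\bm{\alpha}^{(k)}-\bm{\alpha}^*\|^2$.

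Finally I would iterate this estimate over $k=0,\dots,K-1$ to obtain $\|\hat{\bm{\alpha}}^{(t)}-\bm{\alpha}^*(\bm{w}^{(t)})\|^2\le(1-\mu_\Phi/L_\Phi)^K\,\|\bm{\alpha}^{(0)}-\bm{\alpha}^*(\bm{w}^{(t)})\|^2$, and convert the geometric factor into the stated exponential form through the elementary inequality $1-x\le e^{-x}$, giving $\exp(-K\mu_\Phi/L_\Phi)=\exp(-K/\kappa)$ for the condition number $\kappa$. The only ingredient beyond a vanilla strongly-convex gradient-descent bound is the simplex projection, which I dispatch via the fixed-point characterization of $\bm{\alpha}^*(\bm{w}^{(t)})$ together with nonexpansiveness of $\Pi_{\Delta_m}$; the mildly delicate point is the bookkeeping that matches the per-step contraction constant to the exact exponent claimed, which is precisely where the step-size restriction $\rho\le 2/L_\Phi$ is used. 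No obstacle of real substance is expected here — this lemma is the routine ``inner-loop accuracy'' estimate whose purpose is to be fed, in the subsequent analysis, into the propagation of the $\hat{\bm{\alpha}}$-error through the outer update.
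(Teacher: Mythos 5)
Your proposal is correct and follows essentially the same route as the paper: the paper's ``proof'' is a one-line appeal to the standard linear convergence rate of gradient descent for strongly convex, smooth functions (citing Bubeck), and your argument is exactly that standard rate written out, with the additional (and welcome) care of handling the simplex projection $\Pi_{\Delta_m}$ via the fixed-point characterization of $\bm{\alpha}^*(\bm{w}^{(t)})$ and nonexpansiveness. One small point your derivation exposes: you correctly obtain $\exp(-K\mu_\Phi/L_\Phi)$, which matches the lemma's display only if $\kappa$ denotes the usual condition number $L_\Phi/\mu_\Phi$; the paper's definition $\kappa=\mu_\Phi/L_\Phi$ is inverted and, read literally, would claim an unattainable rate, so your version is the one to trust.
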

\begin{proof}
This is the standard convergence rate for a strong convex and smooth function using gradient descent. For the detailed proof refer to~\cite{bubeck2015convex}.
\end{proof}
Then, we can write the gradient of the outer level function based on the inner level using chain rule as follows:
\begin{equation}
    \nabla \Psi\left(\bm{\mathrm{h}}\left(\bm{w}\right),\hat{\bm{\alpha}}\left(\bm{w}\right)\right) = \nabla_{\bm{w}} \Psi\left(\bm{\mathrm{h}}\left(\bm{w}\right),\hat{\bm{\alpha}}\left(\bm{w}\right)\right) + \nabla \hat{\bm{\alpha}}\left(\bm{w}\right)\nabla_{\bm{\alpha}} \Psi\left(\bm{\mathrm{h}}\left(\bm{w}\right),\hat{\bm{\alpha}}\left(\bm{w}\right)\right)\;.
\end{equation}
The issue here is that we do not have the exact value of $\nabla \hat{\bm{\alpha}}\left(\bm{w}\right)$, and that's where the approximation error comes into effect. From the optimality of the inner level, which is $\nabla_{\bm{\alpha}} \Phi\left(\bm{\mathrm{h}}\left(\bm{w}\right),\bm{\alpha}^*\left(\bm{w}\right)\right) = 0$, if we take the gradient from both sides, we will have:
\begin{equation}\label{eq:grad2}
    \nabla \bm{\alpha}^*\left(\bm{w}\right) = - \nabla^2_{\bm{w}\bm{\alpha}}\Phi\left(\bm{\mathrm{h}}\left(\bm{w}\right),\bm{\alpha}^*(\bm{w})\right)\left[\nabla^2_{\bm{\alpha}\bm{\alpha}}\Phi\left(\bm{\mathrm{h}}\left(\bm{w}\right),\bm{\alpha}^*(\bm{w})\right)\right]^{-1}.
\end{equation}
This is valid for the optimal point only, but we can use this approximation on the solution point found by Algorithm~\ref{alg:fairpareto}. Then,
equipped with Lemma~\ref{lemma:sconv-converg}, we can bound the error in this approximation for the gradient of the outer level using the following lemma.
\begin{lemma}[Gradient error]\label{lemma:ge}
Consider that the inner- and outer-level functions in optimization~(\ref{eq:bilevel}) are following the Assumptions~\ref{ass:f},~\ref{ass:f1},~\ref{ass:inner}, and~\ref{ass:inner2} and that the solution of the inner level is found by the Algorithm~\ref{alg:fairpareto}, then the error in the gradient of the outer-level function  is bounded by
\begin{equation}
    \llVert \nabla \Psi\left(\bm{\mathrm{h}}\left(\bm{w}\right),\hat{\al}\left(\w\right)\right) - \nabla \Psi\left(\bm{\mathrm{h}}\left(\bm{w}\right),\al^*\left(\w\right)\right)\rrVert \leq A_\Psi \llVert \hat{\al} - \al^* \rrVert\;,
\end{equation}
where $A_\Psi = \mathsf{B}_{\bm{\mathrm{h}}} \sqrt{m} +  \frac{\mathsf{D}_{\bm{\mathrm{h}}} \sqrt{m}}{\mu_\Phi} L_{\w\al}$.
\end{lemma}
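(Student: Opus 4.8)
The plan is to begin from the chain-rule identity for the total gradient of the outer objective already displayed above, namely $\nabla\Psi\left(\bm{\mathrm{h}}\left(\w\right),\al\left(\w\right)\right) = \nabla_{\w}\Psi\left(\bm{\mathrm{h}}\left(\w\right),\al\left(\w\right)\right) + \nabla\al\left(\w\right)\,\nabla_{\al}\Psi\left(\bm{\mathrm{h}}\left(\w\right),\al\left(\w\right)\right)$, and to substitute the implicit-function expression~\eqref{eq:grad2} for the Jacobian $\nabla\al\left(\w\right)$. Since the ``outer gradient evaluated at $\hat{\al}$'' is, by the convention introduced in the text, the \emph{same} functional form with $\hat{\al}$ plugged in for $\al^*\left(\w\right)$, the quantity on the left-hand side of the lemma is just the difference of one fixed map of the weight vector evaluated at two points $\hat{\al}$ and $\al^*$, with $\w$ (hence $\bm{\mathrm{h}}\left(\w\right)$, the gradients $\bm{\mathrm{g}}_i\left(\w\right)$, and the Hessian blocks of $\Phi$) held fixed. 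So the whole argument reduces to a Lipschitz-in-$\al$ estimate for that map.

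Then I would split the difference into the two summands of the chain rule and bound each. For the first summand, since $\Psi\left(\bm{\mathrm{h}}\left(\w\right),\al\right)=\sum_i\al_i\mathrm{h}_i\left(\w\right)$ gives $\nabla_{\w}\Psi=\sum_i\al_i\bm{\mathrm{g}}_i\left(\w\right)$, the contribution is $\sum_i(\hat{\al}_i-\al^*_i)\bm{\mathrm{g}}_i\left(\w\right)$; by the triangle inequality, the bounded-gradient Assumption~\ref{ass:f1} ($\llVert\bm{\mathrm{g}}_i\rrVert\le\mathsf{B}_{\bm{\mathrm{h}}}$), and $\llVert\cdot\rrVert_1\le\sqrt{m}\,\llVert\cdot\rrVert_2$ on the $m$-vector of weight errors, this is at most $\mathsf{B}_{\bm{\mathrm{h}}}\sqrt{m}\,\llVert\hat{\al}-\al^*\rrVert$. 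For the second summand, use $\nabla_{\al}\Psi=\bm{\mathrm{h}}\left(\w\right)$ together with~\eqref{eq:grad2}, and observe that for the inner objective $\Phi\left(\bm{\mathrm{h}}\left(\w\right),\al\right)=\llVert\sum_i\al_i\bm{\mathrm{g}}_i\left(\w\right)\rrVert_2^2$ the block $\nabla^2_{\al\al}\Phi$ is independent of $\al$; hence the contribution is $\bigl[\nabla^2_{\w\al}\Phi\left(\bm{\mathrm{h}}\left(\w\right),\hat{\al}\right)-\nabla^2_{\w\al}\Phi\left(\bm{\mathrm{h}}\left(\w\right),\al^*\right)\bigr]\,[\nabla^2_{\al\al}\Phi]^{-1}\bm{\mathrm{h}}\left(\w\right)$. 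I would bound the first factor by $L_{\w\al}\llVert\hat{\al}-\al^*\rrVert$ (Assumption~\ref{ass:inner2}, Lipschitzness in the $\al$-argument), the inverse-Hessian factor in operator norm by $1/\mu_\Phi$ (strong convexity, Assumption~\ref{ass:inner}), and $\llVert\bm{\mathrm{h}}\left(\w\right)\rrVert$ by $\mathsf{D}_{\bm{\mathrm{h}}}$ (Assumption~\ref{ass:f}), absorbing an extra $\sqrt{m}$ from converting the $\ell_1/\ell_2$ norms of $\bm{\mathrm{h}}\left(\w\right)$; this gives $\tfrac{\mathsf{D}_{\bm{\mathrm{h}}}\sqrt{m}}{\mu_\Phi}L_{\w\al}\,\llVert\hat{\al}-\al^*\rrVert$. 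Adding the two pieces yields the claimed constant $A_\Psi=\mathsf{B}_{\bm{\mathrm{h}}}\sqrt{m}+\tfrac{\mathsf{D}_{\bm{\mathrm{h}}}\sqrt{m}}{\mu_\Phi}L_{\w\al}$.

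The main obstacle I anticipate is bookkeeping of matrix/vector norms rather than anything conceptual: one must be careful to invoke Lipschitzness of $\nabla^2_{\w\al}\Phi$ in its $\al$-argument (not merely in $\w$), and to use that the $\nabla^2_{\al\al}\Phi$ block is genuinely constant in $\al$ for this particular $\Phi$ — otherwise a further term controlling the variation of $[\nabla^2_{\al\al}\Phi]^{-1}$ in $\al$ would appear, which is not covered by the stated assumptions (here the boundedness $\llVert\nabla^2_{\w\al}\Phi\rrVert\le\mathsf{H}_\Phi$ would be the natural tool, but one would still need a Lipschitz constant for $\nabla^2_{\al\al}\Phi$). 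A secondary point is to verify that the operator-norm bound $\llVert[\nabla^2_{\al\al}\Phi]^{-1}\rrVert\le 1/\mu_\Phi$ is exactly the content of Assumption~\ref{ass:inner}, and that in telescoping the product of three factors (Hessian block, inverse Hessian, $\bm{\mathrm{h}}$) only the perturbed factor moves with $\al$, which is legitimate precisely because the other two are $\al$-independent.
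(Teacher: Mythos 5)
Your proposal is correct and follows essentially the same route as the paper's proof: the chain rule together with the implicit-function expression~(\ref{eq:grad2}) for $\nabla \al(\w)$, a triangle-inequality split, the bound $\mathsf{B}_{\bm{\mathrm{h}}}\sqrt{m}\llVert\hat{\al}-\al^*\rrVert$ for the $\nabla_{\w}\Psi$ part, and the observation that $\nabla_{\al}\Psi=\bm{\mathrm{h}}(\w)$ and $\nabla^2_{\al\al}\Phi$ are $\al$-independent so that only the $\nabla^2_{\w\al}\Phi$ factor contributes, yielding the same constant $A_\Psi$. The only difference is presentational --- you identify the vanishing terms up front, whereas the paper writes the full decomposition and then cancels them --- and the subtleties you flag (Lipschitzness of $\nabla^2_{\w\al}\Phi$ in its $\al$-argument, constancy of $\nabla^2_{\al\al}\Phi$) are exactly the ones the paper's argument relies on.
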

\begin{proof}
The proof is provided in Appendix~\ref{app:lemma:ge}, where we use the equation and its approximation for non-optimal points in~(\ref{eq:grad2}), in addition to the Assumptions mentioned above.
\end{proof}
This lemma shows that the error in the gradient of the outer level because of an inexact solution of the inner level can be bounded by the error in the solution of the inner level, which is then bounded by~(\ref{eq:strongconvex}). 

In the assumptions, we did not talk about the smoothness of the outer-level function. Next, using the smoothness of each objective function separately, we show that the outer-level function is also smooth, which is deliberated by the following lemma.
\begin{lemma}\label{lemma:smooth}
The outer level function $\Psi\left(\bm{\mathrm{h}}\left(\bm{w}\right),\al\right)$ is smooth with Lipschitz constant $L_\Psi$, that is:
\begin{equation}\label{eq:smooth}
    \llVert \nabla \Psi\left(\bm{\mathrm{h}}\left(\bm{w}_1\right),\al^*\left(\w_1\right)\right) - \nabla \Psi\left(\bm{\mathrm{h}}\left(\bm{w}_2\right),\al^*\left(\w_2\right)\right)\rrVert \leq L_\Psi \llVert\w_1 - \w_2\rrVert\;,
\end{equation}
where $L_\Psi = L_\text{max} + \frac{\mathsf{H}_{\Phi}\mathsf{B}_{\bm{\mathrm{h}}} \sqrt{m}}{\mu_\Phi}$, in which $L_\text{max} = \max\left\{L_1,\ldots,L_m\right\}$ and $L_i, i\in [m]$ is the smoothness parameter of the $i$-th objective from Assumption~\ref{ass:f2}.
\end{lemma}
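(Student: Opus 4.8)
The plan is to show that the outer-level update direction, namely $\bm{\mathrm{g}}_\Psi(\bm{w}):=\nabla_{\bm{w}}\Psi(\bm{\mathrm{h}}(\bm{w}),\bm{\alpha})\big|_{\bm{\alpha}=\bm{\alpha}^*(\bm{w})}=\sum_{i\in[m]}\alpha^*_i(\bm{w})\,\bm{\mathrm{g}}_i(\bm{w})$ (this is exactly the quantity fed into the $\bm{w}$-step of Algorithm~\ref{alg:fairpareto}, with $\hat{\bm\alpha}^{(t)}$ replaced by the exact $\bm{\alpha}^*$), is $L_\Psi$-Lipschitz in $\bm{w}$, by separating the variation caused by the gradients $\bm{\mathrm{g}}_i$ from the variation caused by the weights $\bm{\alpha}^*(\cdot)$. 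Given $\bm{w}_1,\bm{w}_2\in\mathcal{W}$, I would add and subtract $\sum_{i}\alpha^*_i(\bm{w}_1)\bm{\mathrm{g}}_i(\bm{w}_2)$ and apply the triangle inequality, so that $\|\bm{\mathrm{g}}_\Psi(\bm{w}_1)-\bm{\mathrm{g}}_\Psi(\bm{w}_2)\|$ is at most $\sum_i\alpha^*_i(\bm{w}_1)\|\bm{\mathrm{g}}_i(\bm{w}_1)-\bm{\mathrm{g}}_i(\bm{w}_2)\|+\sum_i|\alpha^*_i(\bm{w}_1)-\alpha^*_i(\bm{w}_2)|\,\|\bm{\mathrm{g}}_i(\bm{w}_2)\|$. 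The first sum is bounded by $L_{\text{max}}\|\bm{w}_1-\bm{w}_2\|$ using Assumption~\ref{ass:f2} together with $\alpha^*_i(\bm{w}_1)\ge 0$ and $\sum_i\alpha^*_i(\bm{w}_1)=1$ (since $\bm{\alpha}^*(\bm{w}_1)\in\Delta_m$), so the resulting convex combination of the $L_i$ is at most $L_{\text{max}}$. The second sum is bounded, via Assumption~\ref{ass:f1} and the inequality $\|\cdot\|_1\le\sqrt{m}\,\|\cdot\|_2$ on $\mathbb{R}^m$, by $\mathsf{B}_{\bm{\mathrm{h}}}\sqrt{m}\,\|\bm{\alpha}^*(\bm{w}_1)-\bm{\alpha}^*(\bm{w}_2)\|$.

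It then remains to prove that the map $\bm{w}\mapsto\bm{\alpha}^*(\bm{w})$ is Lipschitz with constant $\mathsf{H}_\Phi/\mu_\Phi$. I would start from the inner first-order condition $\nabla_{\bm{\alpha}}\Phi(\bm{\mathrm{h}}(\bm{w}),\bm{\alpha}^*(\bm{w}))=\bm{0}$ and its implicit differentiation~(\ref{eq:grad2}), which expresses $\nabla\bm{\alpha}^*(\bm{w})$ as $-[\nabla^2_{\bm{\alpha}\bm{\alpha}}\Phi]^{-1}\nabla^2_{\bm{\alpha}\bm{w}}\Phi$. Assumption~\ref{ass:inner} ($\mu_\Phi$-strong convexity of $\Phi$ in $\bm{\alpha}$, hence $\|[\nabla^2_{\bm{\alpha}\bm{\alpha}}\Phi]^{-1}\|\le 1/\mu_\Phi$) together with the bound $\|\nabla^2_{\bm{w}\bm{\alpha}}\Phi\|\le\mathsf{H}_\Phi$ from Assumption~\ref{ass:inner2} gives $\|\nabla\bm{\alpha}^*(\bm{w})\|\le\mathsf{H}_\Phi/\mu_\Phi$ uniformly on $\mathcal{W}$; integrating this along the segment joining $\bm{w}_2$ to $\bm{w}_1$ yields $\|\bm{\alpha}^*(\bm{w}_1)-\bm{\alpha}^*(\bm{w}_2)\|\le(\mathsf{H}_\Phi/\mu_\Phi)\|\bm{w}_1-\bm{w}_2\|$. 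Substituting this into the second sum above and adding the first gives $\|\bm{\mathrm{g}}_\Psi(\bm{w}_1)-\bm{\mathrm{g}}_\Psi(\bm{w}_2)\|\le\big(L_{\text{max}}+\mathsf{H}_\Phi\mathsf{B}_{\bm{\mathrm{h}}}\sqrt{m}/\mu_\Phi\big)\|\bm{w}_1-\bm{w}_2\|$, which is exactly the claimed $L_\Psi$.

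The step I expect to be the main obstacle is precisely this Lipschitz estimate for $\bm{\alpha}^*(\cdot)$. The inner problem is constrained to the simplex $\Delta_m$, so the implicit-function computation in~(\ref{eq:grad2}) is only literally valid where the solution is interior, or more generally where the active set of simplex constraints is locally constant; at the $\bm{w}$ where the active set changes, $\bm{\alpha}^*$ need not be differentiable. The clean way around this is to avoid differentiability altogether: since $\Phi(\bm{\mathrm{h}}(\bm{w}),\cdot)$ is $\mu_\Phi$-strongly convex on the \emph{fixed} convex set $\Delta_m$ and $\bm{\alpha}\mapsto\nabla_{\bm{\alpha}}\Phi(\bm{\mathrm{h}}(\bm{w}),\bm{\alpha})$ is $\mathsf{H}_\Phi$-Lipschitz in $\bm{w}$ (Assumption~\ref{ass:inner2}), comparing the two variational inequalities characterizing $\bm{\alpha}^*(\bm{w}_1)$ and $\bm{\alpha}^*(\bm{w}_2)$ — equivalently, using nonexpansiveness of the Euclidean projection onto $\Delta_m$ — yields the same bound $\|\bm{\alpha}^*(\bm{w}_1)-\bm{\alpha}^*(\bm{w}_2)\|\le(\mathsf{H}_\Phi/\mu_\Phi)\|\bm{w}_1-\bm{w}_2\|$ with no differentiability assumption, matching the constant one reads off formally from~(\ref{eq:grad2}). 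Everything else is a routine application of the stated assumptions.
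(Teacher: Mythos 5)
Your decomposition is exactly the one the paper uses: add and subtract the cross term, bound the first piece by $L_{\text{max}}\|\w_1-\w_2\|$ using Assumption~\ref{ass:f2} and $\al^*(\w_1)\in\Delta_m$, and bound the second piece by $\mathsf{B}_{\bm{\mathrm{h}}}\sqrt{m}\,\|\al^*(\w_1)-\al^*(\w_2)\|$ times the Lipschitz constant $\mathsf{H}_{\Phi}/\mu_\Phi$ of $\al^*(\cdot)$, arriving at the same $L_\Psi$. The one place you genuinely depart from the paper is the justification of that last Lipschitz estimate: the paper reads it off formally from the implicit-differentiation identity~(\ref{eq:grad2}), i.e.\ from $\|\nabla\al^*(\w)\|\le\mathsf{H}_{\Phi}/\mu_\Phi$, without addressing the fact that the inner problem is constrained to the simplex, so $\al^*$ may fail to be differentiable where the active set changes. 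Your variational-inequality argument --- combining $\mu_\Phi$-strong convexity of $\Phi(\bm{\mathrm{h}}(\w),\cdot)$ on the fixed convex set $\Delta_m$ with the $\mathsf{H}_{\Phi}$-Lipschitz dependence of $\nabla_{\al}\Phi$ on $\w$ from Assumption~\ref{ass:inner2} --- gives the identical constant with no differentiability assumption, and is the cleaner way to close that step; the paper's version buys brevity at the cost of implicitly assuming an interior (or locally constant active-set) solution. Everything else in your write-up matches the paper's proof line for line.
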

Now, equipped with Lemmas~\ref{lemma:sconv-converg},~\ref{lemma:ge}, and~\ref{lemma:smooth}, we can turn into the convergence analysis of the optimization in Algorithm~\ref{alg:fairpareto}. Note that, we do not have any convexity assumption for the outer-level function in our assumptions set. Hence, we consider two cases, where the outer-level function is convex or non-convex. First, we consider the case where all the objectives are convex functions. Because in $\Psi(\w,\al)$ the objectives are summed with positive weights from a simplex, $\Psi(\w,\al)$ is convex as well. Hence, we have the following theorem for the convergence of Algorithm~\ref{alg:fairpareto}.
\begin{theorem}[Convex Objectives]\label{theorem:convex_convergence}
Let $\bm{\mathrm{h}}\left(\w\right) = \left[\mathrm{h}_1\left(\w\right), \ldots, \mathrm{h}_m\left(\w\right)\right]$ be the convex objective vector that follows the properties in Assumptions~\ref{ass:f},~\ref{ass:f1}, and~\ref{ass:f2}. Then the function $\Psi\left(\bm{\mathrm{h}}\left(\bm{w}\right),\al\right)$ is a convex and smooth function with Lipschitz constant of $L_\Psi$ from Lemma~\ref{lemma:smooth}. Also, the inner function of optimization~(\ref{eq:bilevel}) has the properties in Assumptions~\ref{ass:inner},~\ref{ass:inner1}, and~\ref{ass:inner2}. Then, for the sequence of solutions $\w^{(0)},\ldots, \w^{(T)}$, generated by the Algorithm~\ref{alg:fairpareto}, by setting $\eta\leq 1/L_\Psi$, we have
\begin{align}\label{eq:convex_convergence}
    \Psi\left(\bm{\mathrm{h}}\left(\bar{\bm{w}}\right),\bar{\al}\left(\bar{\bm{w}}\right) \right) - \Psi\left(\bm{\mathrm{h}}\left(\bm{w}^*\right),\al^*\left(\w^*\right)\right) &\leq \frac{1}{2\eta T}\llVert \w^{(0)} - \w^* \rrVert^2 \\ \nonumber
    & \quad + \frac{RA_\Psi }{T}\exp\left(-\frac{K}{2\kappa}\right)\sum_{t=0}^{T-1}  \llVert \al^{(0)}\left(\w^{(t)}\right) - \al^*\left(\w^{(t)}\right)\rrVert\;,
\end{align}
where $R$ is the bound on the domain of the solutions and $\bar{\w} = \frac{1}{T}\sum_{t=1}^{T}\w^{(t)}$. Here $\bar{\al} = \left[\bar{\alpha}_1,\ldots,\bar{\alpha}_m\right]$, and $\bar{\alpha}_i = \min_t \alpha^*_i\left(\w^{(t)}\right)$ for $1\leq t\leq T$ and $i \in [m]$.
\end{theorem}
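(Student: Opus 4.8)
The plan is to read \eqref{eq:convex_convergence} as an average-iterate guarantee for \emph{inexact} gradient descent applied to the outer objective of the bilevel problem \eqref{eq:bilevel}, where the only source of inexactness is that the outer update of Algorithm~\ref{alg:fairpareto} replaces the optimal inner weights $\bm\alpha^*(\bm w^{(t)})$ by the $K$-step inner iterate $\widehat{\bm\alpha}^{(t)}$. Write $\bm g^{(t)}$ for the exact outer gradient at $\bm w^{(t)}$ (the gradient of $\Psi$ evaluated at the optimal inner weights) and $\widehat{\bm g}^{(t)}=\nabla\Psi(\bm{\mathrm h}(\bm w^{(t)}),\widehat{\bm\alpha}^{(t)})$ for the direction actually used, so that the update is $\bm w^{(t+1)}=\bm w^{(t)}-\eta\,\widehat{\bm g}^{(t)}$ and the error vector is $\bm e^{(t)}:=\widehat{\bm g}^{(t)}-\bm g^{(t)}$. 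By Lemma~\ref{lemma:ge}, $\|\bm e^{(t)}\|\le A_\Psi\,\|\widehat{\bm\alpha}^{(t)}-\bm\alpha^*(\bm w^{(t)})\|$, and by Lemma~\ref{lemma:sconv-converg} (taking square roots in \eqref{eq:strongconvex}), $\|\widehat{\bm\alpha}^{(t)}-\bm\alpha^*(\bm w^{(t)})\|\le \exp(-K/(2\kappa))\,\|\bm\alpha^{(0)}(\bm w^{(t)})-\bm\alpha^*(\bm w^{(t)})\|$; chaining these two bounds produces exactly the factor $A_\Psi\exp(-K/(2\kappa))$ appearing in the second term of the claimed bound.

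First I would carry out the standard convex/smooth one-step analysis, but with the error carried along. Using convexity of the outer objective (a nonnegative combination of the convex $\mathrm h_i$, with the weights treated as frozen at step $t$) gives $\Psi(\bm{\mathrm h}(\bm w^{(t)}),\bm\alpha^*(\bm w^{(t)}))-\Psi(\bm{\mathrm h}(\bm w^*),\bm\alpha^*(\bm w^*))\le \langle \bm g^{(t)},\bm w^{(t)}-\bm w^*\rangle$; substituting $\bm g^{(t)}=\widehat{\bm g}^{(t)}-\bm e^{(t)}$, invoking the elementary identity $\langle\widehat{\bm g}^{(t)},\bm w^{(t)}-\bm w^*\rangle=\tfrac1{2\eta}\big(\|\bm w^{(t)}-\bm w^*\|^2-\|\bm w^{(t+1)}-\bm w^*\|^2\big)+\tfrac{\eta}{2}\|\widehat{\bm g}^{(t)}\|^2$, and absorbing $\tfrac{\eta}{2}\|\widehat{\bm g}^{(t)}\|^2$ against $\Psi^{(t)}-\Psi^{(t+1)}$ via the descent lemma for the $L_\Psi$-smooth outer objective (Lemma~\ref{lemma:smooth}, valid because $\eta\le 1/L_\Psi$), I arrive at a per-step inequality of the form
\begin{equation*}
\Psi^{(t+1)}-\Psi^\star \;\le\; \tfrac1{2\eta}\big(\|\bm w^{(t)}-\bm w^*\|^2-\|\bm w^{(t+1)}-\bm w^*\|^2\big)+R\,\|\bm e^{(t)}\|+\eta\,\langle\bm e^{(t)},\widehat{\bm g}^{(t)}\rangle,
\end{equation*}
where $\Psi^{(t)}=\Psi(\bm{\mathrm h}(\bm w^{(t)}),\bm\alpha^*(\bm w^{(t)}))$, $\Psi^\star=\Psi(\bm{\mathrm h}(\bm w^*),\bm\alpha^*(\bm w^*))$, the factor $R$ comes from $-\langle\bm e^{(t)},\bm w^{(t)}-\bm w^*\rangle\le R\|\bm e^{(t)}\|$ using the domain bound $\|\bm w^{(t)}-\bm w^*\|\le R$, and the last term is lower order and absorbed using the gradient bound of Assumption~\ref{ass:f1}. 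Telescoping over $t=0,\dots,T-1$, discarding $\|\bm w^{(T)}-\bm w^*\|^2\ge 0$, and dividing by $T$ yields $\tfrac1T\sum_t(\Psi^{(t)}-\Psi^\star)\le \tfrac1{2\eta T}\|\bm w^{(0)}-\bm w^*\|^2+\tfrac{R}{T}\sum_t\|\bm e^{(t)}\|$, and inserting the error bound of the previous paragraph gives the right-hand side of \eqref{eq:convex_convergence}.

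It remains to connect $\tfrac1T\sum_t\Psi^{(t)}$ to the left-hand side $\Psi(\bm{\mathrm h}(\bar{\bm w}),\bar{\bm\alpha})$, and this is where the coordinatewise definition $\bar\alpha_i=\min_t\alpha_i^*(\bm w^{(t)})$ is used: since every $\mathrm h_i$ is nonnegative and $\bar\alpha_i\le\alpha_i^*(\bm w^{(t)})$ for all $t$, and each $\mathrm h_i$ is convex, Jensen gives $\Psi(\bm{\mathrm h}(\bar{\bm w}),\bar{\bm\alpha})=\sum_i\bar\alpha_i\mathrm h_i(\bar{\bm w})\le \tfrac1T\sum_t\sum_i\bar\alpha_i\mathrm h_i(\bm w^{(t)})\le \tfrac1T\sum_t\sum_i\alpha_i^*(\bm w^{(t)})\mathrm h_i(\bm w^{(t)})=\tfrac1T\sum_t\Psi^{(t)}$, which combined with the telescoped bound produces the theorem.

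The hard part is this last bookkeeping step: because the ``effective'' outer objective carries $\bm w$-dependent weights $\bm\alpha^*(\bm w)$, one cannot simply invoke Jensen on a single fixed function, and one must track carefully that (i)~the convexity used per step is that of $\bm w\mapsto\sum_i\alpha_i^*(\bm w^{(t)})\mathrm h_i(\bm w)$ with the weights frozen, (ii)~the comparison value on the right is $\Psi(\bm{\mathrm h}(\bm w^*),\bm\alpha^*(\bm w^*))$, and (iii)~the averaged iterate $\bar{\bm w}$ is reconnected to the target quantity only through the $\min_t$ construction together with nonnegativity of the losses---this is precisely why $\bar{\bm\alpha}$ is a coordinatewise minimum rather than an average. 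A secondary, purely mechanical point is that Lemma~\ref{lemma:ge} already folds both the direct perturbation of the weights and the implicit-gradient contribution coming from \eqref{eq:grad2} into the single constant $A_\Psi$, so once that lemma and Lemma~\ref{lemma:smooth} are in hand, the remainder is the textbook inexact-gradient-descent calculation.
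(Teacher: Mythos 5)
Your proposal is correct and follows essentially the same route as the paper's proof in Appendix~\ref{app:theorem:convex_convergence}: an inexact-gradient-descent analysis of the $L_\Psi$-smooth convex outer objective, with the gradient error controlled by chaining Lemma~\ref{lemma:ge} with Lemma~\ref{lemma:sconv-converg}, the error inner product bounded via the domain radius $R$, and the averaged iterate reconnected through the coordinatewise-minimum weights $\bar{\al}$ (a step you actually justify more explicitly than the paper does). The only cosmetic difference is bookkeeping: the paper obtains its single $R\llVert\bm{\Lambda}^{(t)}\rrVert$ term from $\langle\bm{\Lambda}^{(t)},\w^{(t)}-\w^{(t+1)}\rangle$, whereas you split the error into $\langle\bm{e}^{(t)},\w^*-\w^{(t)}\rangle$ plus a residual $\eta\langle\bm{e}^{(t)},\widehat{\bm{g}}^{(t)}\rangle$ that you absorb with Assumption~\ref{ass:f1}, which changes the final constant only.
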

\begin{proof}
The proof using convexity assumption as well as Lemmas~\ref{lemma:sconv-converg},~\ref{lemma:ge}, and~\ref{lemma:smooth} is provided in Appendix~\ref{app:theorem:convex_convergence}.
\end{proof}
\begin{remark}
The convergence inequality for convex objectives in~(\ref{eq:convex_convergence}) has two terms. The first term is the standard convex optimization term, and the second term comes from the error in finding the solution of the inner level at each iteration of the outer level.
\end{remark}
To achieve an $\epsilon$-accurate PEF solution for the outer level, we need to take $T=\order\left(\frac{1}{\epsilon}\right)$ steps of gradient descent. In this setting, the error of approximation for the inner level can be intensified by increasing the number of fairness objectives $m$. This is due to $A_\Psi = \order\left(\sqrt{m}\right)$.

Whenever the objective functions of the problem are not convex, following the standard non-convex optimization, we have:
\begin{theorem}[Non-convex Objectives]\label{theorem:nonconvex_convergence}
Let $\bm{\mathrm{h}}\left(\w\right) = \left[\mathrm{h}_1\left(\w\right), \ldots, \mathrm{h}_m\left(\w\right)\right]$ be the objective vector that follows the properties in Assumptions~\ref{ass:f},~\ref{ass:f1}, and~\ref{ass:f2}. Then the function $\Psi\left(\bm{\mathrm{h}}\left(\bm{w}\right),\al\right)$ is a smooth function with Lipschitz constant of $L_\Psi$ from Lemma~\ref{lemma:smooth}. Also, the inner function of optimization~(\ref{eq:bilevel}) has the properties in Assumptions~\ref{ass:inner},~\ref{ass:inner1}, and~\ref{ass:inner2}. Then, for the average squared norm of the gradient of the outer function in the sequence of solutions $\w^{(0)},\ldots, \w^{(T)}$, generated by Algorithm~\ref{alg:fairpareto}, by setting $\eta\leq 1/L_\Psi$, we have
\begin{align}
    \frac{1}{T}\sum_{t=0}^{T-1} \left\lVert \nabla\Psi\left(\bm{\mathrm{h}}\left(\bm{w}^{(t)}\right) \right. \right. & \left.\left. ,\al^*\left(\w^{(t)}\right)\right) \right\rVert^2 \nonumber\\
    &\leq \frac{2}{\eta T}\left(\Psi\left(\bm{\mathrm{h}}\left(\bm{w}^{(0)}\right),\al^*\left(\w^{(0)}\right)\right) - \Psi\left(\bm{\mathrm{h}}\left(\bm{w}^*\right),\al^*\left(\w^*\right)\right)\right) \nonumber \\
    &\quad + \frac{L_\Psi\eta A_\Psi^2}{ T}\exp\left(-\frac{K}{\kappa}\right)\sum_{t=0}^{T-1} \llVert \al^{(0)}\left(\w^{(t)}\right) - \al^*\left(\w^{(t)}\right)\rrVert^2\;.
    \label{eq:non-convex-converge}
\end{align}
\begin{proof}
The proof using smoothness assumption of the outer level in Lemma~\ref{lemma:smooth}, and also Lemmas~\ref{lemma:sconv-converg}, and~\ref{lemma:ge} is deferred to Appendix~\ref{app:theorem:nonconvex_convergence}.
\end{proof}
\begin{remark}
Using the convergence analysis of non-convex outer-level objective function in~(\ref{eq:non-convex-converge}), we can bound the minimum squared norm of its gradient to be less than $\epsilon$, with taking $\order\left(\frac{1}{\epsilon}\right)$ gradient steps. Note that, when this gradient is zero, we are in a Pareto stationary point of the problem.
\end{remark}
\end{theorem}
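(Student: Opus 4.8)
The plan is to read Algorithm~\ref{alg:fairpareto} as gradient descent on the outer objective with an \emph{inexact} gradient oracle, where the inexactness is precisely the error of stopping the inner loop after $K$ steps, and then to run the standard descent-lemma argument for smooth non-convex optimization with oracle error. First I would fix notation: set $\Psi^{(t)} := \Psi\!\left(\bm{\mathrm{h}}(\w^{(t)}),\al^*(\w^{(t)})\right)$; let $\bm{g}^{(t)} := \nabla\Psi\!\left(\bm{\mathrm{h}}(\w^{(t)}),\al^*(\w^{(t)})\right)$ be the exact outer gradient, $\widehat{\bm{g}}^{(t)} := \nabla\Psi\!\left(\bm{\mathrm{h}}(\w^{(t)}),\hat{\bm{\alpha}}^{(t)}\right)$ the gradient actually used in the update, and $\bm{e}^{(t)} := \widehat{\bm{g}}^{(t)} - \bm{g}^{(t)}$, so that the outer iteration is $\w^{(t+1)} = \w^{(t)} - \eta\,\widehat{\bm{g}}^{(t)}$.

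Next, by Lemma~\ref{lemma:smooth} the outer objective $\w\mapsto\Psi(\bm{\mathrm{h}}(\w),\al^*(\w))$ is $L_\Psi$-smooth, so I would apply the descent lemma along the update and decompose the inner product with the polarization identity $\langle \bm{g}^{(t)},\widehat{\bm{g}}^{(t)}\rangle = \tfrac12\|\bm{g}^{(t)}\|^2 + \tfrac12\|\widehat{\bm{g}}^{(t)}\|^2 - \tfrac12\|\bm{e}^{(t)}\|^2$, obtaining
\begin{equation*}
\Psi^{(t+1)} \;\le\; \Psi^{(t)} - \frac{\eta}{2}\|\bm{g}^{(t)}\|^2 + \frac{\eta}{2}\|\bm{e}^{(t)}\|^2 - \frac{\eta}{2}\bigl(1 - L_\Psi\eta\bigr)\|\widehat{\bm{g}}^{(t)}\|^2 .
\end{equation*}
The step-size restriction $\eta \le 1/L_\Psi$ makes the last term nonpositive and droppable, leaving the clean per-step decrease $\Psi^{(t+1)} \le \Psi^{(t)} - \tfrac{\eta}{2}\|\bm{g}^{(t)}\|^2 + \tfrac{\eta}{2}\|\bm{e}^{(t)}\|^2$ (the exact coefficient on $\|\bm{e}^{(t)}\|^2$ recorded in the theorem comes from instead retaining $\tfrac{L_\Psi\eta^2}{2}\|\widehat{\bm{g}}^{(t)}\|^2$ and bounding it against $\|\bm{g}^{(t)}\|^2$ and $\|\bm{e}^{(t)}\|^2$). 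Summing over $t=0,\dots,T-1$, rearranging, and lower-bounding $\Psi^{(T)} \ge \Psi(\bm{\mathrm{h}}(\w^*),\al^*(\w^*))$ gives
\begin{equation*}
\frac1T\sum_{t=0}^{T-1}\|\bm{g}^{(t)}\|^2 \;\le\; \frac{2}{\eta T}\Bigl(\Psi(\bm{\mathrm{h}}(\w^{(0)}),\al^*(\w^{(0)})) - \Psi(\bm{\mathrm{h}}(\w^*),\al^*(\w^*))\Bigr) + \frac{1}{T}\sum_{t=0}^{T-1}\|\bm{e}^{(t)}\|^2 .
\end{equation*}

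To close, I would bound the accumulated oracle error. Lemma~\ref{lemma:ge} gives $\|\bm{e}^{(t)}\| \le A_\Psi\,\|\hat{\bm{\alpha}}^{(t)} - \al^*(\w^{(t)})\|$, and Lemma~\ref{lemma:sconv-converg} — which applies because $m < d$ makes the inner objective $\mu_\Phi$-strongly convex and the inner step size obeys $\rho \le 2/L_\Phi$ — gives $\|\hat{\bm{\alpha}}^{(t)} - \al^*(\w^{(t)})\|^2 \le \exp(-K/\kappa)\,\|\al^{(0)}(\w^{(t)}) - \al^*(\w^{(t)})\|^2$. Substituting both into the telescoped bound produces the claimed inequality, and the accompanying remark is immediate: choosing $K = \order(\kappa\log(1/\epsilon))$ renders the error term $\order(\epsilon)$, so $T = \order(1/\epsilon)$ outer steps force $\min_t\|\bm{g}^{(t)}\|^2 \le \epsilon$, i.e.\ convergence to an $\epsilon$-approximate Pareto stationary point.

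The hard part is not the telescoping, which is routine, but the bookkeeping for the inexact inner solve. The inner error is measured against the \emph{moving} target $\al^*(\w^{(t)})$, so converting it into outer-gradient error genuinely requires Lemma~\ref{lemma:ge}, which itself rests on the implicit-function expression~(\ref{eq:grad2}) for $\nabla\bm{\alpha}^*(\w)$ and on the boundedness/Lipschitz hypotheses on $\nabla^2_{\w\al}\Phi$ in Assumption~\ref{ass:inner2}; one must then carry the constants $A_\Psi$ and $L_\Psi$ (each of order $\sqrt m$) through carefully to land on the stated form. A secondary point worth verifying is that the Euclidean projection onto $\Delta_m$ inside the inner loop does not spoil the geometric contraction of Lemma~\ref{lemma:sconv-converg}; it does not, since projected gradient descent on a strongly convex smooth function over a closed convex set contracts at the same rate as the unprojected iteration.
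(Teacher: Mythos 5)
Your proposal follows the same skeleton as the paper's Appendix~\ref{app:theorem:nonconvex_convergence} argument: descent lemma for the $L_\Psi$-smooth outer objective, interpretation of the update as gradient descent with the inexact gradient $\nabla\Psi(\bm{\mathrm{h}}(\w^{(t)}),\hat{\al}^{(t)})$, control of the oracle error via Lemma~\ref{lemma:ge} composed with Lemma~\ref{lemma:sconv-converg}, and telescoping. The one place you genuinely diverge is the treatment of the cross term, and your version is the more careful one. The paper expands $\nabla\Psi(\w^{(t)},\hat{\al}) = \nabla\Psi(\w^{(t)},\al^*) + \bm{\Lambda}^{(t)}$ and then, in passing from~(\ref{eq:theorem:nonconvex_convergence2}) to~(\ref{eq:theorem:nonconvex_convergence3}), simply discards $-\left(\eta - L_\Psi\eta^2\right)\left\langle \nabla\Psi\left(\w^{(t)},\al^*\left(\w^{(t)}\right)\right), \bm{\Lambda}^{(t)}\right\rangle$; since $\eta - L_\Psi\eta^2 \geq 0$ under $\eta \leq 1/L_\Psi$ but the inner product can be negative, this term can be positive and dropping it is not justified as written. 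Your polarization identity handles that inner product exactly (equivalently, applying Young's inequality to the paper's cross term lands in the same place), and yields the rigorous per-step bound $\Psi^{(t+1)} \leq \Psi^{(t)} - \tfrac{\eta}{2}\lVert \bm{g}^{(t)}\rVert^2 + \tfrac{\eta}{2}\lVert \bm{e}^{(t)}\rVert^2$. The price is that after telescoping your error coefficient is $\tfrac{A_\Psi^2}{T}\exp(-K/\kappa)$ rather than the stated $\tfrac{L_\Psi \eta A_\Psi^2}{T}\exp(-K/\kappa)$; since $L_\Psi\eta \leq 1$ this is a weaker constant, so you prove a marginally looser inequality than the literal statement — but it is of identical form, supports the same $\order(1/\epsilon)$ remark, and has the virtue of actually being derivable. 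Your parenthetical about recovering the paper's exact coefficient by bounding $\tfrac{L_\Psi\eta^2}{2}\lVert\widehat{\bm{g}}^{(t)}\rVert^2$ differently does not obviously work out (that route degrades the coefficient on $\lVert\bm{g}^{(t)}\rVert^2$ instead), so I would present the $\tfrac{A_\Psi^2}{T}$ version as the honest bound. Your closing observations — that the inner error is measured against the moving target $\al^*(\w^{(t)})$ and that projection onto $\Delta_m$ preserves the linear contraction of Lemma~\ref{lemma:sconv-converg} — are both correct and worth stating, as the paper glosses over the projection step entirely.
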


\section{Experimental Results}\label{sec:exp}

In this section, we empirically examine the efficacy of the proposed algorithms. The experiments are designed to answer the following questions:
\begin{enumerate}
    \item How the proposed Pareto descent algorithm performs compared to a normal classifier in minimizing the empirical risk and fairness violation with a binary sensitive feature?
    \item How the proposed Pareto descent algorithm performs compared to a normal classifier when there is a multiple group sensitive feature?
    \item How the proposed Pareto descent algorithm performs compared to state-of-the-art algorithms for fairness-aware learning, in particular, the fairness constrained empirical risk minimization proposed in~\cite{donini2018empirical} and reduction to minimax optimization proposed in~\cite{agarwal2018reductions}?
    \item How the proposed preference-based Pareto descent optimization can find solutions from different parts of the Pareto frontier? How it performs compared to a state-of-the-art algorithm proposed in~\cite{agarwal2018reductions}?
\end{enumerate}
In all the experiments, we conduct multiple runs and report the average and the respective variance if applicable. To compare with both state-of-the-art algorithms (FERM~\citep{donini2018empirical} and minimax reduction~\citep{agarwal2018reductions}) we run algorithms using linear SVM as the main objective function because FERM is designed for this objective. In the fourth part, since we are finding the points from the Pareto frontier and only the minimax reduction approach~\citep{agarwal2018reductions} claims to find such points, we compare with their algorithm using linear SVM and logistic regression as the main objective function. Nonetheless, our proposed algorithms are not bounded to any objective functions and can be implemented in any setting.

\paragraph{Datasets} We will use two real-world datasets: The Adult income dataset\footnote{\url{https://archive.ics.uci.edu/ml/datasets/Adult}} and the COMPAS dataset~\citep{propublica}. The meta-data for these datasets is provided in Appendix~\ref{app:exp}. In the Adult dataset, the goal is to predict whether the income of each person is greater or less than $\$50$K per year, based on census data. In the COMPAS dataset, the task is to predict whether the criminal defendant would recidivate within the next two years or not based on historical data. In both datasets, the positive outcome (have income more than $\$50$K per year in the Adult dataset or not recidivate within the next two years in the COMPAS dataset) is considered beneficial. In the Adult dataset, we have two sensitive features, \texttt{gender}, and \texttt{race}, where gender is a binary feature, while race in this dataset is a multiple-group feature with 5 categories. In the COMPAS dataset, also, we have two sensitive features, \texttt{sex}, and \texttt{race}, both of which are binary sensitive features.

\paragraph{\ding{182}~Binary sensitive feature: Pareto vs. Normal} In the first set of experiments, we examine the effectiveness of the proposed algorithm in satisfying the Pareto efficient equality of opportunity as defined in Section~\ref{sec:peeo}. Note that, under this condition, the goal is to achieve equal true positive rates among sensitive groups. We apply Algorithm~\ref{alg:fairpareto} to the Adult dataset with gender as the sensitive feature and also the COMPAS dataset with race and sex as its sensitive features. The results are shown in Figure~\ref{fig:result} for our algorithm compared to normal training using a linear SVM, which indicates that the proposed algorithm can superbly satisfy the notion of equality of opportunity. 
As it can be inferred from the first column, in all three experiments the total (and also each sensitive group) accuracy for our proposed algorithm is almost the same as the normal training one. However, in the second column, we can see that our algorithm has equal true positive rates among sensitive groups, while the gap between different groups is huge for normal training. This is achieved by the proposed algorithm, with almost the same total true positive rate as the normal training except for the first one (the Adult dataset). In this case, this degradation is compensated by smaller false positive rates for the proposed algorithm compared to the normal training. Also note that the Pareto descent algorithm achieves equal false positive rates among sensitive groups, to some degree, which is not in the objectives of the  equality of opportunity.

\begin{figure}[t]
    \centering
    \subfigure[Adult (gender)]{
		\centering
		\includegraphics[width=0.98\textwidth]{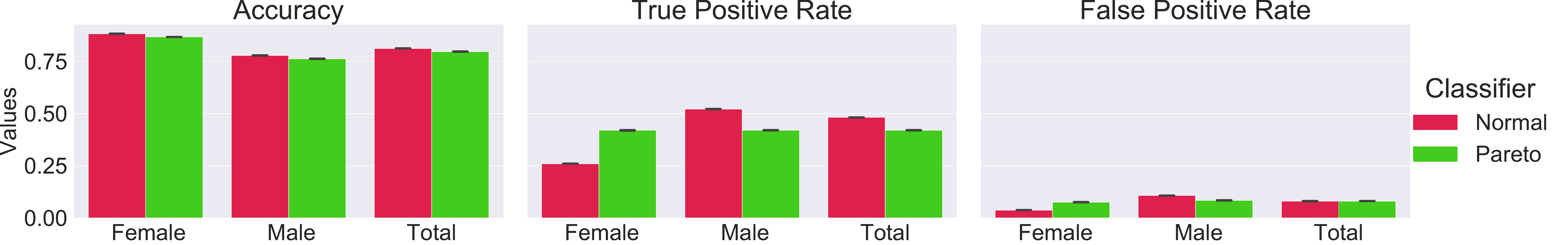}
		\label{fig:adult_all_9}
		}

	\subfigure[COMPAS (race)]{
	\vspace{0.5cm}
		\centering
		\includegraphics[width=0.98\textwidth]{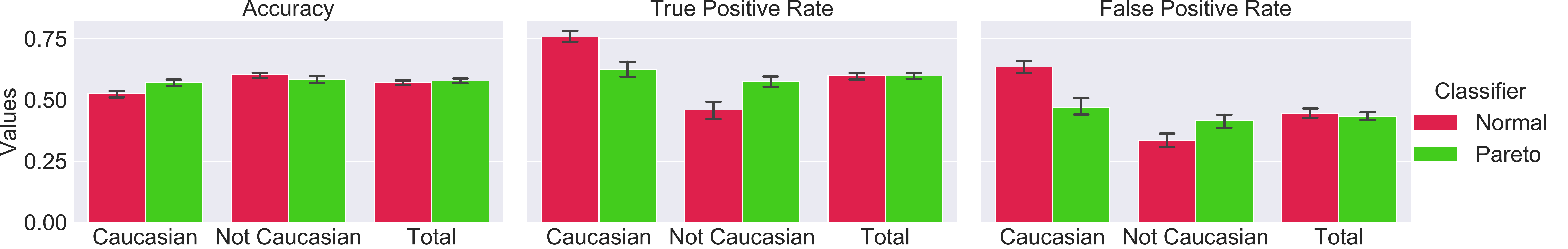}
		\label{fig:compas_all_2}\vspace{-0.3cm}
	}
	\subfigure[COMPAS (sex)]{
		\vspace{0.5cm}
		\centering
		\includegraphics[width=0.98\textwidth]{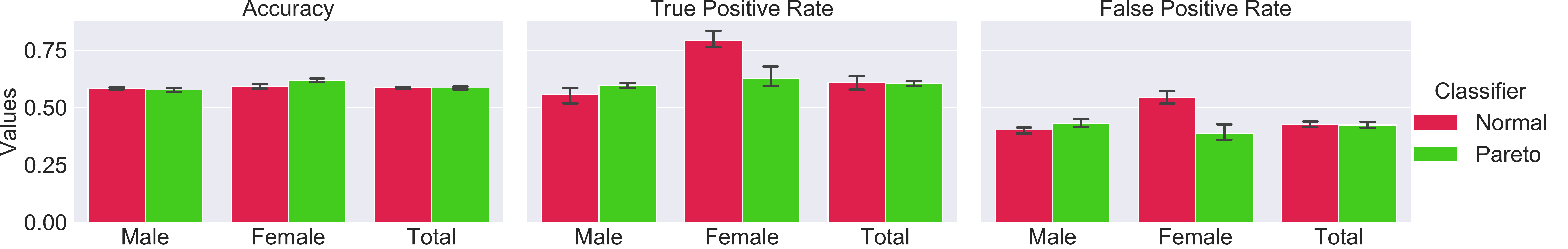}
		\label{fig:compas_all_1}
	}
    
    \caption{The results of applying Algorithm~\ref{alg:fairpareto} to satisfy the Pareto efficient equality of opportunity for (a) Adult dataset with gender as a sensitive feature, (b) COMPAS dataset with race as a sensitive feature (c) COMPAS dataset with sex as a sensitive feature. The drop in accuracy is small in all cases, while the goal of equal opportunity is satisfied perfectly. By looking at false positive rates, it seems that this notion is capable of satisfying equalized odds as well as equality of opportunity.} 
    \label{fig:result}
\end{figure}

\paragraph{\ding{183}~Multiple group sensitive feature: Pareto vs. Normal} In Figure~\ref{fig:res-multi}, we show the results for applying the proposed algorithm to the Adult dataset with considering its multiple group sensitive feature, race. As it can be inferred, our algorithm can superbly satisfy the fairness constraints, while maintaining high accuracy close to the baseline. The figure in the middle shows that the Pareto descent algorithm achieves equal, and also higher, true positive rates among sensitive groups, though this comes at the price of slightly increasing false positive rates for them.
Note that, for this case, we have $10$ fairness objectives in addition to the main learning objective, which demonstrates the power of the proposed algorithm in finding an optimal point in terms of compromises between objectives. On the other hand, the applications of FERM~\citep{donini2018empirical} to multiple group sensitive features are not straightforward. Also, applying the minimax reduction approach~\citep{agarwal2018reductions} to multiple group sensitive features is heavily computationally expensive. 
\begin{figure}[t]
    \centering
    \includegraphics[width=\textwidth]{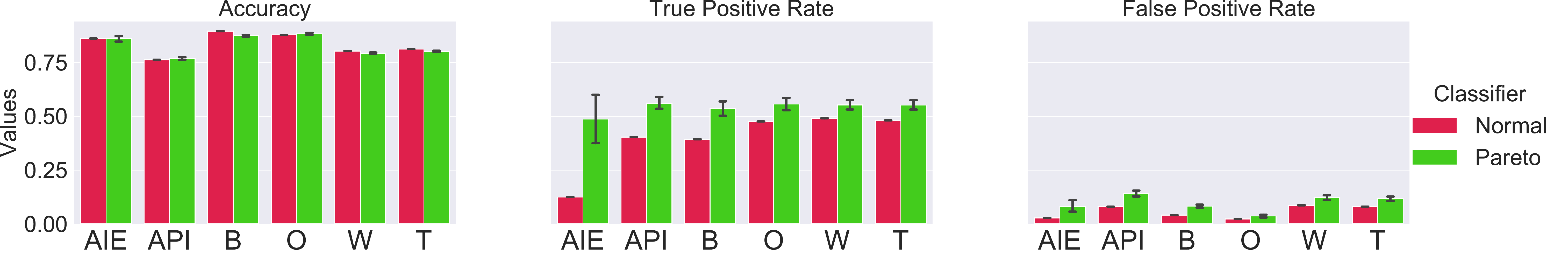}
    \caption{Adult dataset with race as its sensitive feature, which has 5 groups. In this case, our proposed algorithm can superbly satisfy the property of Pareto efficient equality of opportunity, even though we have 11 objectives in the minimization. The groups are AIE: Amer-Indian-Eskimo, API: Asian-Pac-Islander, B: Black, O: Other, W: White, and T represents the total.}
    \label{fig:res-multi}
\end{figure}

\paragraph{\ding{184}~Pareto vs. the state of the art}
From a more general perspective, the overarching goal of fairness-aware learning algorithms is to satisfy multiple objectives at the same time before, during, or after training a model. This view has been reflected as a constraint optimization problem at the algorithmic level in various approaches. In general, this optimization can be written as:
\begin{equation}
\begin{array}{rrclcl}
\displaystyle \min_{\bm{w}} & \multicolumn{3}{l}{\mathcal{L}(\bm{w};\mathcal{D})} \\
\textrm{s.t.} & \mathrm{h}_i\left(\bm{w};\mathcal{D}\right) & \leq & \epsilon_i, \quad \forall i \in [m], \\
\end{array}
\label{eq:cons_op}
\end{equation}
where $\mathrm{h}_i\left(\bm{w};\mathcal{D}\right)$ is the $i$th fairness constraint we required to satisfy. In the quest for achieving fairness using constrained optimization in~(\ref{eq:cons_op}), two well-known studies by~\cite{donini2018empirical} and~\cite{agarwal2018reductions} introduced their frameworks using different methods. They attempted to solve this constrained optimization problem either by linear approximation of their constraints or forming the Lagrangian function to solve the saddle point minimax problem. In the latter,~\cite{agarwal2018reductions} used a grid search technique for binary sensitive feature problems to find the best weights for constraints in the Lagrangian objective. The greatest challenge in these approaches is to set the violation parameter $\epsilon_i$ for each fairness constraint in the optimization. Also, finding the weights for each constraint in these approaches does not guarantee the Pareto efficiency of the solution. Not to mention the high computational demand of approaches such as grid search in~\cite{agarwal2018reductions}.

Now, we can compare the results generated with Algorithm~\ref{alg:fairpareto} with these two state-of-the-art algorithms for satisfying fairness measures, known as FERM~\citep{donini2018empirical} and minimax reduction~\citep{agarwal2018reductions}. To compare with both, we run the experiments using a linear SVM loss function since FERM is designed for this loss function. We also use the DEO to show fairness measures in addition to accuracy on the test dataset. Table~\ref{tab:res} summarizes the results we got from applying a normal linear SVM without fairness constraints, FERM linear SVM, minimax reduction on linear SVM, and our algorithm with gradient descent implementation of SVM. For the minimax reduction algorithm~\citep{agarwal2018reductions}, each grid search learns 100 different classifiers, and the best non-dominated ones are selected. The numbers reported in Table~\ref{tab:res} are the average of best non-dominated points for several grid-search runs in terms of DEO.  Overall, it shows that our algorithm can achieve a superb DEO compared to FERM and minimax reduction while having a better accuracy as well. The results show that in all three datasets our algorithm's results dominate the solution of both state-of-the-art algorithms.

\begin{table}[t]
\centering
\resizebox{0.8\textwidth}{!}{%
\begin{tabular}{c|c|c|c|c|c|c|}
\cline{2-7}
 & \multicolumn{2}{c|}{Adult~(gender)} & \multicolumn{2}{c|}{COMPAS~(sex)} & \multicolumn{2}{c|}{COMPAS~(race)} \\ \cline{2-7} 
 & Acc & DEO & ACC & DEO  & ACC & DEO  \\ \hline
\multicolumn{1}{|c|}{Normal} & ${0.8123}$ & $0.2615$ & \begin{tabular}[c]{@{}c@{}}$0.6037$\\ $\pm0.0058$\end{tabular} & \begin{tabular}[c]{@{}c@{}}$0.2643$\\ $\pm0.0269$\end{tabular} & \begin{tabular}[c]{@{}c@{}}$0.6037$\\ $\pm0.0058$\end{tabular} & \begin{tabular}[c]{@{}c@{}}$0.2643$\\ $\pm0.0269$\end{tabular} \\ \hline
\multicolumn{1}{|c|}{\begin{tabular}[c]{@{}c@{}}FERM \\ \citep{donini2018empirical}\end{tabular}} & $0.7937$ & $0.0173$ & \begin{tabular}[c]{@{}c@{}}$0.5126$\\ $\pm0.0489$\end{tabular} & \begin{tabular}[c]{@{}c@{}}$0.09337$\\ $\pm0.0724$\end{tabular} & \begin{tabular}[c]{@{}c@{}}$0.5862$\\ $\pm0.0186$\end{tabular} & \begin{tabular}[c]{@{}c@{}}$0.0877$\\ $\pm0.0645$\end{tabular}\\ \hline
\multicolumn{1}{|c|}{\begin{tabular}[c]{@{}c@{}} Minimax Reduction \\ \citep{agarwal2018reductions}\end{tabular}} & \begin{tabular}[c]{@{}c@{}}$0.7748$\\ $\pm0.0002$\end{tabular} & \begin{tabular}[c]{@{}c@{}}$0.0335$\\ $\pm0.0169$\end{tabular} & \begin{tabular}[c]{@{}c@{}}$0.5805$\\ $\pm0.0047$\end{tabular} & \begin{tabular}[c]{@{}c@{}}$0.0378$\\ $\pm0.0230$\end{tabular} & \begin{tabular}[c]{@{}c@{}}$0.5833$\\ $\pm0.0087$\end{tabular} & \begin{tabular}[c]{@{}c@{}}$0.0264$\\ $\pm0.0212$\end{tabular} \\ \hline
\multicolumn{1}{|c|}{\begin{tabular}[c]{@{}c@{}}\texttt{PDO}\\ (ours)\end{tabular}} & \begin{tabular}[c]{@{}c@{}}$0.7969$\\ $\pm0.0003$\end{tabular} & \begin{tabular}[c]{@{}c@{}}\textcolor{dodgerblue}{$\bm{0.0019}$}\\ \textcolor{dodgerblue}{$\bm{\pm0.0025}$}\end{tabular} & \begin{tabular}[c]{@{}c@{}}${0.5871}$\\ $\pm0.0216$\end{tabular} & \begin{tabular}[c]{@{}c@{}}\textcolor{dodgerblue}{$\bm{0.0354}$}\\ \textcolor{dodgerblue}{$\bm{\pm0.0293}$}\end{tabular}  & \begin{tabular}[c]{@{}c@{}}$0.6029$\\ $\pm0.0117$\end{tabular} & \begin{tabular}[c]{@{}c@{}}\textcolor{dodgerblue}{$\bm{0.0078}$}\\ \textcolor{dodgerblue}{$\bm{\pm0.0092}$}\end{tabular}\\ \hline
\end{tabular}%
}
\vskip 0.1cm
\caption{Comparison of the proposed \texttt{PDO} algorithm with FERM~\citep{donini2018empirical} and the minimax reduction~\citep{agarwal2018reductions}. Experiments are repeated multiple times with reporting the average and variance. Blue colors show the best results for the DEO measure, which in all cases belong to the proposed Pareto descent algorithm. Moreover, in terms of accuracy, the Pareto descent algorithm outperforms both state-of-the-art algorithms.}\label{tab:res}
\end{table}

\paragraph{\ding{185}~Pareto frontier: Pareto vs. Minimax Reduction~\citep{agarwal2018reductions}} 
Using the proposed algorithm \texttt{PB-PDO}, we can extract the points from the Pareto frontier of the vector objective we are minimizing. In~\cite{agarwal2018reductions} also, authors claim that using their approach they can find some points from the Pareto frontier of the problem. However, they are in fact extracting some non-dominated points from multiple runs of their algorithm that are not necessarily on the Pareto frontier. In this part, we show how our algorithm performs on extracting the Pareto frontier and compare it to the points found by the minimax reduction approach. We run the \texttt{PB-PDO} algorithm 10 times, for each dataset and each time with different preference vector, then we find the points that are not dominated in the trajectory. We set both $\epsilon_1$ and $\epsilon_2$ to $1e^{-2}$ for the \texttt{PB-PDO} in all algorithms. For the minimax reduction, we run their algorithm to learn 200 classifiers and then find the non-dominated points for reporting. 

We apply these algorithms to linear SVM and logistic regression. Figure~\ref{fig:pf-data_SVM} shows the results for applying both algorithms using linear SVM on the Adults dataset with gender and COMPAS with sex and race as their sensitive features. Figure~\ref{fig:pf-data} shows the same results for both algorithms using logistic regression. In both figures, the first column is showing the trade-off between the main loss and the fairness loss (here equality of opportunity as defined in~(\ref{eq:eo_obj})). The proposed \texttt{PB-PDO} algorithm is training the Pareto frontier for this trade-off. Then, we map the points on this space (loss-loss) to the error versus DEO space. It is worth mentioning that this mapping is from a continuous space to a non-continuous one, and hence, we will lose some of the points as they map to the same point or getting dominated by other points in the error-DEO space. This is exacerbated when the dataset is smaller like the COMPAS dataset. The second and the third columns are showing this mapping on the training and test dataset, respectively. From the results in Figures~\ref{fig:pf-data_SVM} and~\ref{fig:pf-data}, it is clear that almost all the points found by the minimax reduction approach as their non-dominated solutions are dominated by the points found by our proposed Pareto descent optimization approach. Also, an interesting observation in Figure~\ref{fig:pf-data_SVM}, last row for the COMPAS dataset with race as a sensitive feature, is that there is almost no trade-off between accuracy and DEO. Hence, we can decrease the error rate without too much increase in DEO.

\begin{figure*}[t!]
	\centering
	\subfigure{
		\centering
		\makebox[20pt]{\raisebox{40pt}{\rotatebox[origin=c]{90}{Adult (gender)}}}\hspace{3pt}
		\includegraphics[width=0.285\textwidth]{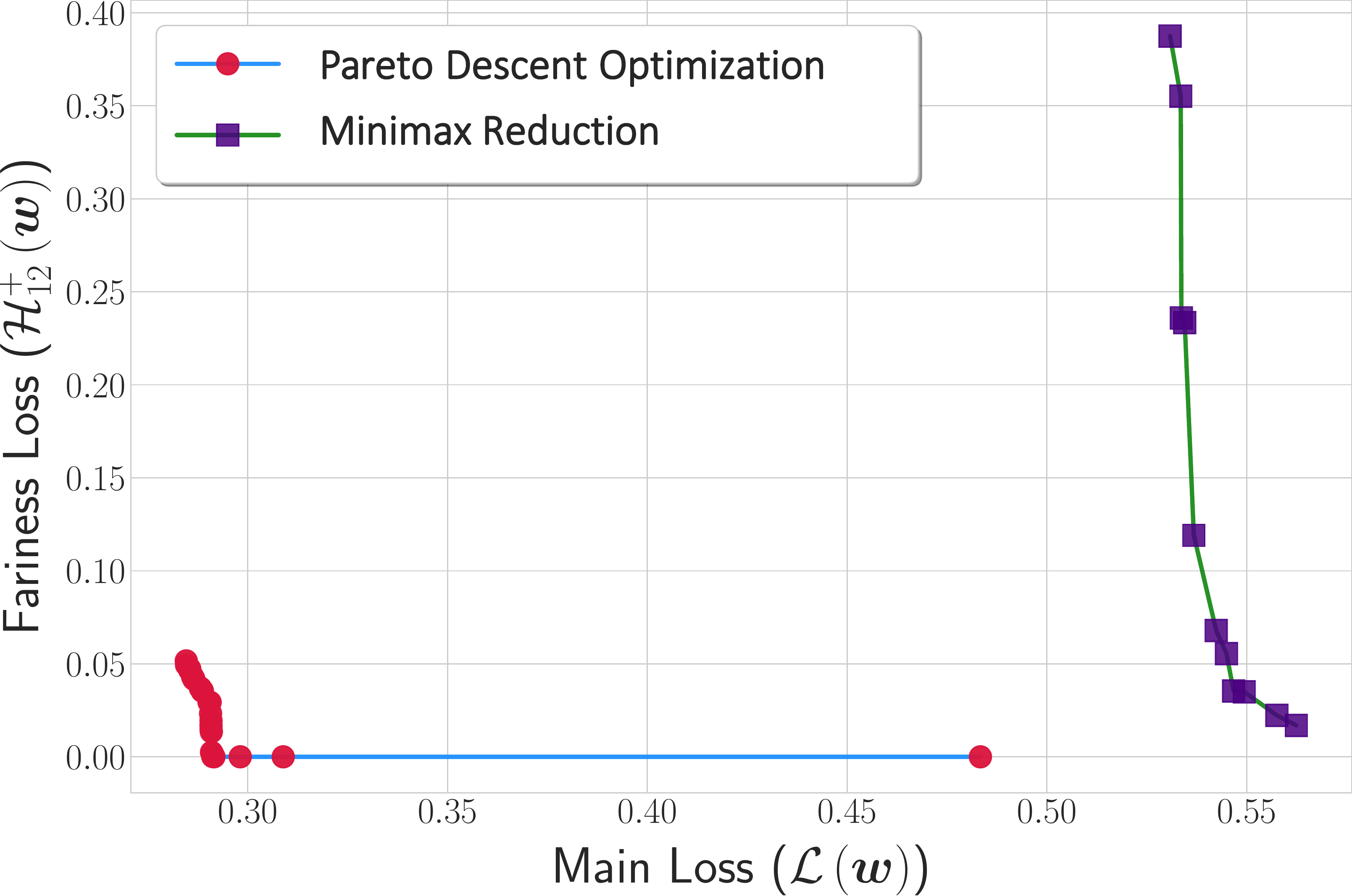}
		\label{fig:pf_loss_adult_SVM}
		}
		\hfill
		\subfigure{
			\centering 
			\includegraphics[width=0.285\textwidth]{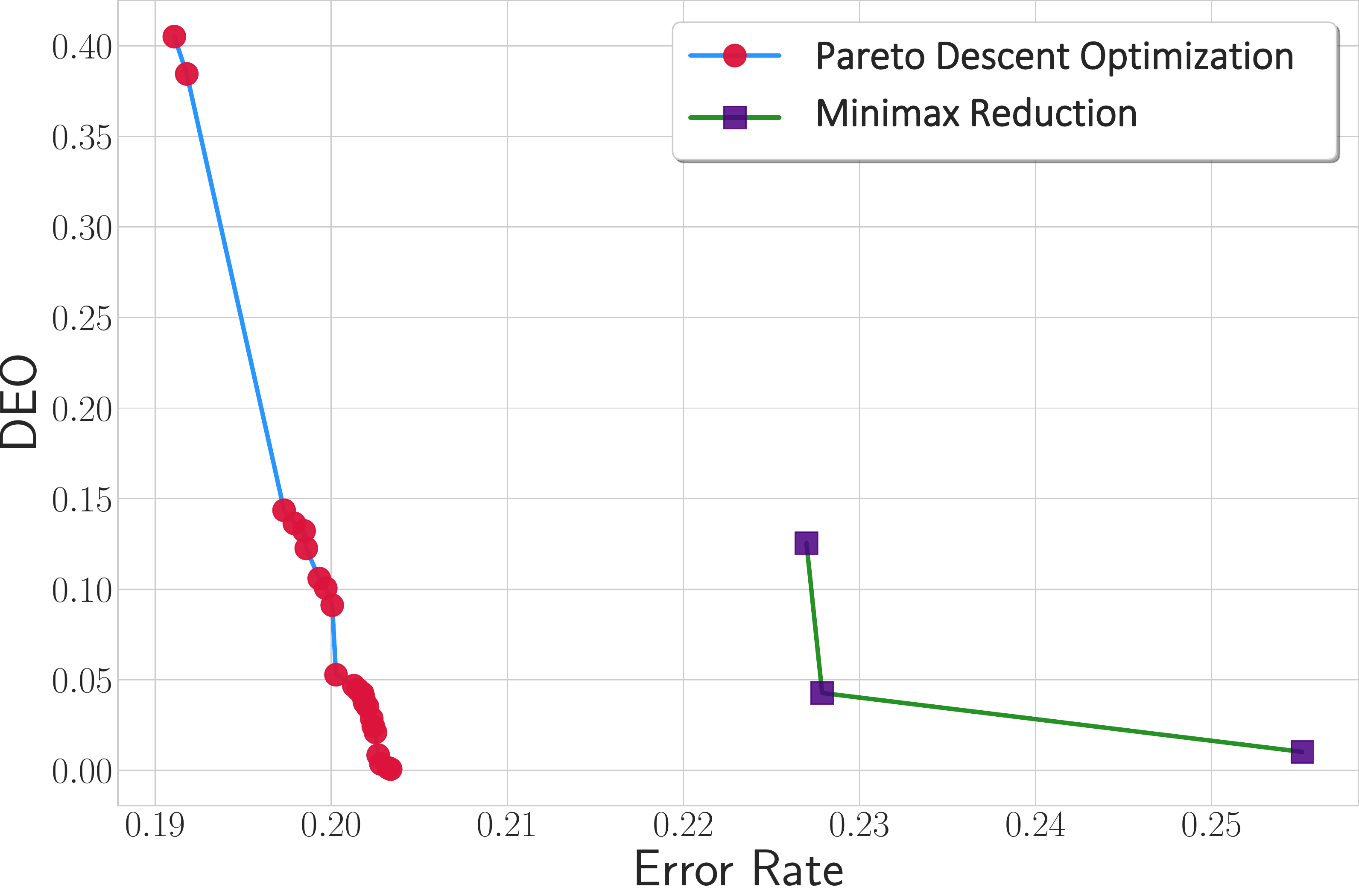}
			\label{fig:pf_err_train_adult_SVM}
			}
			\hfill
	\subfigure{  
		\centering 
		\includegraphics[width=0.285\textwidth]{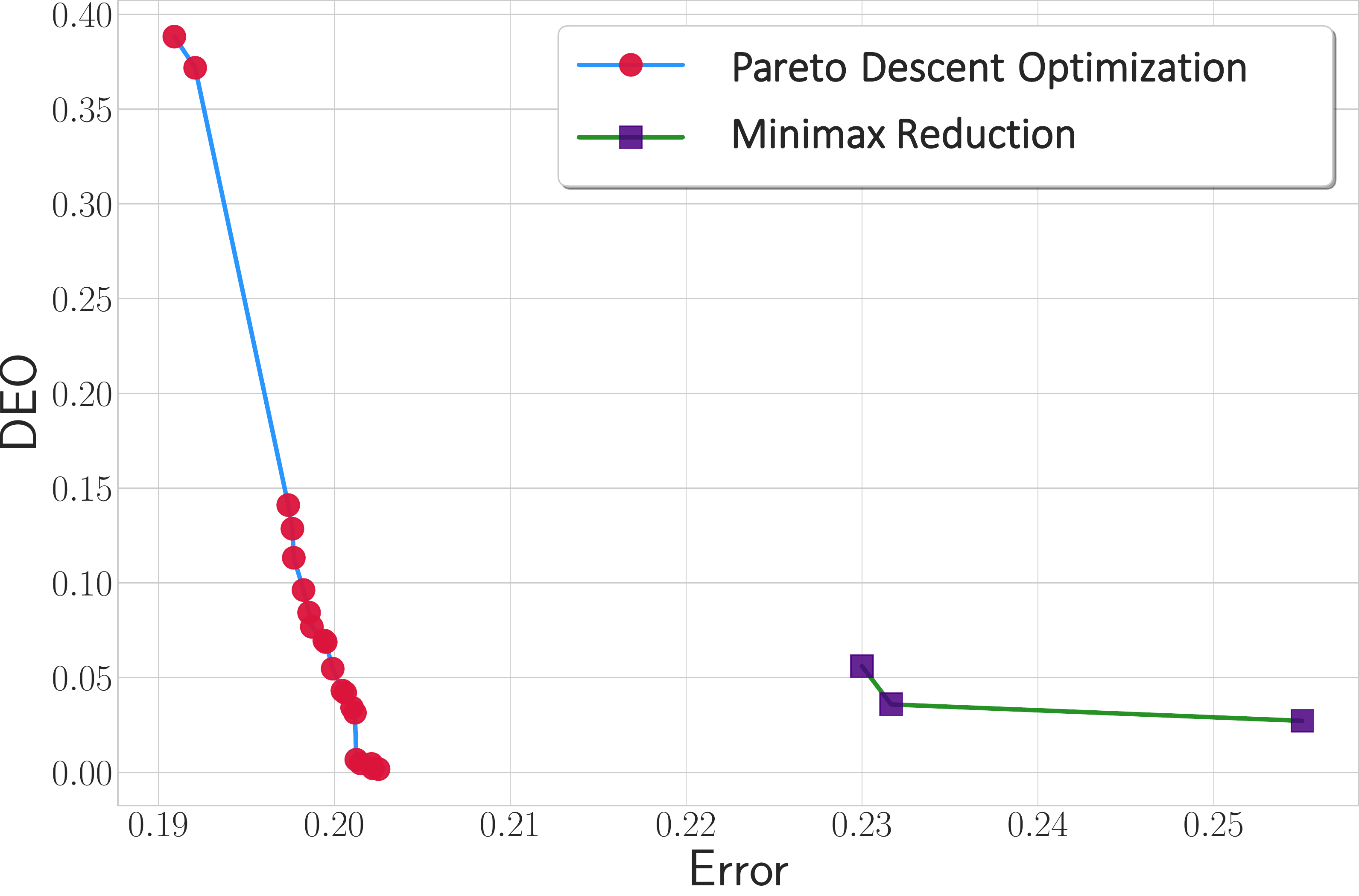}
		\label{fig:pf_err_test_adult_SVM}
		}
	
	\subfigure{
		\centering
		\makebox[20pt]{\raisebox{40pt}{\rotatebox[origin=c]{90}{COMPAS (sex)}}}\hspace{3pt}
		\includegraphics[width=0.285\textwidth]{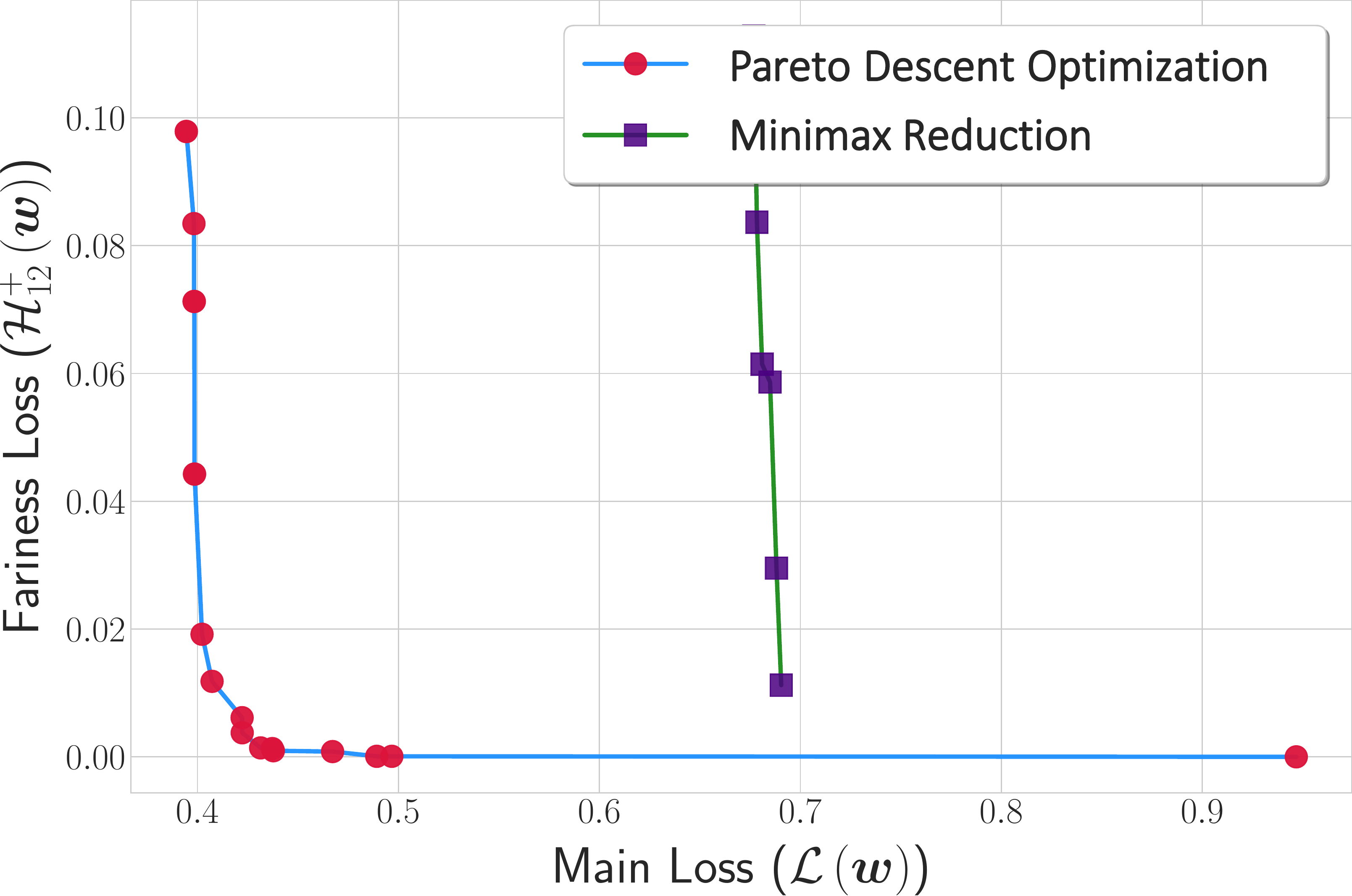}
		\label{fig:pf_loss_compas_1_SVM}
		}
		\hfill
		\subfigure{
			\centering 
			\includegraphics[width=0.285\textwidth]{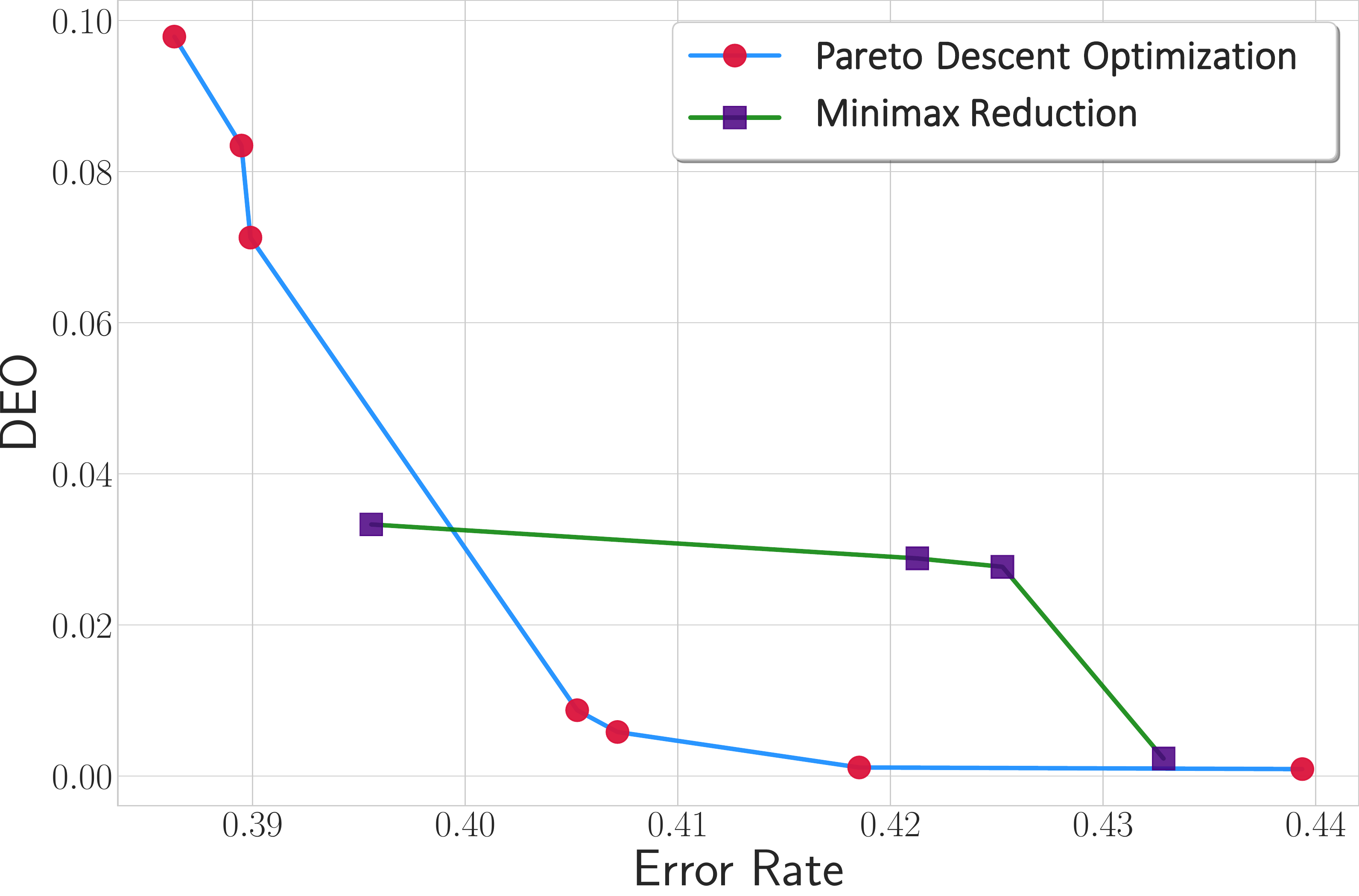}
			\label{fig:pf_err_train_compas_1_SVM}
			}
			\hfill
	\subfigure{  
		\centering 
		\includegraphics[width=0.285\textwidth]{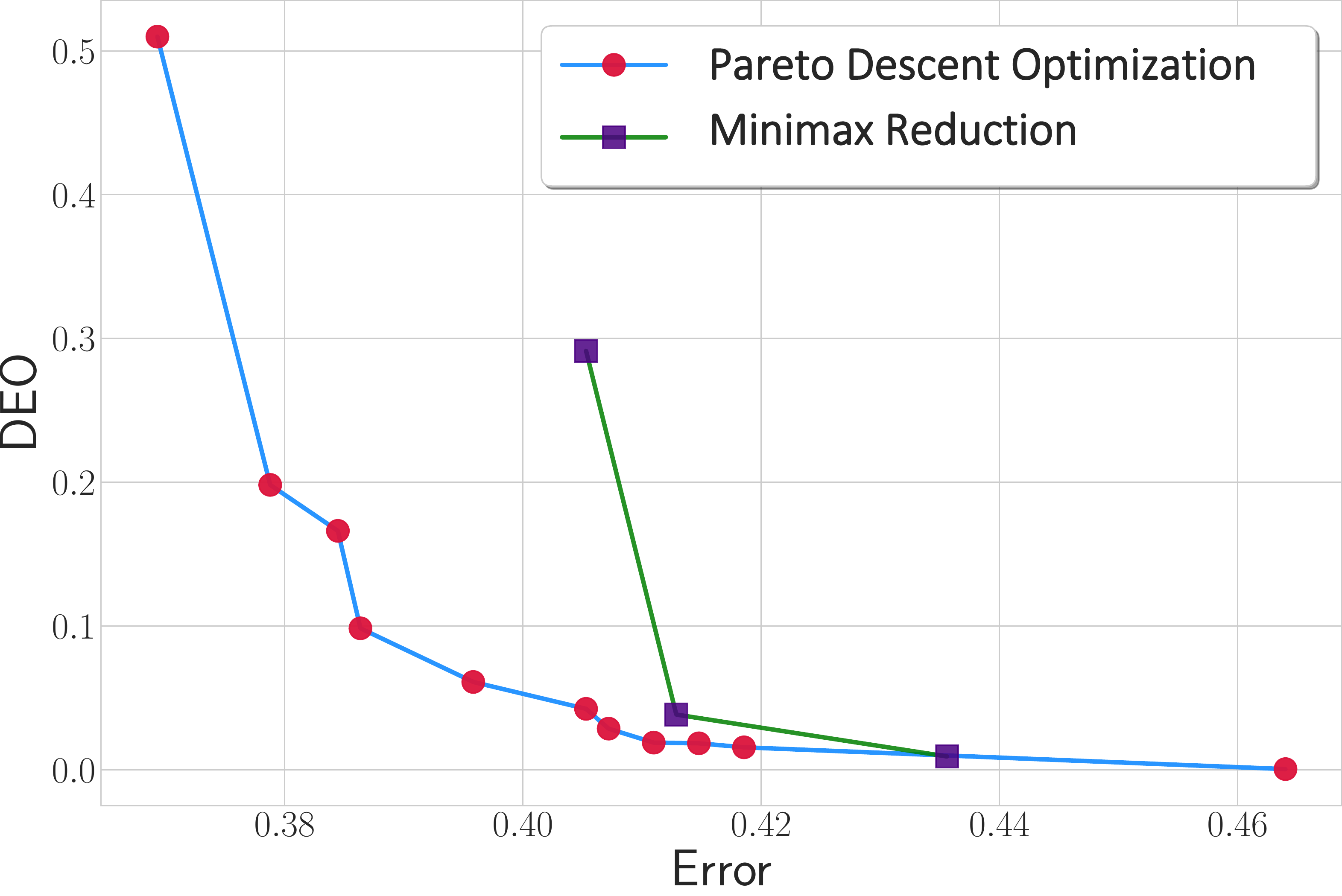}
		\label{fig:pf_err_test_compas_1_SVM}
		}
		
	\setcounter{subfigure}{0}
	\subfigure[Loss-Loss trade-off]{
		\centering
		\makebox[20pt]{\raisebox{40pt}{\rotatebox[origin=c]{90}{COMPAS (race)}}}\hspace{3pt}
		\includegraphics[width=0.285\textwidth]{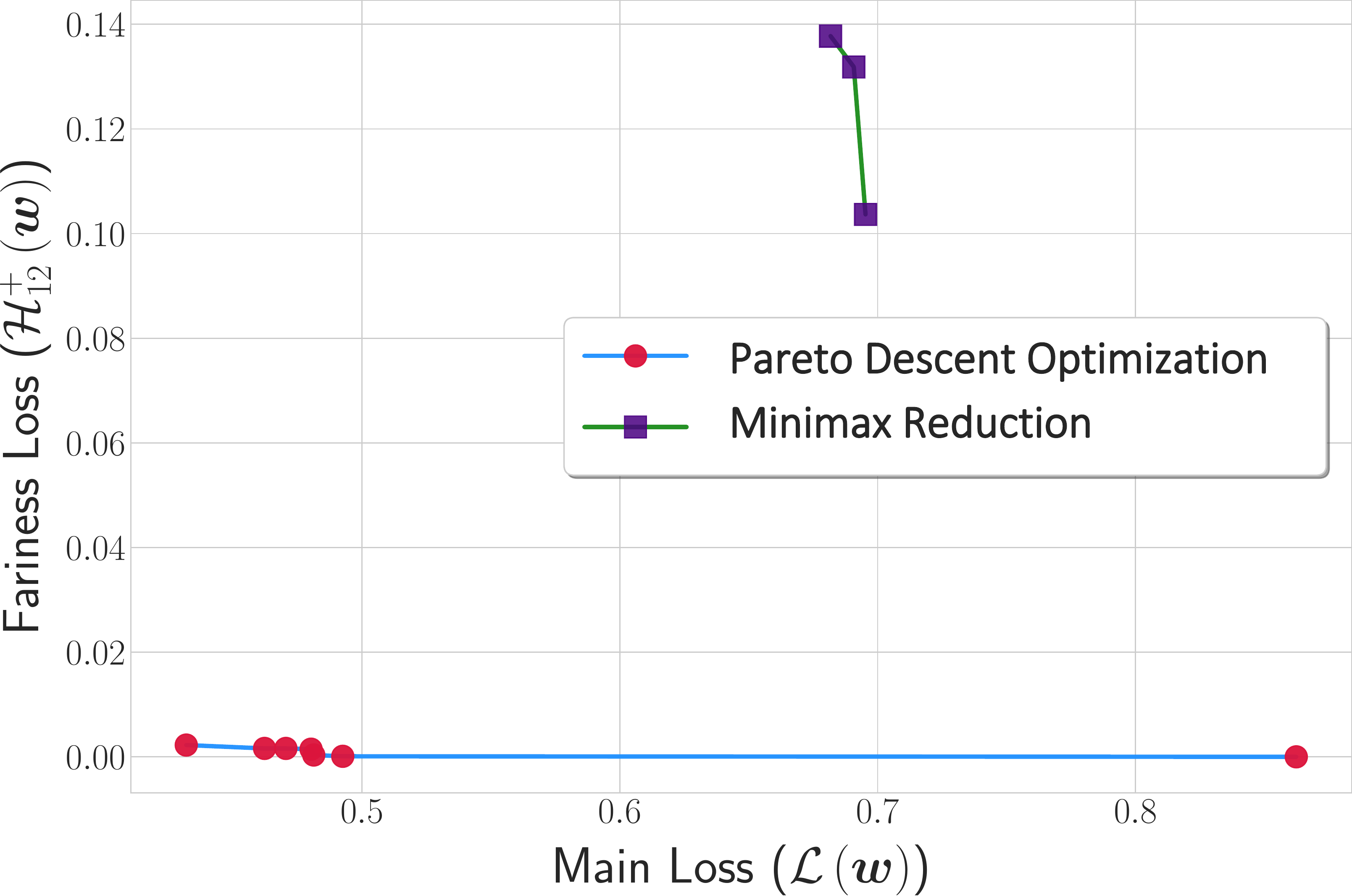}
		\label{fig:pf_loss_compas_2_SVM}
	}
	\hfill
	\subfigure[Error-DEO trade-off (train)]{  
			\centering 
			\includegraphics[width=0.285\textwidth]{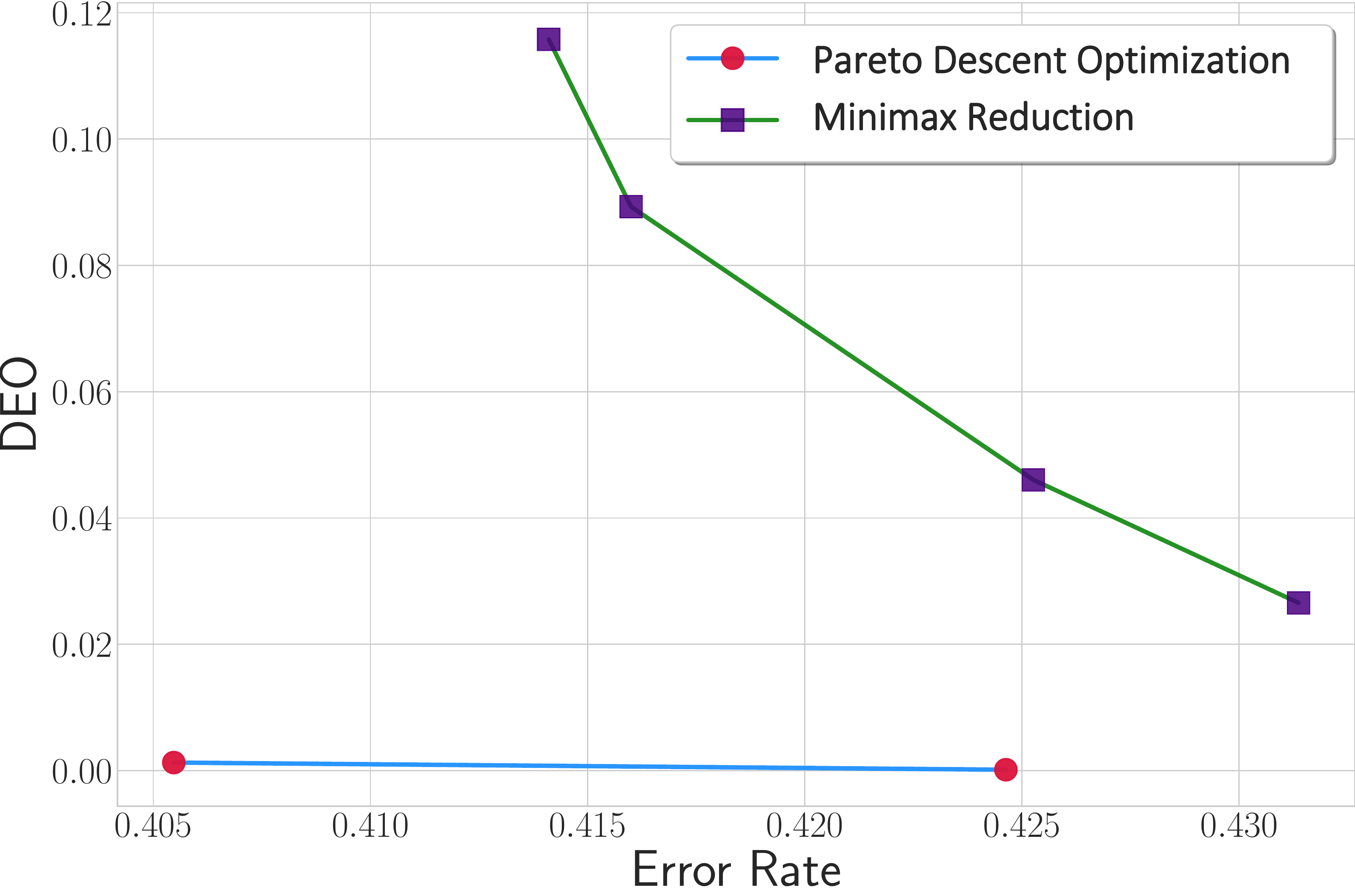}
		\label{fig:pf_err_compas_2_SVM}
	}
	\hfill
	\subfigure[Error-DEO trade-off (test)]{   
		\centering 
		\includegraphics[width=0.285\textwidth]{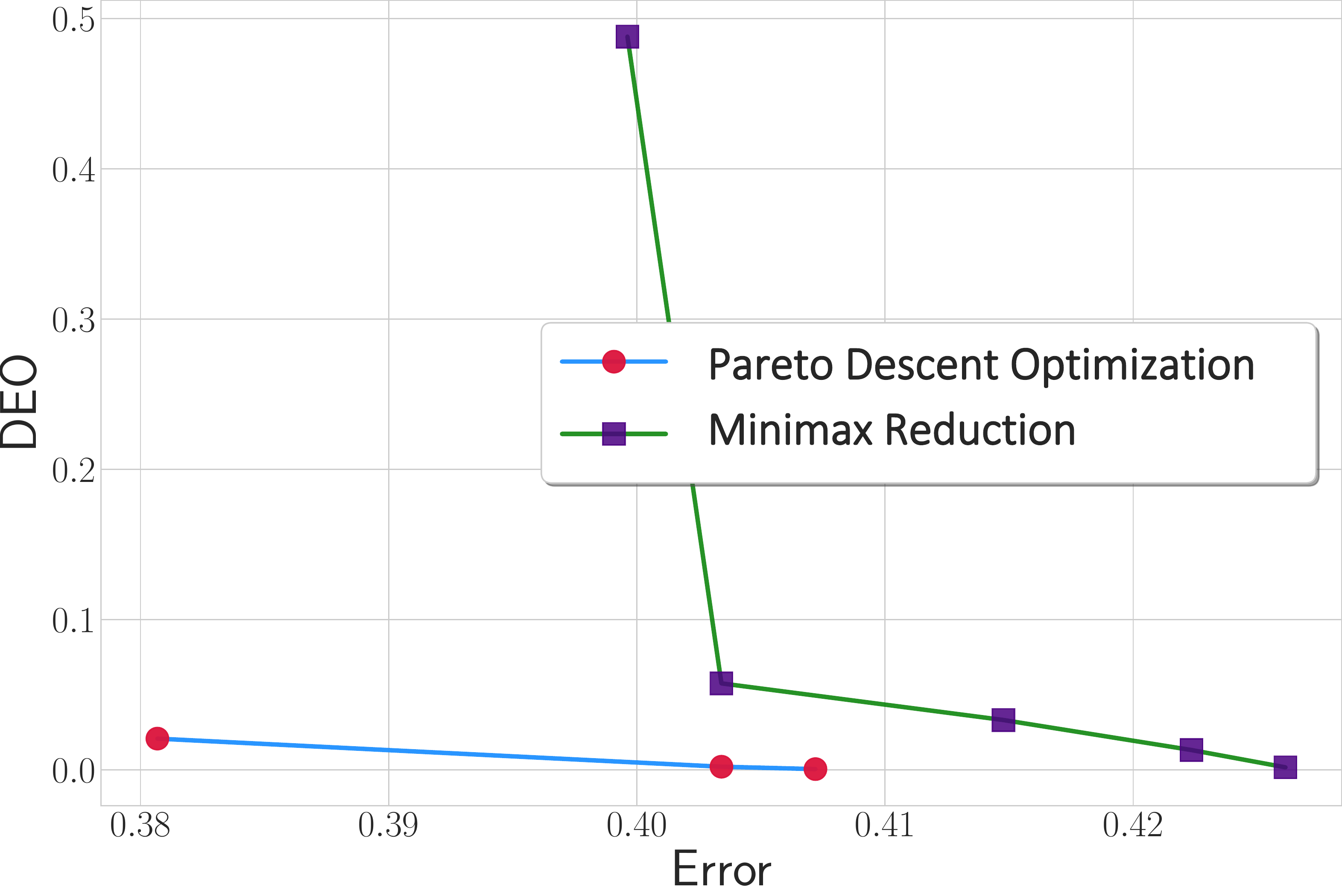}
		\label{fig:pf_err_test_compas_2_SVM}
		}
	\caption[]{Comparing the Pareto frontier extracted by our proposed algorithm \texttt{PB-PDO} and the the minimax reduction algorithm introduced in~\cite{agarwal2018reductions} using \textbf{linear SVM} as the loss function. We apply these algorithms on the Adult dataset with gender and the COMPAS dataset with sex and race as their sensitive features. The first column is the Pareto frontier on the trade-off between the main loss and the fairness loss (equality of opportunity as in~(\ref{eq:eo_obj})). We map this Pareto frontier to the error vs. DEO trade-off for both training and test datasets on the second and the third columns, respectively. Our proposed algorithm outperforms and dominates almost all the solutions found by the minimax reduction approach.
	}
	\label{fig:pf-data_SVM}
\end{figure*}

\begin{figure*}[t!]
	\centering
	\subfigure{
		\centering
		\makebox[20pt]{\raisebox{40pt}{\rotatebox[origin=c]{90}{Adult (gender)}}}\hspace{3pt}
		\includegraphics[width=0.285\textwidth]{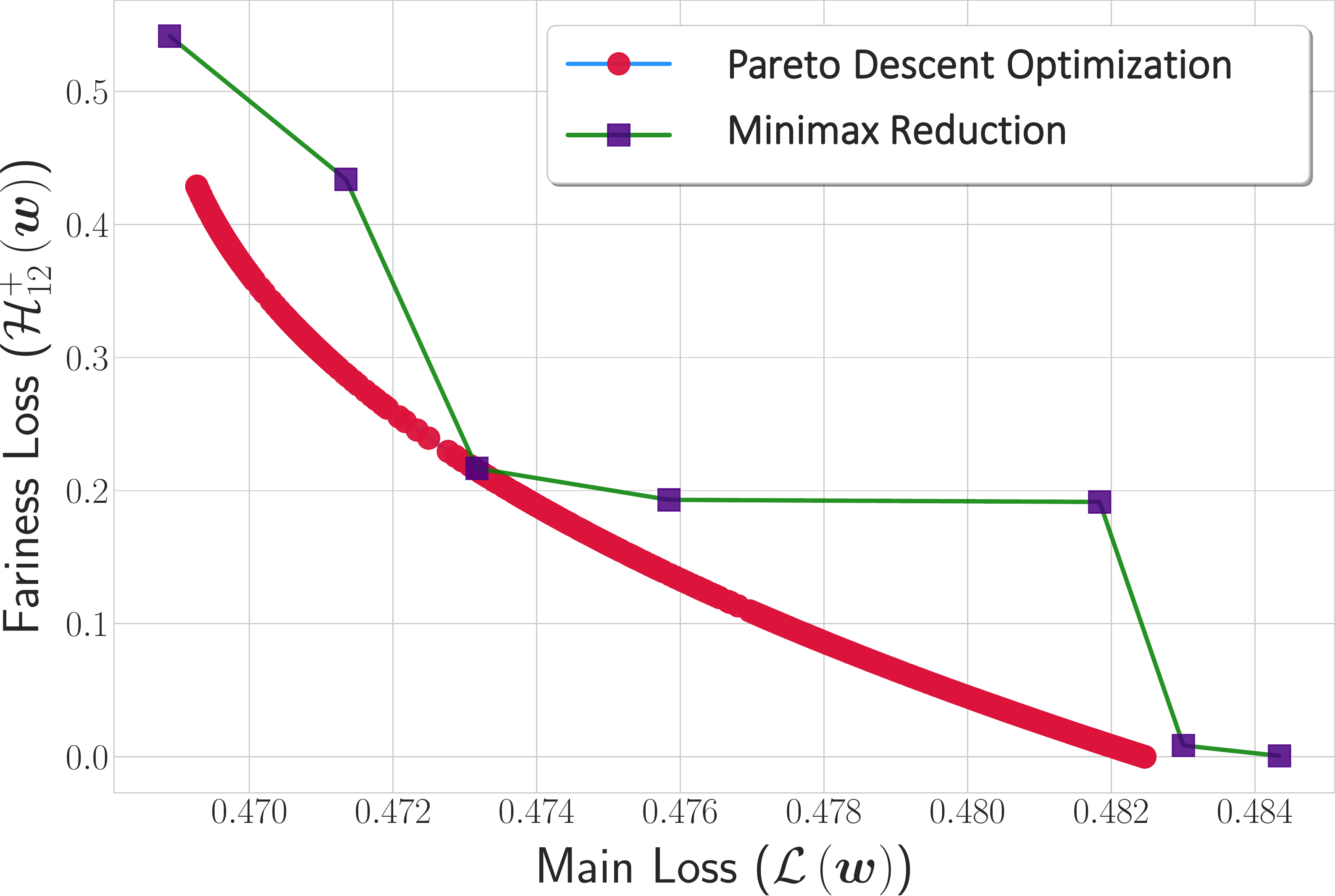}
		\label{fig:pf_loss_adult}
		}
		\hfill
		\subfigure{
			\centering 
			\includegraphics[width=0.285\textwidth]{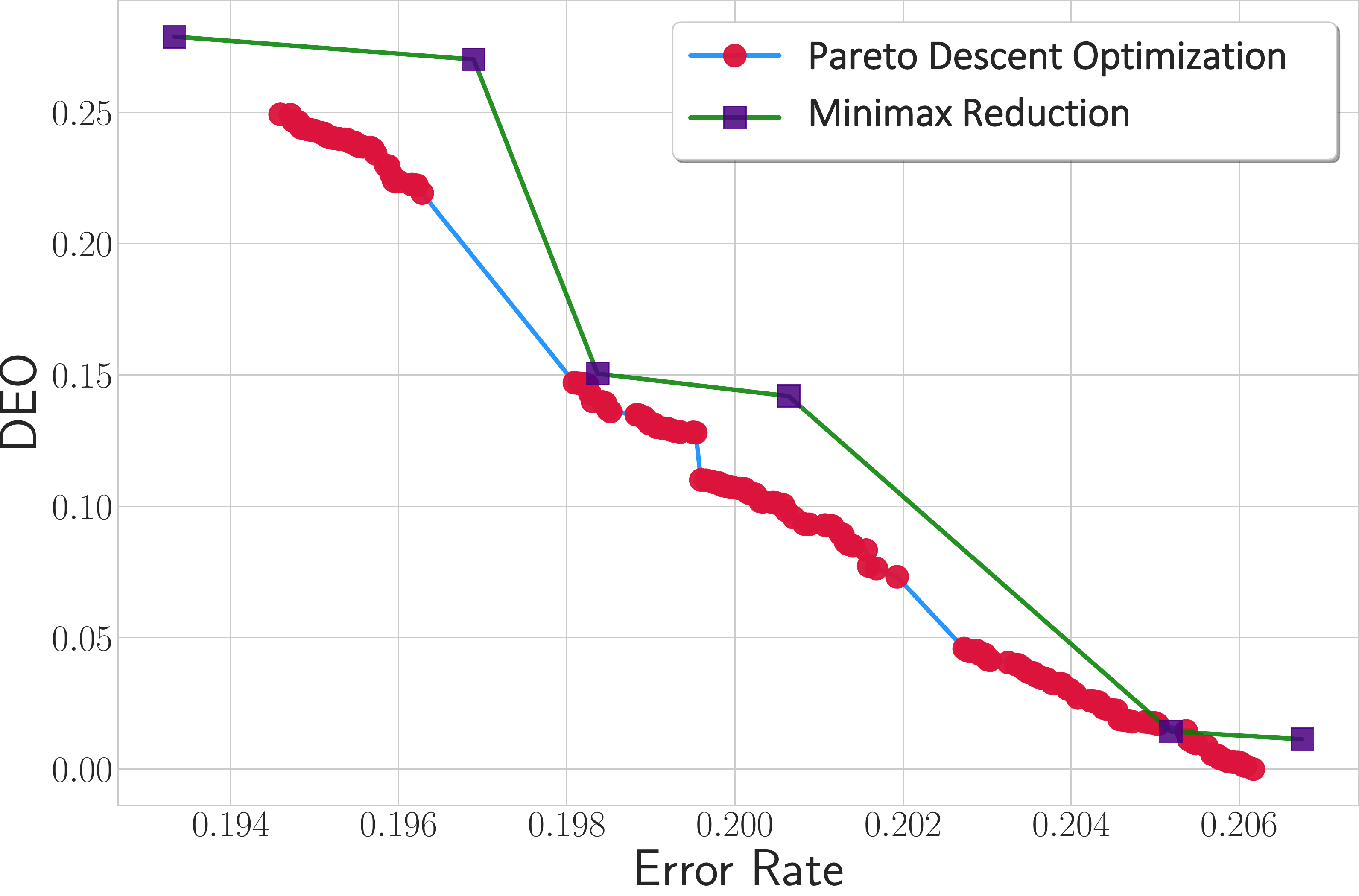}
			\label{fig:pf_err_train_adult}
			}
			\hfill
	\subfigure{  
		\centering 
		\includegraphics[width=0.285\textwidth]{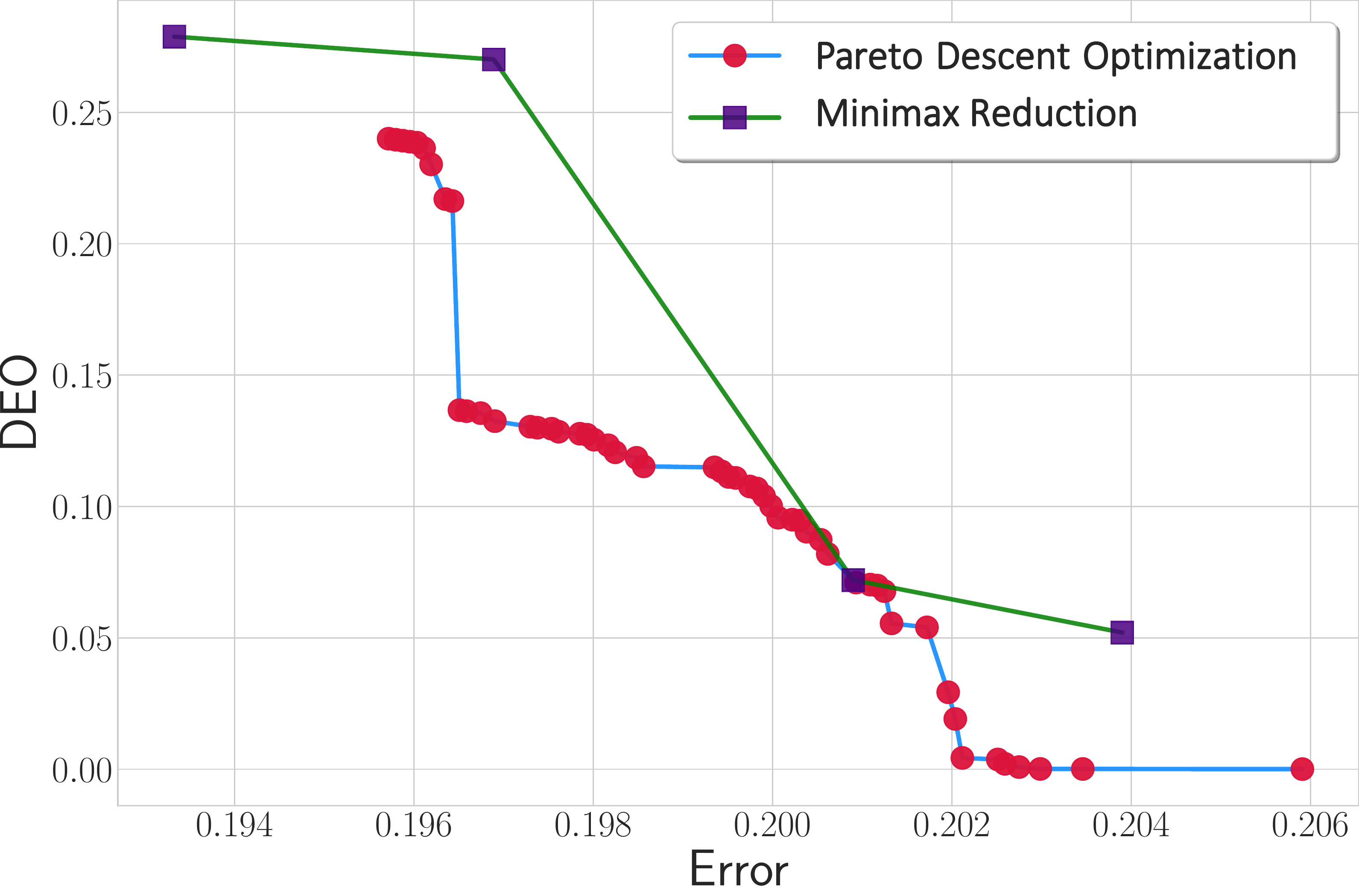}
		\label{fig:pf_err_test_adult}
		}
	
	\subfigure{
		\centering
		\makebox[20pt]{\raisebox{40pt}{\rotatebox[origin=c]{90}{COMPAS (sex)}}}\hspace{3pt}
		\includegraphics[width=0.285\textwidth]{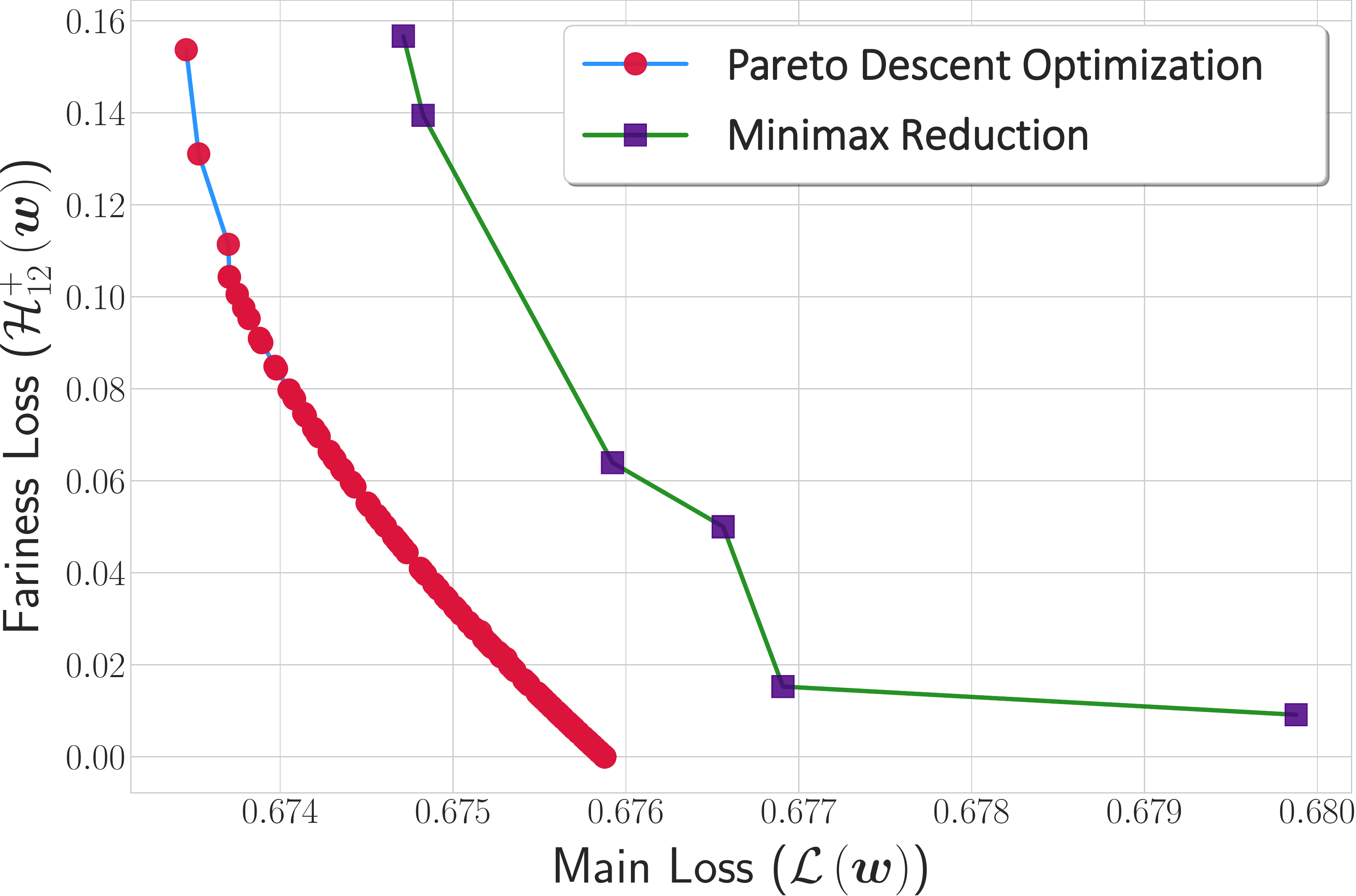}
		\label{fig:pf_loss_compas_1}
		}
		\hfill
		\subfigure{
			\centering 
			\includegraphics[width=0.285\textwidth]{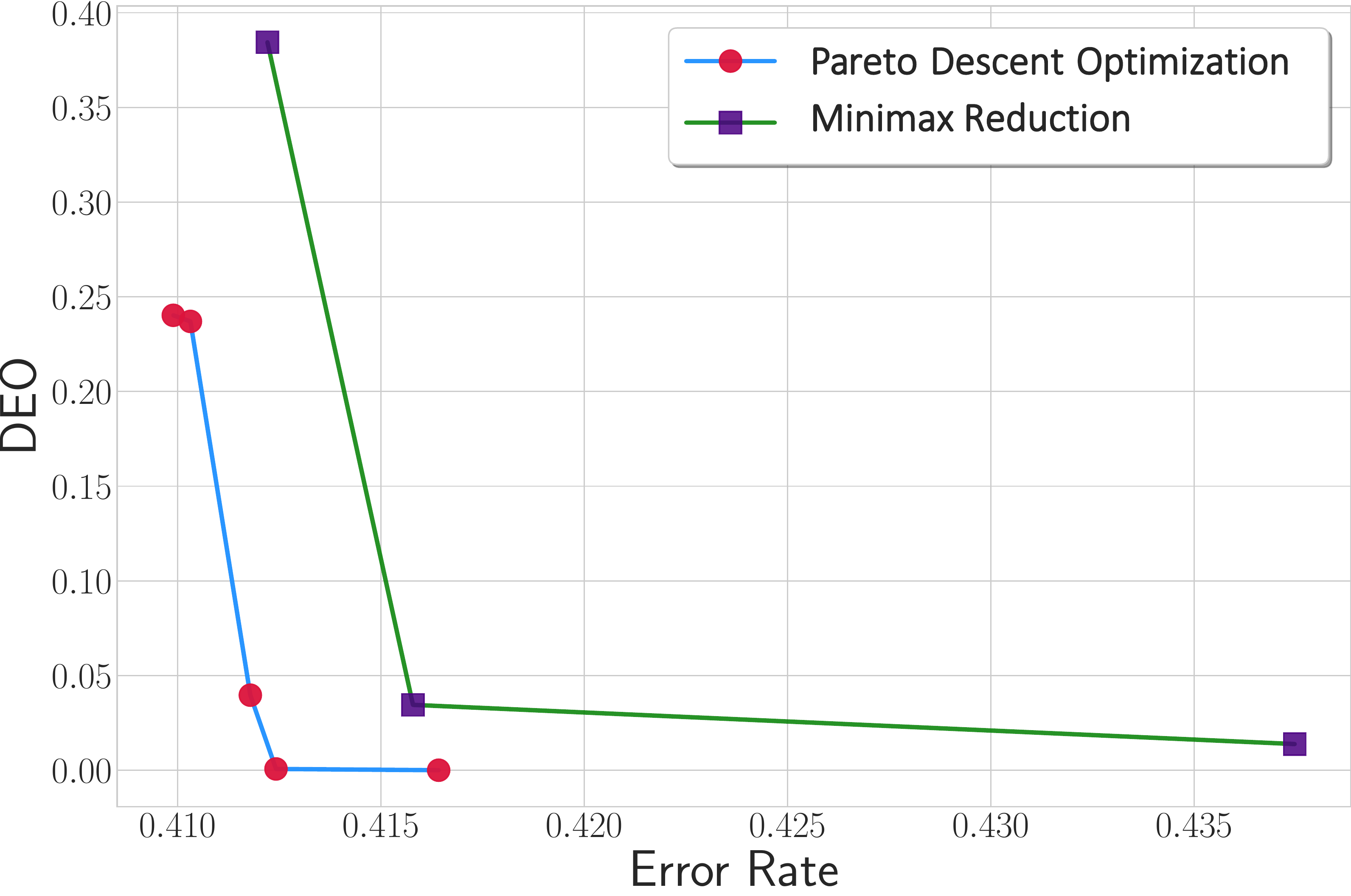}
			\label{fig:pf_err_train_compas_1}
			}
			\hfill
	\subfigure{  
		\centering 
		\includegraphics[width=0.285\textwidth]{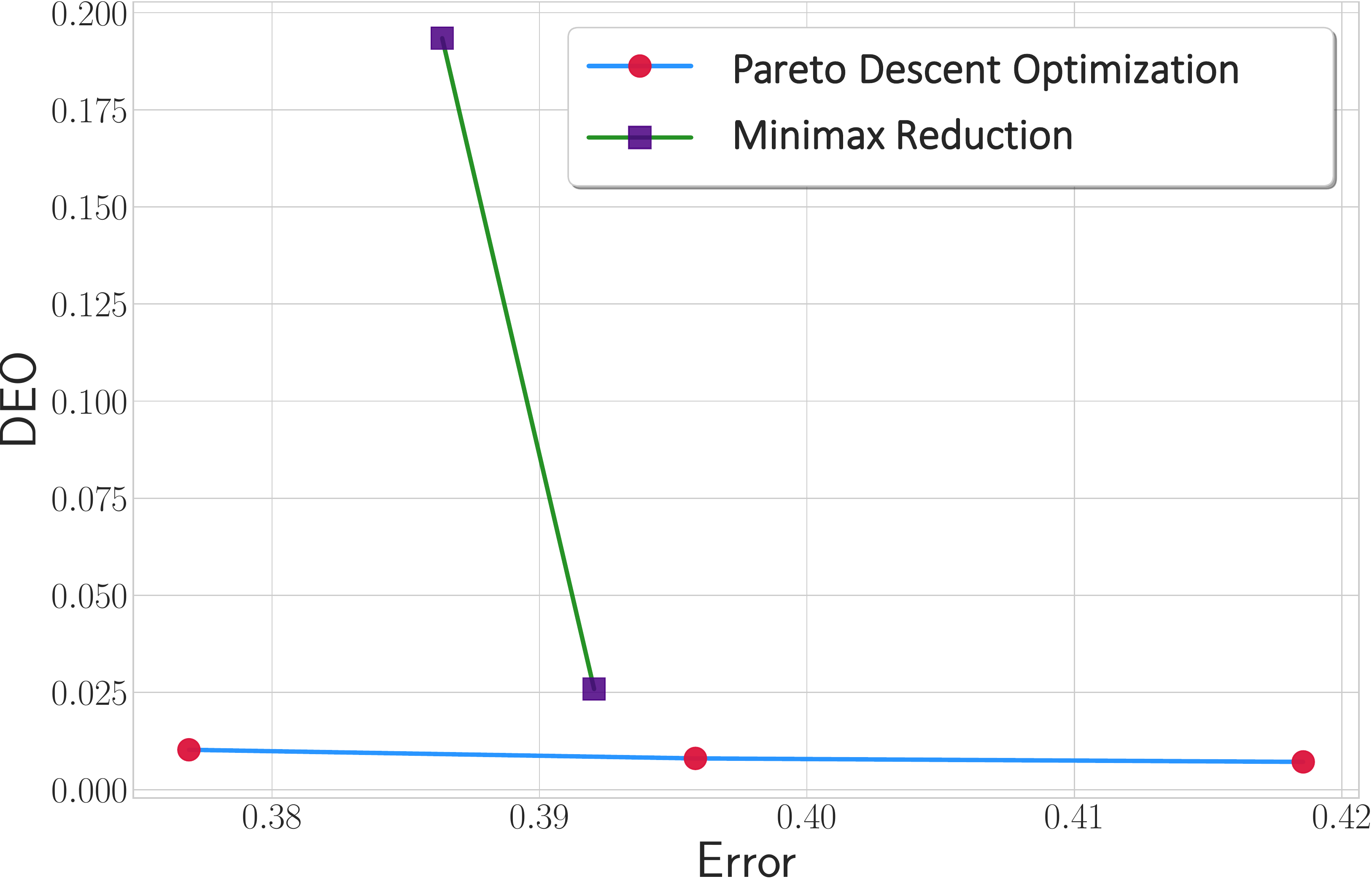}
		\label{fig:pf_err_test_compas_1}
		}
		
	\setcounter{subfigure}{0}
	\subfigure[Loss-Loss trade-off]{
		\centering
		\makebox[20pt]{\raisebox{40pt}{\rotatebox[origin=c]{90}{COMPAS (race)}}}\hspace{3pt}
		\includegraphics[width=0.285\textwidth]{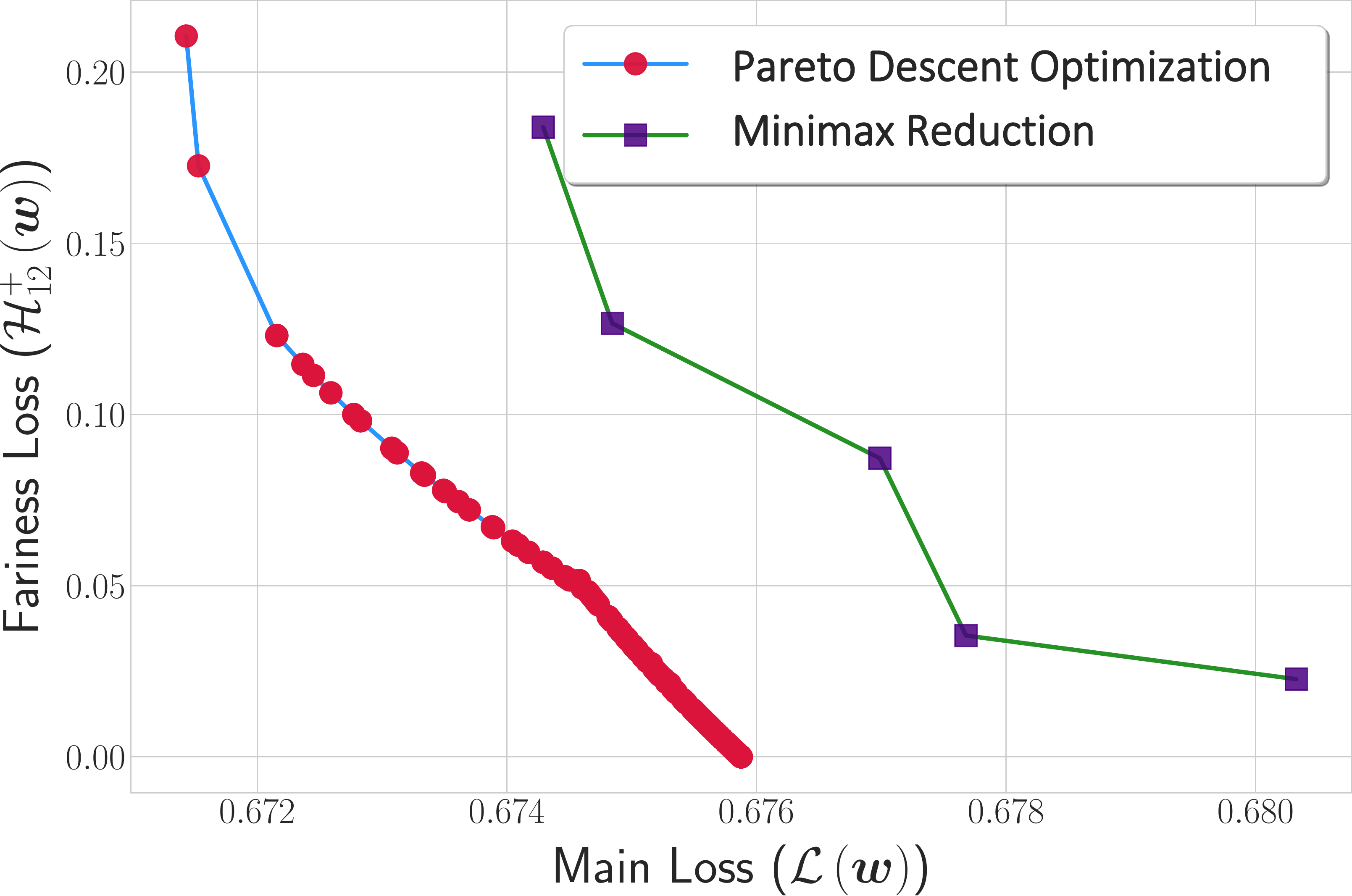}
		\label{fig:pf_loss_compas_2}
	}
	\hfill
	\subfigure[Error-DEO trade-off (train)]{  
			\centering 
			\includegraphics[width=0.285\textwidth]{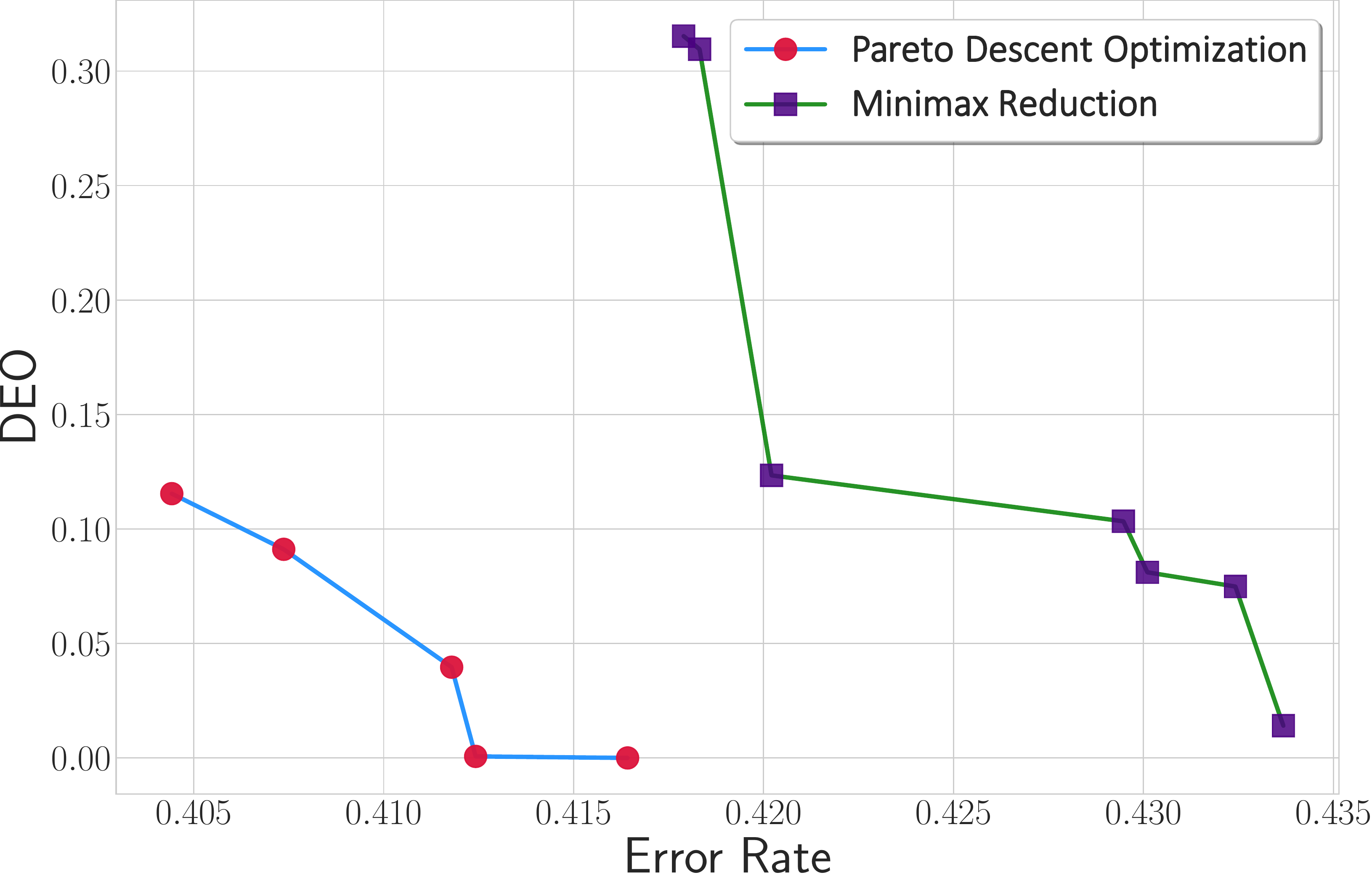}
		\label{fig:pf_err_compas_2}
	}
	\hfill
	\subfigure[Error-DEO trade-off (test)]{   
		\centering 
		\includegraphics[width=0.285\textwidth]{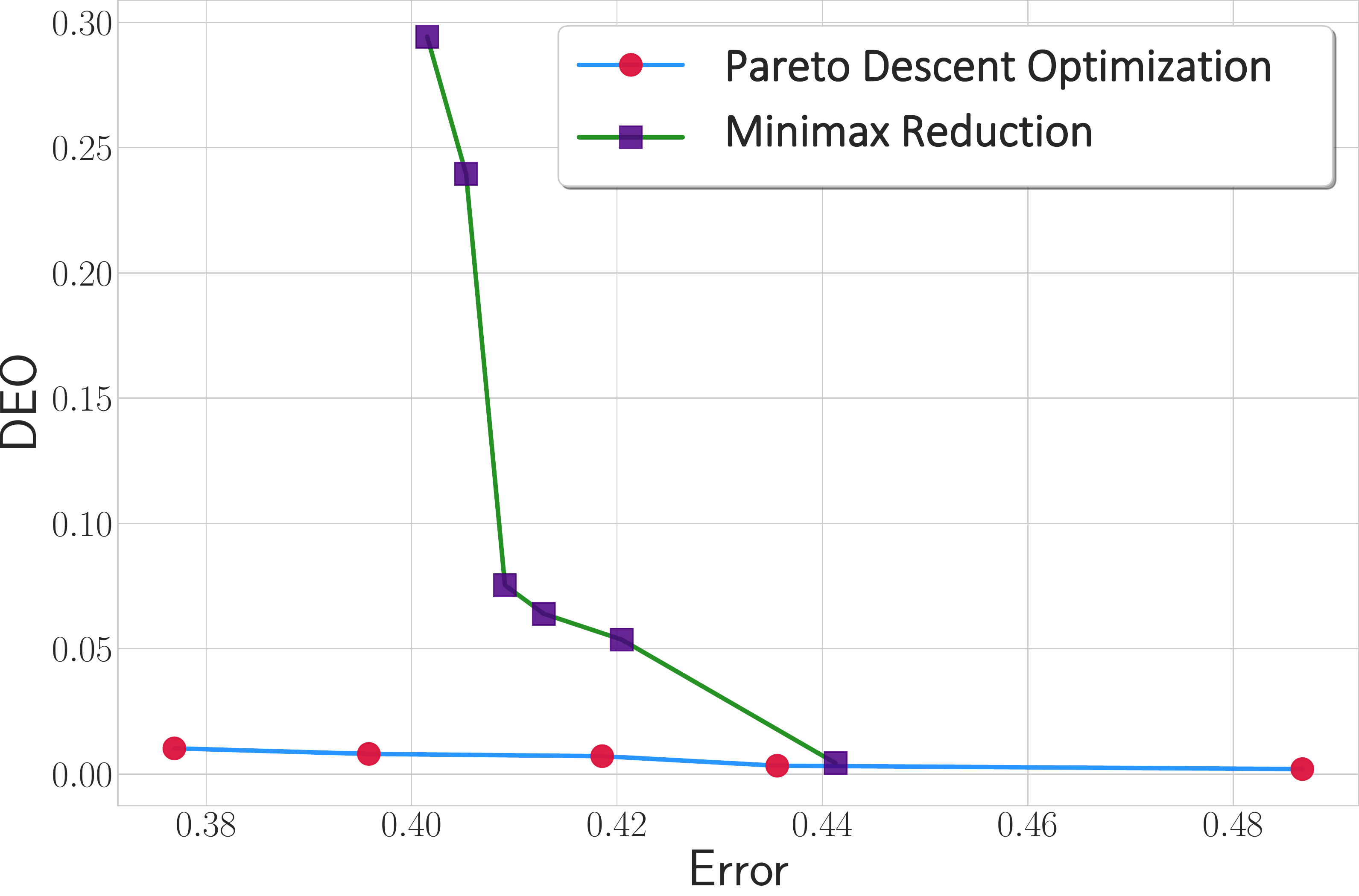}
		\label{fig:pf_err_test_compas_2}
		}
	\caption[]{Comparing the Pareto frontier extracted by our proposed algorithm \texttt{PB-PDO} and the minimax reduction algorithm introduced in~\cite{agarwal2018reductions} using \textbf{logistic regression} as the loss function. We apply these algorithms on the Adult dataset with gender and the COMPAS dataset with sex and race as their sensitive features. The first column is the Pareto frontier on the trade-off between the main loss and the fairness loss (equality of opportunity as in~(\ref{eq:eo_obj})). We map this Pareto frontier to the error vs. DEO trade-off for both training and test datasets on the second and the third columns, respectively. Our proposed algorithm outperforms and dominates almost all the solutions found by the minimax reduction approach.
	}
	\label{fig:pf-data}
\end{figure*}

\section{Additional Related Work}\label{sec:related}
Achieving a ``fair'' classifier is becoming the main propulsion of an expanding line of research, trying to define an all-inclusive definition for fairness and quantify it in the learning task. A notable attempt in this field by~\citet{hardt2016equality} introduced Equalized Odds and Equality of Opportunity in the context of supervised learning. However, there is not a consensus on the definition of fairness in algorithmic decision-making systems. Besides, there has been asserted that some of the current well-known definitions of fairness cannot be satisfied simultaneously~\citep{kleinberg2016inherent}. Nevertheless, the efforts in this field, in general, can be categorized as either mitigating the effect of biases on protected groups or individuals in current decision-making systems or modeling the long-term effect of the former methods~\citep{liu2018delayed,kannan2019downstream,liu2019disparate}. In this research, we are dealing with the former category. Being rife with various notions and viewpoints of fairness, we summarize some key research in this field, related to our framework as follows.
\paragraph{\textbf{Individual Fairness.}}
One worldview of the fairness problem is that similar individuals in the input features' space, should have similar outputs in the decision space~\citep{friedler2016possibility}. Following this rule, \citet{dwork2012fairness} introduced the individual notion of fairness by using task-specific similarity metrics that can be enforced like a Lipschitz continuity constraint to the classifier. However, \citet{friedler2016possibility} introduces another worldview, called structural bias, in which it applies to many real-world societal applications. Most of the cases, the bias, culturally or historically, has affected a group of individuals, all having a similar set of characteristics. Hence, using a similarity metric to assure fairness in individual level would be a simplistic approach that cannot be effective when such a structural bias presents. That is why a great body of literature revolves around the problem of group fairness.
\paragraph{Group Fairness.} The studies in the field of group fairness can be categorized in one of the three following strategies: The first approach is to pre-process data, either by modifying the sensitive features or map the data to a new space~\citep{donini2018empirical,dwork2012fairness,feldman2015certifying}. These methods are prone to fail because they are not accuracy-driven and oblivious to the training procedure time. The second approach tries to modify the existing classifiers to impose the fairness constraints~\citep{pleiss2017fairness,goh2016satisfying,calders2010three,calders2009building}. The third approach, to which our proposed framework belongs, tries to integrate the fairness constraints in the training~\citep{zafar2017parity,woodworth2017learning,zafar2017fairness,menon2018cost,kamishima2011fairness,agarwal2018reductions,zafar2015fairness}. Despite their success, they are mostly bound to binary classification or binary sensitive groups, which makes them impractical. Moreover, the efficiency of their solution is left as an open problem in~\citet{zafar2017parity}. We propose a unified approach to address the efficiency of the solution, as well as, generalization to complex multiple group and multi-label classification tasks.
\paragraph{\textbf{Fairness-Accuracy Trade-offs.}}
The trade-offs between accuracy, as the main learning objective, and different fairness measures have been investigated widely in different studies. In~\citet{blum2019recovering}, authors introduce two models for the bias source in data, namely, under-representation and labeling bias. They argue that the ERM with equality of opportunity constraint under these forms of biases can achieve the Bayes optimal classifier. 
Although~\citet{hu2020fair} suggests that minimizing ERM constrained to fairness objectives might not satisfy the Pareto principle for group welfare, but the group welfare was not in the objective of the optimization problem in the first place. Hence, that is possible that a defined group welfare might not be satisfied with this optimization unless its objective is added to the optimization. In~\citet{wick2019unlocking}, authors discuss that fairness objectives might be in accord with the main learning objective in a semi-supervised learning scenario, however, this is not generalizable to a supervised setting. Recently in~\cite{kim2020model}, using confusion tensor of different notions of fairness in a binary classification, authors show the trade-offs between the predictive performance of a model and fairness objectives, as well as between fairness objectives themselves. The trade-off between accuracy and equality of opportunity is investigated in~\citet{duttathere}, where they showed that because of noise on the underrepresented groups this trade-off exists. Then, they provide an algorithmic solution to find an ideal distribution where accuracy and equality of opportunity are in accord.

\paragraph{\textbf{Incompatibility of Notions.}}Introducing numerous definitions and notions for fairness problem raise this critical question of which one is the most inclusive. Despite the quest for finding the best notion for fairness, \citet{kleinberg2016inherent} and \citet{chouldechova2017fair} discuss that some of the mainstream and widely used notions of fairness are incompatible; unless there are some unrealistic assumptions about the classifier or data. Even if the notions are compatible, there might be trade-offs between different notions of fairness that make it impossible to have them all satisfied at once, in addition to the trade-offs between these notions and accuracy.

\paragraph{\textbf{Pareto Fairness.}}
Dealing with the aforementioned trade-off between any fairness notion and the main learning objective, a conspicuous solution is to find the point that has the most desirable compromise between those objectives. Such a point is called a Pareto efficient point, named after Italian economist Vilfredo Pareto~\citep{miettinen2012nonlinear}. From the outset of this awareness on fairness measure in decision-making algorithms, finding a Pareto efficient point was an indisputable goal~\citep{zafar2017parity,hu2018short,balashankar2019fair}. However, there has not been a unified framework to define the Pareto efficiency in the context of fairness problems, nor how to achieve it in general. \citet{kearns2019ethical} in a chapter named "from Parity to Pareto" discuss why it is necessary to achieve a Pareto efficient point in this setting, rather than statistical parity. Perhaps the closest one to our proposal in finding the Pareto efficient solutions for fairness problems is~\cite{balashankar2019fair}. However, their objectives are completely different from ours, where they consider two objectives and want to satisfy a balanced accuracy for each group with respect to their best accuracy they can achieve alone. Hence, they are finding a Pareto efficient point, totally different from ours, by introducing a new notion for fairness. This measure, a.k.a. Chebychev measure, has been studied for unsupervised methods~\citep{kamani2019efficient,samadi2018price}, but it is not common in supervised ones, where the target values are available. On the other hand, our proposal is not introducing a new fairness notion, yet it can be applied to any existing fairness measure. Moreover, their proposed method is for an unbalanced skewed dataset with respect to different sensitive groups, which is only one form of biases in the fairness domain~\citep{blum2019recovering}, while our approach is agnostic to the bias source. Finally, they do not provide any convergence analysis for their algorithm, which makes their approach more heuristic. 
\paragraph{Bilevel Optimization.}
In this article, we lay the groundwork for Pareto efficiency in the fairness domain, and propose a proper multi-objective optimization. To solve it and guarantee the convergence to a Pareto efficient point, we will cast the problem as a bilevel programming. Bilevel programming is a well-known optimization framework that has two levels, inner and outer. In this setting, a solution of the inner problem is used to solve the outer problem. Bilevel optimization is based on a renowned two-player game, called the Stackelberg game~\citep{basar1999dynamic}. In this game, two players are called leader and follower, and they both want to minimize their specific objective functions. Recently, there has been a surge in the applications of this optimization problem, such as in hyperparameter optimization~\citep{franceschi2018bilevel,okuno2018hyperparameter}, class imbalance problem~\citep{kamani2020targeted,kamani2020multiobjective,kamani2018skeleton,kamani2016shape}, or modeling different meta-learning approaches~\citep{kamani2020targeted,kamani2019targeted,fallah2019convergence,rajeswaran2019meta}.

\section{Conclusions \& Future Work}\label{sec:conc}
This paper advocates the notion of Pareto efficient fairness  in dealing with fairness problems, and to achieve the optimal trade-offs between accuracy and other fairness criteria. By casting the fairness problem as a multi-objective optimization task and introducing Pareto descent optimizer, we can efficiently surpass existing methods for satisfying fairness criteria without significantly degrading the accuracy. We showed that other notions of fairness can be reduced to the notion of Pareto fairness effortlessly, making the Pareto descent algorithm applicable to a range of fairness problems. Moreover, using our proposed framework finding PEF solutions from desired parts of the Pareto frontier of the problem is straightforward.

Besides, this paper leaves a few interesting directions worthy of exploration. The proposed algorithm is based on a gradient descent method. While the stochastic version is evident, a thorough convergence analysis for such methods is required. Also, it is interesting to understand the feasibility of different fairness criteria in the context of Pareto efficiency fairness, and how they are affecting the overall loss.

\clearpage
\bibliographystyle{plainnat}
\bibliography{references}
\newpage
\appendix

\appendixwithtoc

\clearpage
\section{Pareto Frontier: Solution Methods and Geometry}\label{app:MOO_prem}
In this part, we discuss some preliminary notions related to multiobjective optimization that can be useful for the rest of the manuscript. 
\subsection{Preliminaries}
The notion of optimality in a scalar optimization is straightforward. To have a similar notion in a multiobjective optimization, we need the definition of dominance between two arbitrary solutions of vector-valued functions ({\it e.g.}, see~\citet{miettinen2012nonlinear}). For a single-objective optimization we can compare two solutions based on their scalar value objective functions. In a multi-objective setting, similarly, we can define the notion of dominance as follows:
\begin{definition}[Dominance]
Consider a vector-valued objective function    $\bm{\mathrm{h}}: \mathbb{R}^d  \mapsto \mathbb{R}_{+}^m$ with $m$ objectives $\bm{\mathrm{h}}(\bm{w}) = \left[\mathrm{h}_1(\bm{w}), \ldots, \mathrm{h}_m(\bm{w})\right]^\top$. We say the solution $\bm{w}_1$ \textit{dominates} the solution $\bm{w}_2$ if $\mathrm{h}_i(\bm{w}_1) \leq \mathrm{h}_i(\bm{w}_2)$ for all $1\leq i \leq m$, and ${h}_j(\bm{w}_1) < {h}_j(\bm{w}_2)$ for at least one $1 \leq j \leq m$. We denote this  dominance as:
\begin{equation}\label{eq:dominance}
    \bm{\mathrm{h}}(\bm{w}_1) \prec_{m} \bm{\mathrm{h}}(\bm{w}_2)\;.
\end{equation}
\label{def:dom}
\end{definition}

In Definition~\ref{def:pef}, we defined the notion of Pareto efficient fairness. Now, we can characterize the existence of a PEF solution  for a given problem with fairness objectives. In particular, the following theorem provides a {\it necessary condition} for the existence of a PEF solution.
\begin{theorem}\label{theorem:exis}
Let $\bm{\mathrm{h}}(\bm{w}) = \left[\mathrm{h}_1(\bm{w}), \ldots, \mathrm{h}_m(\bm{w})\right]^\top$ be the objective vector we ought to minimize. If the individual functions  are bounded below, {\it i.e.}, $\mathrm{h}_i(\bm{w}) \geq C,\;\forall i \in \left[m\right], \bm{w}\in \mathcal{W}$, and $C\in\mathbb{R}$, then the set of Pareto efficient fair solutions is non-empty.
\end{theorem}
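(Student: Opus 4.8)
The plan is to exhibit one concrete Pareto efficient fair point through scalarization and then show that it cannot be dominated. First I would introduce the uniformly weighted surrogate $g(\bm{w}) \defn \frac{1}{m}\sum_{i=1}^{m}\mathrm{h}_i(\bm{w})$ (any fixed strictly positive weights would work as well). Since each component obeys $\mathrm{h}_i(\bm{w})\ge C$ on $\mathcal{W}$, we get $g(\bm{w})\ge C$ for every $\bm{w}\in\mathcal{W}$, so $\mu\defn\inf_{\bm{w}\in\mathcal{W}}g(\bm{w})$ is a well-defined finite real number.

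Second, I would argue that $\mu$ is attained at some $\bm{w}^{\star}\in\mathcal{W}$. Taking a minimizing sequence $\{\bm{w}^{(k)}\}\subseteq\mathcal{W}$ with $g(\bm{w}^{(k)})\downarrow\mu$, I would pass to a convergent subsequence --- using compactness of $\mathcal{W}$, or of a sublevel set of $g$ --- and take the limit via the continuity of the $\mathrm{h}_i$, so that the limit point $\bm{w}^{\star}$ satisfies $g(\bm{w}^{\star})\le\mu$, hence $g(\bm{w}^{\star})=\mu$. I expect this to be the main obstacle: lower-boundedness of the $\mathrm{h}_i$ alone does not force the infimum to be attained, so this is exactly the step that must draw on the ambient structure of the parameter space $\mathcal{W}$ and the regularity of $\bm{\mathrm{h}}$; absent such an ingredient the infimum may not be reached and no efficient point need exist.

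Third, I would verify that any global minimizer $\bm{w}^{\star}$ of $g$ is Pareto efficient fair, which completes the proof. Suppose, for contradiction, that some feasible $\bm{w}\in\mathcal{W}$ dominates $\bm{w}^{\star}$ in the sense of Definition~\ref{def:dom}, i.e.\ $\bm{\mathrm{h}}(\bm{w})\prec_{m}\bm{\mathrm{h}}(\bm{w}^{\star})$: then $\mathrm{h}_i(\bm{w})\le\mathrm{h}_i(\bm{w}^{\star})$ for all $i\in[m]$, with strict inequality for at least one index. Summing these $m$ inequalities and dividing by $m$ yields $g(\bm{w})<g(\bm{w}^{\star})=\mu$, contradicting the definition of $\mu$ as the infimum of $g$ over $\mathcal{W}$. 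Therefore no feasible point dominates $\bm{w}^{\star}$, so $\bm{w}^{\star}$ is a PEF solution per Definition~\ref{def:pef}, and the set of PEF solutions is non-empty.

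For completeness I would also note an alternative ``domination descent'' viewpoint: starting from an arbitrary $\bm{w}_0$, move to a dominating point whenever the current iterate is dominated; the $\bm{\mathrm{h}}$-values along this sequence are componentwise non-increasing and bounded below by $C$, so each coordinate converges. Turning these coordinatewise limits into an honest PEF point of $\mathcal{W}$ again requires the same compactness/closedness input, so I would keep the scalarization argument as the clean proof and present this descent picture only as the intuition underlying the greedy iterations of Algorithm~\ref{alg:fairpareto}.
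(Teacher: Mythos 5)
Your proof is correct and follows essentially the same route as the paper: both reduce existence to attainment of a minimizer of the positively weighted sum $\sum_{i=1}^{m}\mathrm{h}_i(\bm{w})$ — the paper merely wraps this scalarization in an auxiliary problem $\Omega(\bm{w};\bar{\bm{w}})$ constrained to points that are at least as good as a reference $\bar{\bm{w}}$ — and both must supplement the stated hypothesis with compactness of $\mathcal{W}$ (or of sublevel sets) and lower semicontinuity to guarantee the infimum is attained, since boundedness below alone does not suffice. Your explicit flagging of that attainment step as the real content of the argument matches the paper's own caveat, and your domination-check via the strict inequality in the summed objective is the same mechanism the paper uses.
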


Before formally proving the existence of Pareto efficient  fair solution as stated in Theorem~\ref{theorem:exis}, we would like  to further  elaborate on its implications. We remark that the condition described in Theorem~\ref{theorem:exis} is about whether a Pareto efficient fair solution exists for a problem. Hence, there might be some extreme cases, where this solution does not exist or is not good enough. For instance, when objectives are completely contradicting each other but they are bounded below, a PEF solution exists but can be far away from the optimal point of each objective. Noting the inherent conflict between the fairness and accuracy measures, our ultimate goal is to find the optimal trade-off between these two measures. Thus, in most cases, with bounded objectives, well-defined PEF solutions exist and can be extracted. The following example will brighten the condition in Theorem~\ref{theorem:exis} with two extreme cases where a PEF solution does not exist, and  where it exists, but not well-defined.

\begin{figure}[h]
    \centering
    \subfigure[]{
    \includegraphics[width=0.3\linewidth]{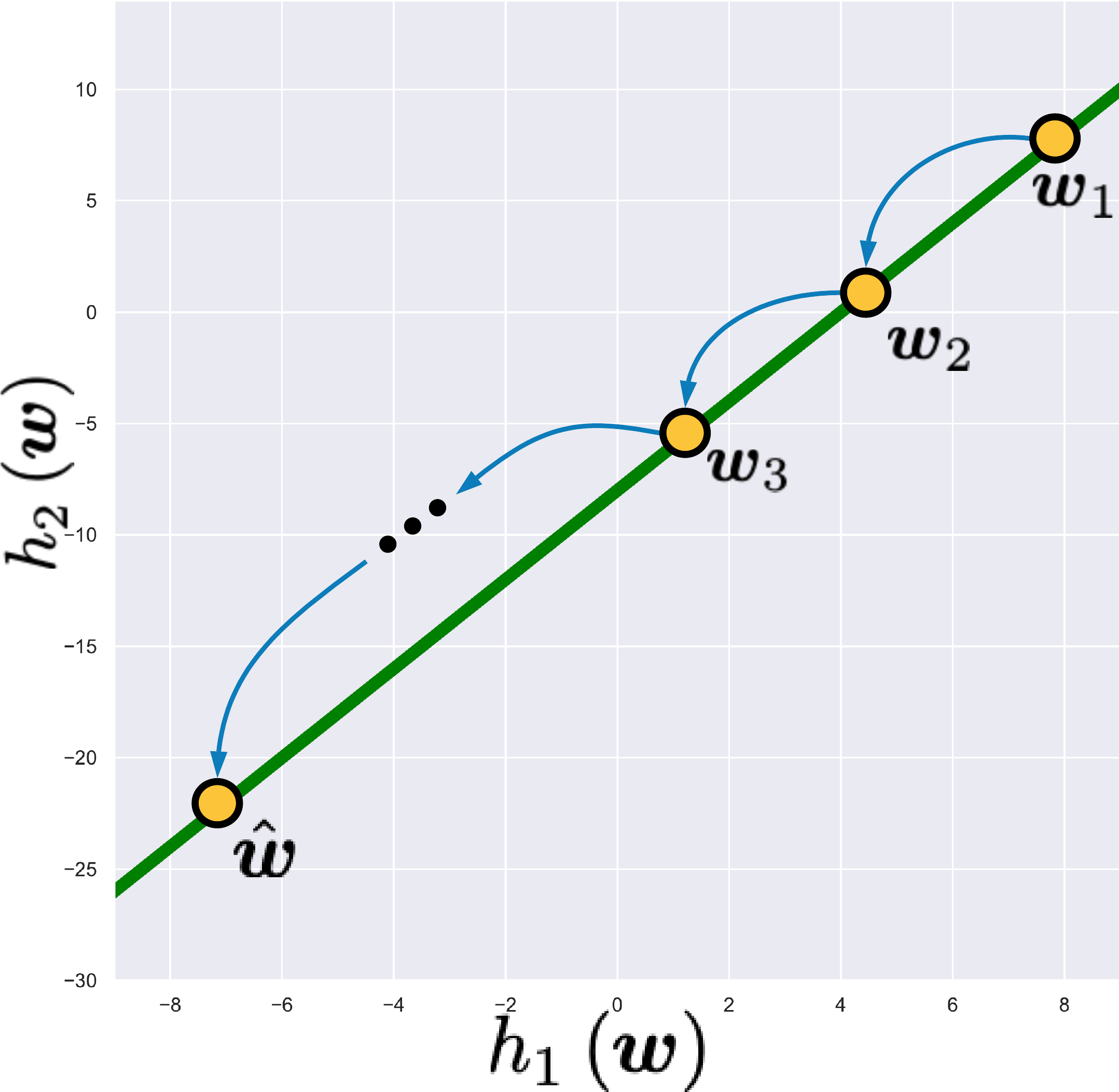}
    \label{fig:ex1}
    }
    \subfigure[]{
    \includegraphics[width=0.3\linewidth]{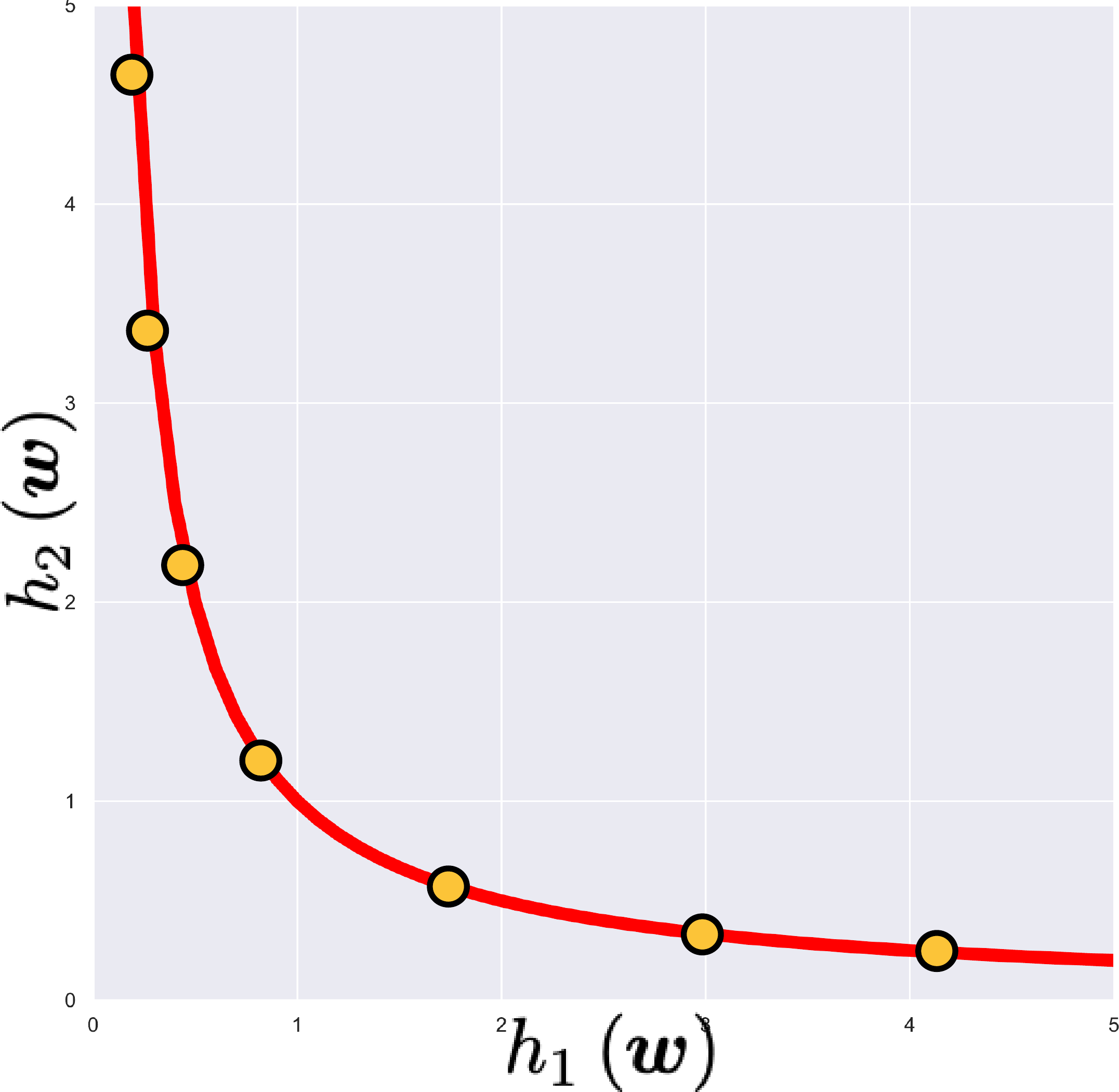}
    \label{fig:ex2}
    }
\caption{Value of $\mathrm{h}_1(\bm{w})$ versus $\mathrm{h}_2(\bm{w})$ for different $\bm{w}$s for arbitrary parameters in Example~\ref{ex:1}.}\label{fig:example}
\end{figure}
\begin{exmp}\label{ex:1}
(a) Consider that we have two objectives to satisfy and the parameter space is a single dimension space $w \in \mathcal{W} \subseteq \mathbb{R}^1$. Then, consider that the objectives are linear functions in the form of $\mathrm{h}_1(w) = aw + b$, and $\mathrm{h}_2(w) = cw + d$, where $a,b,c$, and $d$ are constants with $a,b > 0$. Next, the relationship between the two objectives are:
\begin{equation}
    \mathrm{h}_2(w) = \frac{c}{a}\mathrm{h}_1(w) + \frac{ad - bc}{a}\;.
    \nonumber
\end{equation}
Since, both the objectives are not bounded, when we want to minimize both together, we are not able to find a point that cannot be dominated by other points. This situation is depicted in Figure~\ref{fig:ex1}, for $a=2$, $b=3$, $c=4$, and $d=-2$. As can be inferred, for every chosen point like $\w_1$ there exists a $\hat{\w}$ that dominates it. Meaning, for objective vector of $\bm{\mathrm{h}}(\w)=\left[\mathrm{h}_1(\w),\mathrm{h}_2(\w)\right]^\top$, we have $\bm{\mathrm{h}}(\hat{\w}) \prec_m \ldots  \prec_m \bm{\mathrm{h}}(\w_3) \prec_m \bm{\mathrm{h}}(\w_2) \prec_m \bm{\mathrm{h}}(\w_1)$; hence, the set of PEF solutions would be empty.

(b) Now, consider that we have two objectives in a single dimension parameter space with the form of $\mathrm{h}_1(w) = w^2$ and $\mathrm{h}_2(w)=1/w^2$. This means that both objectives are bounded below, but completely contradicting each other. In terms of the objectives, they have the relationship of:
\begin{equation}
    \mathrm{h}_2(w) = \frac{1}{\mathrm{h}_1(w)}, \quad \mathrm{h}_1(w) > 0 \;\; \forall w \in \mathcal{W}.
    \nonumber
\end{equation}
In this extreme case, all the points belong to the Pareto frontier because they cannot dominate each other as depicted in Figure~\ref{fig:ex2}. Thus, the PEF solution set is not well-defined.
\end{exmp}

Now we can prove Theorem~\ref{theorem:exis} as follows:
\begin{proof}
To prove the existence of a PEF solution, we define an auxiliary function and establish the conditions that guarantee the existence of the optimal solution. To this end, consider the following function defined at any point $\bar{\bm{w}} \in \mathcal{W}$:
\begin{equation}
\begin{aligned}
\Omega\left(\bm{w}; \bar{\bm{w}}\right) \triangleq  \quad & \sum_{i=1}^{m}\mathrm{h}_i\left(\bm{w}\right)\\
\quad \quad \quad \textrm{s.t.} \quad & \boldsymbol{\mathrm{h}}(\bm{w}) \prec_{m} \boldsymbol{\mathrm{h}}(\bar{\bm{w}})\;.\\
\label{eq:aux-fn}
\end{aligned}
\end{equation}
Any feasible solution to $\arg\min_{\bm{w}} \Omega\left(\bm{w};\bar{\bm{w}}\right)$ yields a solution that is as good as $\bar{\bm{w}}$. Therefore, solving the above optimization results in solutions (if any exists) that are better than $\bar{\bm{w}}$.

Now, we turn to characterizing the existence of a PEF solution as optimal solutions of the auxiliary problem defined in~(\ref{eq:aux-fn}). First, it is easy to see by a simple contradictory argument that any \textit{efficient} solution  $\bm{w}^*$ should be an optimal solution to $\arg\min_{\bm{w}}\Omega\left(\bm{w};\bm{w}^*\right)$, {\it i.e.}, $\bm{w}^* \in \arg\min_{\bm{w}}\Omega\left(\bm{w};\bm{w}^*\right)$, with the function value of $\sum_{i=1}^{m}\mathrm{h}_i\left(\bm{w}^*\right)$. Of course, this can be utilized as an efficiency check (i.e,  any efficient solution $\bm{w}^*$ should belong to the solution set of auxiliary function defined at that point $\arg\min_{\bm{w}}\Omega(\bm{w};\bm{w}^*)$). We also note that, if $\bm{w}^*$ is an optimal solution to $\Omega\left(\bm{w};\bar{\bm{w}}\right)$, for a given point $\bar{\bm{w}}$, then $\bm{w}^*$ is efficient and we have:
\begin{equation}
    \mathrm{h}_i\left(\bm{w}^*\right) \leq \mathrm{h}_i\left(\bar{\bm{w}}\right),\quad i = 1,\ldots, m.
\end{equation}
Therefore, if $\bar{\bm{w}}$ is not in the optimal solution set of $\Omega\left(\bm{w};\bar{\bm{w}}\right)$, then, any optimal solution to $\Omega\left(\bm{w};\bar{\bm{w}}\right)$ would be efficient and better than $\bar{\bm{w}}$, {\it i.e.}, $\sum_{i=1}^{m}\mathrm{h}_i\left(\bm{w}^*\right) \leq \sum_{i=1}^{m}\mathrm{h}_i\left(\bar{\bm{w}}\right)$. This can be easily shown by contradiction.

Equipped with the above auxiliary function and understanding the basic facts about its solution set, we only left with finding suitable conditions that guarantee the existence of optimal solutions of $\Omega\left(\bm{w};\bar{\bm{w}}\right)$. Indeed, for any $\bar{\bm{w}} \in \mathcal{W}$, the minimization of the function in~(\ref{eq:aux-fn}) has an optimal solution, {\it i.e.} $\bar{\bm{w}}$, if the individual functions $\mathrm{h}_i(.), \; i =1,\ldots,m$ are lower-semicontinuous over $\mathcal{W}$, and if the set $\left\{ \bm{w} \in \mathcal{W} ; \mathrm{h}_i(\bm{w}) \leq \mathrm{h}_i\left(\bar{\bm{w}}\right), \; \forall\; 1\leq i\leq m\right\}$  is compact. This holds when the constraint set is compact and the functions are continuous or bounded below as stated in the theorem~\footnote{We note that, an arbitrary function $\mathrm{h}: \mathbb{R}^d\rightarrow \left[-\infty,+\infty\right]$ is lower-semicontinuous throughout $\mathbb{R}^d$ if the set $\left\{ x| \mathrm{h}(x)\leq \epsilon\right\}$ is closed for every $\epsilon \in \mathbb{R}$ or the epigraph of $\mathrm{h}$ is closed set in $\mathbb{R}^{d+1}$. Obviously, this condition holds for any convex function.}.

We note that, when $\Omega\left(\bm{w};\bar{\bm{w}}\right)$ is unbounded below ({\it i.e.}, achieving value $-\infty$), or it is finite but not attainable by any feasible solution, then no optimal solution exists. To see the former case, we consider the case that $\Omega\left(\bm{w};\bar{\bm{w}}\right) = -\infty$ for some $\bm{w},\bar{\bm{w}} \in \mathcal{W}$. Now, using the optimization problem of the function in~(\ref{eq:aux-fn}), we write its dual problem as:
\begin{equation}
    d(\bar{\bm{w}}) = \sup_{\boldsymbol{\lambda} \in \mathbb{R}^m_+} \inf_{\bm{w}\in \mathcal{W}} \sum_{i=1}^{m}\mathrm{h}_i\left(\bm{w}\right) + \boldsymbol{\lambda}^\top \left(\boldsymbol{\mathrm{h}}(\bm{w}) - \boldsymbol{\mathrm{h}}(\bar{\bm{w}})\right)\;,
    \label{eq:aux-dual}
\end{equation}
where $\boldsymbol{\lambda}$ is the dual parameter. Based on the weak duality theorem we know that we will have $d(\bar{\bm{w}}) \leq \mathrm{h}(\bar{\bm{w}})$ for every $\bar{\bm{w}}$. Then, if we have the objectives are not bounded below, we can have $\mathrm{h}\left(\bar{\bm{w}}\right) = -\infty$ for some $\bar{\bm{w}} \in \mathcal{W}$, which implies that the dual problem also achieves a $ -\infty$ and becomes infeasible. However, as long as the primal problem is feasible, a duality gap exists which in turn implies the efficiency of the primal solution--meaning that the primal solution is Pareto efficient by the definition of the auxiliary optimization problem. 

Putting all together, we conclude that when the individual functions are bounded below and the constraint set is compact, then the solution set of auxiliary function is non-empty which immediately implies the existence of an efficient solution. We not that these conditions can be relaxed to include other families of vector-valued optimization problems, but here we only focused on the characteristics that govern the optimization problem for finding a PEF solution. 
\end{proof}

\subsection{Pareto frontier}\label{app:pf_prem}
The set of all feasible Pareto efficient points in a multiobjective optimization problem is called Pareto Frontier. In most problems, finding a Pareto efficient point is not enough, and we need to extract the set of these points in order to better understand the relationship between the conflicting objectives for a more rationale decision. Although extracting points on the Pareto frontier of a multiobjective optimization has been investigated vastly in different domains~\citep{hillermeier2001nonlinear}, this is still a challenging problem. These algorithmic efforts can mainly be categorized into four different groups (beside heuristic approaches). In this part, we first describe each of these four categories, and then discuss the geometrical characteristics of the Pareto frontier and how the points on this set can be related to the weights of the objects in the optimization problem.

The main four groups of algorithmic approaches for extracting Pareto frontier points are as follows:
\paragraph{Normal Boundary Intersection (NBI).} This approach has been proposed by~\cite{das1998normal} to find evenly distributed points on the Pareto frontier and address the problem of different scales in the objectives by previous methods. In this method, we need to adjust each dimension of the objective vector with the respective global minimum of that objective, hence their minimum values translate to zero. Next, they need to find points on the convex hull of individual minima from different objectives. Using the points and the normal vector of this convex hull toward the origin, they solve an optimization problem to move from those points along the normal vector until they reach a point from the feasible set boundary. The resulting solutions are stationary points of the objective vector. The main drawback of this approach is that finding the global minimum of each objective is computationally expensive. More importantly, finding the solutions in the parameter space corresponding to the points from the mentioned convex hull in the objective space is not trivial.

\paragraph{Geometrical Exploration.} There are several different geometrical strategies to find points from the Pareto frontier set, given we already found one. These methods rely on the geometrical properties such as rank condition on the Hessian matrix of the Pareto frontier and use it to traverse it, starting from a point in that set. In~\cite{hillermeier2001nonlinear}, the author elaborates on these properties in detail and propose two different Homotopy strategies. One of them is the local exploration of the Pareto frontier using a linear program based on Jacobian and Hessian matrices of the objectives. In~\cite{ma2020efficient}, the authors used the  same idea and propose an algorithm to solve this linear program, to trace points on the Pareto frontier. Using second-order information of the stationary points, especially in high-dimensional problems with a high number of objectives is not computationally feasible.

\paragraph{Weight Perturbation.} The second geometrical idea from~\cite{hillermeier2001nonlinear} is weight perturbation. As it was mentioned in the previous section, there is an ordering relationship between mixing weights $\alpha^*_i,\; i\in[m]$ of the final solution and the location of that solution on the Pareto frontier. In~\cite{hillermeier2001nonlinear}, the author used the same idea and proposed an algorithm to find nearby points on Pareto frontier by perturbing the mixing weight of the stationary point found by other algorithms like our Algorithm~\ref{alg:fairpareto}. Similar to the previous approach, the second-order information needed for this algorithm to work is disadvantageous.

\paragraph{Preference-based solutions.} A novel class of approaches to find points on the Pareto frontier on the specified part of the objective space are called preference-based methods. These approaches use a predefined preference vector over different objectives. The value of each objective's preference with respect to other objectives' shows how much we want to be close to the minimum value of that objective compared to other objectives' in the Pareto frontier. In~\cite{lin2019pareto}, authors use different preference vectors to segment the Pareto frontier into equal parts, and then, design an optimization problem to enforce the final solution of each run to fall in one of those desired parts. By repeating this algorithm they can find evenly distributed points from different parts of the Pareto frontier. However, their approach heavily depends on the location of the initial solution. Recently, in~\cite{mahapatramulti}, authors use the preference vector and design an optimization to converge to a single specific point from the Pareto frontier.

\subsection{The geometry of Pareto frontier}\label{app:pf_geo}
In this section, we investigate the geometrical characteristics of the Pareto frontier curve. For a simple case of bi-objective optimization, where the Pareto frontier curve of these two objectives is smooth and convex we can have the following property:
\begin{proposition}
For a smooth and convex Pareto frontier of a vector objective, an ordering in the optimal weights, $\al$, entails an ordering over their corresponding objectives.
\label{con:pf}
\end{proposition}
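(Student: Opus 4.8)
The plan is to push the statement entirely into the geometry of the frontier curve in the objective plane and then read it off from convexity. Work in the bi-objective case $m=2$ and let $\Gamma\subset\mathbb{R}^2_+$ be the Pareto frontier. Since both objectives are minimized, $\Gamma$ is the graph of a non-increasing function; by hypothesis it is smooth and convex, so on the relevant interval $I$ of $\mathrm{h}_1$-values it can be written as $\mathrm{h}_2=\psi(\mathrm{h}_1)$ with $\psi$ smooth, convex, and non-increasing, and strictly decreasing on the interior efficient portion. First I would record the standard consequence of convexity of $\Gamma$ that every efficient point is recovered by a scalarization: if $\bm{w}^*$ is Pareto efficient with image $(\mathrm{h}_1(\bm{w}^*),\mathrm{h}_2(\bm{w}^*))=(t,\psi(t))$ and optimal weights $\bm{\alpha}=(\alpha_1,\alpha_2)\in\Delta_2$ satisfying $\alpha_1\bm{\mathrm{g}}_1(\bm{w}^*)+\alpha_2\bm{\mathrm{g}}_2(\bm{w}^*)=\bm{0}$ (the first-order condition in~(\ref{eq:kkt})), then $\bm{w}^*$ minimizes $\alpha_1\mathrm{h}_1+\alpha_2\mathrm{h}_2$ over $\mathcal{W}$, hence over $\Gamma$.

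Next I would extract the identity linking the weights to the slope of $\Gamma$. Restricting the scalarized objective to $\Gamma$ gives the one-variable convex function $q(s)=\alpha_1 s+\alpha_2\psi(s)$, minimized at $s=t$; its first-order condition yields $\alpha_1+\alpha_2\psi'(t)=0$, i.e.
\begin{equation}\label{eq:slope-weight}
\psi'(t)=-\frac{\alpha_1}{\alpha_2},\qquad\text{equivalently}\qquad \alpha_1=\frac{-\psi'(t)}{1-\psi'(t)}.
\end{equation}
On the strictly decreasing interior portion we have $\psi'(t)<0$, so $\alpha_1,\alpha_2>0$; the degenerate endpoints (axis-parallel tangent, one weight vanishing) are treated as limiting cases. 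The map $t\mapsto\alpha_1$ in~(\ref{eq:slope-weight}) is the composition of $t\mapsto\psi'(t)$ with $r\mapsto -r/(1-r)$: the first is non-decreasing because $\psi$ is convex, the second is strictly decreasing for $r<0$, so $t\mapsto\alpha_1$ is non-increasing along $\Gamma$ (strictly decreasing if $\psi$ is strictly convex). Since $\alpha_2=1-\alpha_1$, the map $t\mapsto\alpha_2$ is non-decreasing, while $t\mapsto\psi(t)$ is strictly decreasing.

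Finally I would assemble the conclusion. Take two efficient points $\bm{w}^*_a,\bm{w}^*_b$ with parameters $t_a,t_b$ and optimal weights $\bm{\alpha}_a,\bm{\alpha}_b$. By the monotonicity just established, $\alpha_{1,a}\ge\alpha_{1,b}\iff t_a\le t_b\iff \mathrm{h}_1(\bm{w}^*_a)\le\mathrm{h}_1(\bm{w}^*_b)$, and simultaneously this is equivalent to $\mathrm{h}_2(\bm{w}^*_a)\ge\mathrm{h}_2(\bm{w}^*_b)$ and to $\alpha_{2,a}\le\alpha_{2,b}$ — exactly the asserted equivalence between an ordering of the optimal weights and an ordering of the corresponding objective values (reading the chain the other way also gives the converse). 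The step I expect to be the main obstacle is not the monotonicity computation but the bookkeeping that makes the geometric picture rigorous: showing that the efficient frontier is globally the graph of a single smooth convex $\psi$, that the abstract KKT multipliers living in parameter space $\mathbb{R}^d$ coincide with the outward normal of $\Gamma$ at the image point (this is where one invokes the chain rule / an implicit-function argument relating $\nabla_{\bm{w}}\mathrm{h}_i$ to the tangent of $\Gamma$), and handling cleanly the boundary points where one weight vanishes or the tangent is axis-parallel.
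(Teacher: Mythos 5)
Your proposal is correct and follows essentially the same route as the paper's proof: both reduce to the bi-objective case, identify the optimal weight ratio with the negative slope of the frontier at the tangent point (your $\psi'(t)=-\alpha_1/\alpha_2$ is exactly the paper's $\partial\mathrm{h}_2/\partial\mathrm{h}_1=-\tfrac{1-\alpha}{\alpha}$, which the paper obtains via a trigonometric rotation parametrization instead of the graph function $\psi$), and then invoke convexity to make the slope monotone along the frontier, yielding the ordering. Your explicit composition argument and treatment of the degenerate endpoints is, if anything, slightly more careful than the paper's.
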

\begin{proof}
To show this property on smooth and convex Pareto curves, we consider a simple case of two objectives, however, it can be easily generalized to more than two objectives. For this case, we want to find a point from the Pareto frontier. To do so, as discussed before, we minimize the scalarized version of the objectives using mixing parameters $1-\alpha$ and $\alpha$ for $\mathrm{h}_1\left(\w\right)$ and $\mathrm{h}_2\left(\w\right)$, respectively. As it was used by other studies~\citep{lin2019pareto,das1997closer}, we can replace $\alpha$ by $\frac{\sin{\theta}}{\sin{\theta}+\cos{\theta}},\; \theta \in \left[0,\frac{\pi}{2}\right]$, and optimize for $\theta$ instead of $\alpha$. Then the scalar objective we are trying to minimize for two objectives, $\mathrm{h}_1\left(\w\right)$ and $\mathrm{h}_2\left(\w\right)$, becomes:
\begin{equation}
    h\left(\w\right) = \frac{\cos{\theta} \cdot \mathrm{h}_1\left(\w\right) + \sin{\theta}  \cdot \mathrm{h}_2\left(\w\right)}{\sin{\theta}+\cos{\theta}} ,\quad \forall \theta \in \left[0,\frac{\pi}{2}\right], \w \in \mathcal{W}.
\end{equation}
Now, it is clear that the transformation from $\mathrm{h}_1\left(\w\right)$ and $\mathrm{h}_2\left(\w\right)$ to the new objective is a rotation by the angle of $\theta$ following by normalization by $\sin{\theta}+\cos{\theta}$. As can be inferred from Figure~\ref{fig:pf2}, it seems that the solution of the optimization is the minimum point on the curve based on the new rotated coordinate.
\begin{figure}[t]
    \centering
    \includegraphics[width=0.3\linewidth]{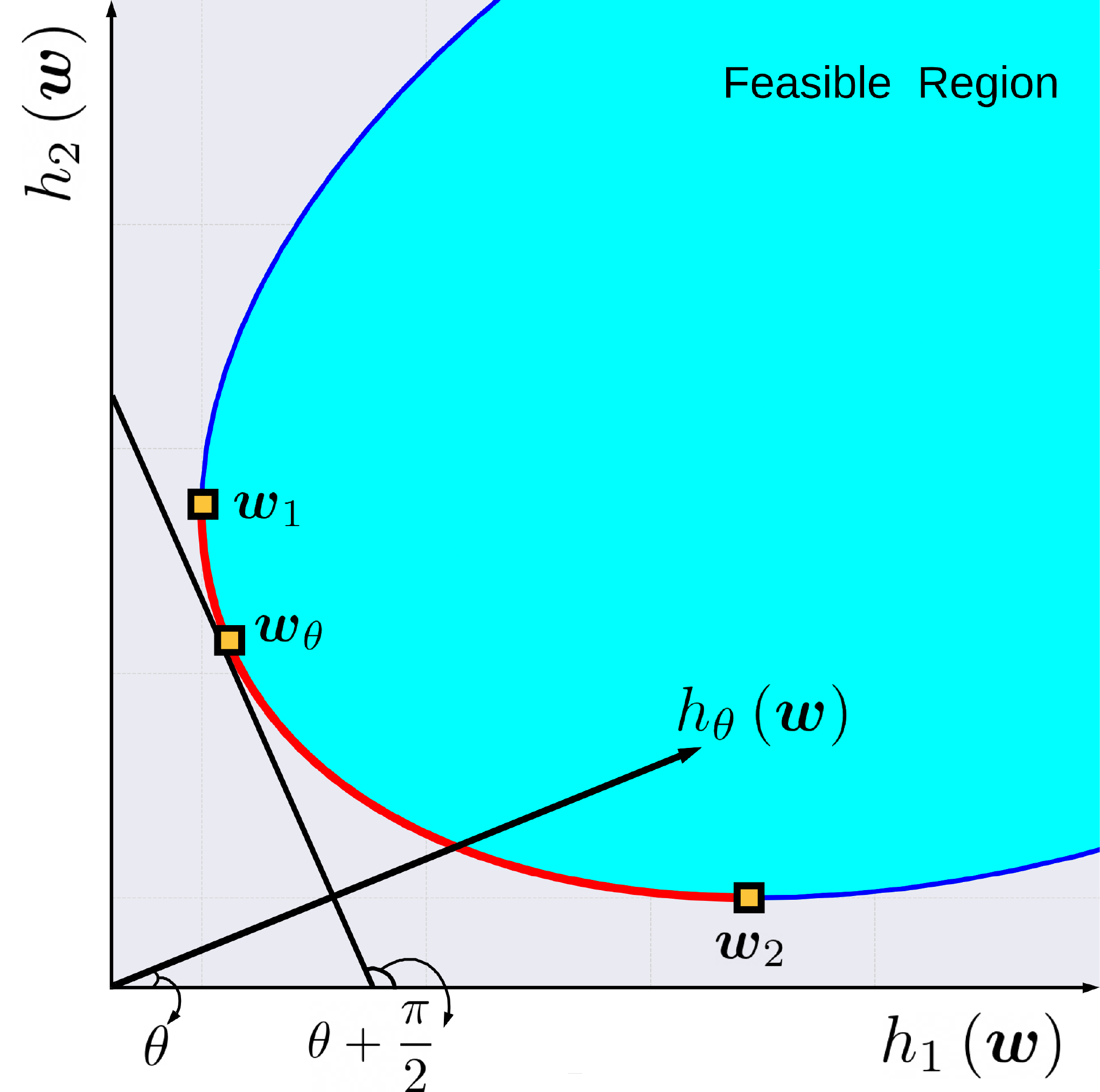}
\caption{Schematic view of the relationship between weighting parameters and Pareto optimal solution properties. If we replace $\alpha$ in a bi-objective optimization problem with $\frac{\sin{\theta}}{\sin{\theta}+\cos{\theta}}$, it can be shown that for every $\theta \in \left[0,\frac{\pi}{2}\right]$ the Pareto efficient solution $\w_\theta$ is the tangent point of the Pareto frontier curve and the line perpendicular to the rotated coordinated of $\mathrm{h}_1\left(\w\right)$ with angle $\theta$, {\it i.e.}, $h_\theta\left(\w\right)=\cos{\theta} \cdot \mathrm{h}_1\left(\w\right) + \sin{\theta}  \cdot \mathrm{h}_2\left(\w\right)$. } \label{fig:pf2}
\end{figure}

For two extreme points in $\theta$, that are, $\theta_1 = 0$ and $\theta_2 = \frac{\pi}{2}$, the minimization objective reduces to $h\left(\w\right) = \mathrm{h}_1\left(\w\right)$ and $h\left(\w\right) = \mathrm{h}_2\left(\w\right)$, respectively. Hence, for these two cases, we find the minimum solution for each objective, respectively. For values between these two extremes, $\theta \in \left(0,\frac{\pi}{2}\right)$, the minimum point is the tangent point of the curve with the line perpendicular to the rotated coordinate, as it is shown in Figure~\ref{fig:pf2}. To show that this point is the Pareto solution based on the current $\alpha$ value, we can write down the slope of the curve on this tangent point:
\begin{align}
    \frac{\partial \mathrm{h}_2\left(\w\right)}{\partial \mathrm{h}_1\left(\w\right)}\bigg\rvert_{\w = \w_\theta} &= \tan\left(\theta + \frac{\pi}{2}\right) \nonumber\\
    & = - \cot{\theta} \nonumber \\
    & = - \frac{1-\alpha}{\alpha}\;,
    \label{eq:optimal_gradient}
\end{align}
then, replacing the partial derivatives with gradients of each function with respect to $\w$, we can get:
\begin{align}
    \nabla  \mathrm{h}_2\left(\w_\theta\right) = - \frac{1-\alpha}{\alpha} \cdot \nabla  \mathrm{h}_1\left(\w_\theta\right) \nonumber\\
    \left(1-\alpha\right) \nabla  \mathrm{h}_1\left(\w_\theta\right) + \alpha \nabla  \mathrm{h}_2\left(\w_\theta\right) = 0\;,
\end{align}
which is the condition derived from KKT conditions in (\ref{eq:kkt}) for the Pareto optimality of a point. Hence, it is clear that in the case of smooth and convex Pareto curve, for each $\alpha$ and subsequently for each $\theta$, the Pareto efficient point is the tangent point described above. For the two extreme points mentioned before we have:
\begin{align}
    \theta_1 = 0 &\;\Leftrightarrow\; \alpha_1 = 1 \;\Leftrightarrow\; \frac{\partial \mathrm{h}_2\left(\w_1\right)}{\partial \mathrm{h}_1\left(\w_1\right)} = -\infty\\
    \theta_2 = \frac{\pi}{2} &\;\Leftrightarrow\; \alpha_2 = 0 \;\Leftrightarrow\; \frac{\partial \mathrm{h}_2\left(\w_2\right)}{\partial \mathrm{h}_1\left(\w_2\right)} = 0\;,
\end{align}
then for every angle in between, $\theta_1 \leq \theta \leq \theta_2$, we have  $-\infty \leq \frac{\partial \mathrm{h}_2\left(\w_\theta\right)}{\partial \mathrm{h}_1\left(\w_\theta\right)} \leq 0$, which implies the non-increasing curve. Next, for two angles in this range, $\theta_1 \leq \theta_3 \leq \theta_4 \leq \theta_2$, we will have $1 \geq \alpha_3 \geq \alpha_4 \geq 0$. Then, it can be easily verified that we have $\mathrm{h}_1\left(\w_{\theta_4}\right) \geq \mathrm{h}_1\left(\w_{\theta_3}\right)$ and $\mathrm{h}_2\left(\w_{\theta_4}\right) \leq \mathrm{h}_2\left(\w_{\theta_3}\right)$, which entails the ordering on the objectives as well.

\end{proof}
Although we can show this property for a convex and smooth Pareto frontier, most of the time, these conditions are not met. Hence, the aforementioned mapping might not be valid in general, nonetheless, it can shed light on how different weight parameters are mapped to different regions in the Pareto frontier curve. In the following sections, we introduce our novel first-order approach for tracing points from any Pareto frontier, regardless of these conditions.

\section{Reduction of Known Fairness Notions to PEF}\label{app:peeo}
In addition to the Pareto efficient equality of opportunity, we show how other notions of fairness can be reduced to PEF. In this section we show the reduction of equalized odds and disparate mistreatment to PEF as other examples of reductions.
\paragraph{Pareto efficient equalized odds}\label{sec:peeod}
As previously stated, satisfying equalized odds requires both true positive rates and false positive rates to be equal among all groups. We can use the same notion, to define its new variant, \textit{Pareto efficient equalized odds}. For satisfying equal true positive rates, we use the same objectives as (\ref{eq:eo_obj}) for Pareto efficient equality of opportunity. To satisfy equal false positive rates among each group, we will, instead, focus on having equal true negative rate for each group, $\mathsf{TN}_k = 1 - \mathsf{FP}_k =  \mathbb{P}\left[\hat{y}=-1| a = s_k, y=-1\right]$ for all $k \in \{1,\ldots,c\}$. This condition can be reflected using the corresponding loss, thus, we denote the set of samples for each group from negative class as  $\mathcal{S}_k^- = \left\{ (\bm{x}_i,a_i,y_i) \in \mathcal{D} \, | \, a_i = s_k, y_i = -1,  1 \leq k \leq c\right\}$. Then, the loss is:
\begin{equation}\label{eq:neg-loss}
    \mathcal{L}_k^-(\bm{w}) = \frac{1}{|\mathcal{S}_k^-|}\sum_{(\bm{x}_i,y_i)\in\mathcal{S}_k^-} \ell\left(\bm{w}; (\bm{x}_i,y_i)\right),\;\; k \in \{1,\ldots,c\}.
\end{equation}
Using (\ref{eq:neg-loss}) and similarly to true positive rates, we can define pairwise objectives to be added as additional fairness objectives to the objective vector:
\begin{equation}\label{eq:eod_obj}
    \mathcal{H}_{i,j}^-(\bm{w}) = \phi\left(\mathcal{L}_i^-(\bm{w}) - \mathcal{L}_j^-(\bm{w}) \right), \;\; 1\leq i,j\leq c, i\neq j,
\end{equation}
where $\phi(.)$, again, is the smooth penalization function, and the objective vector that needs to be minimized is $\boldsymbol{\mathrm{h}}_{\text{EOD}}(\bm{w}) = \big[\mathcal{L}(\bm{w}), \mathcal{H}_{1,2}^+(\bm{w}), \ldots, \mathcal{H}_{c-1,c}^+(\bm{w}), \mathcal{H}_{1,2}^-(\bm{w}), \ldots, \mathcal{H}_{c-1,c}^-(\bm{w}) \big]^\top \in \mathbb{R}^m_{+}$, where $m=1 + 2 \times{c \choose 2}$. Similarly, a solution has the property of Pareto efficient equalized odds if it belongs to the PEF solution set of the constructed vector objective $\boldsymbol{\mathrm{h}}_{\text{EOD}}(\bm{w})$ in the optimization of~(\ref{eq:mul-fair}).
\paragraph{Pareto efficient disparate mistreatment}\label{sec:dismis}
Analogous to two previously reduced notion of fairness to PEF, we can define Pareto efficient disparate mistreatment. In this notion, we need to have an equal misclassification error rate among all groups in the sensitive feature. To that end, we denote the loss of each group as:
\begin{equation}\label{eq:group-loss}
    \mathcal{L}_k(\bm{w}) = \frac{1}{|\mathcal{S}_k|}\sum_{(\bm{x}_i,y_i)\in\mathcal{S}_k} \ell\left(\bm{w}; (\bm{x}_i,y_i)\right),\;\; k \in \{1,\ldots,c\},
\end{equation}
where $\mathcal{S}_k = \left\{ (\bm{x}_i,a_i,y_i) \in \mathcal{D} \, | \, a_i = s_k,  1 \leq k \leq c\right\}$ denotes each group's set. Now, the fairness objectives for each pair of sensitive groups can be defined as:
\begin{equation}
    \mathcal{H}_{i,j}(\bm{w}) = \phi\left(\mathcal{L}_i(\bm{w}) - \mathcal{L}_j(\bm{w}) \right), \;\; 1\leq i,j\leq c, i\neq j,
\end{equation}
to satisfy equal empirical loss on each group, in addition to the main loss. Thus, the objective vector is $\boldsymbol{\mathrm{h}}_{\text{DM}}(\bm{w}) = \big[\mathcal{L}(\bm{w}), \mathcal{H}_{1,2}(\bm{w}), \ldots, \mathcal{H}_{c-1,c}(\bm{w}) \big]^\top \in \mathbb{R}^m_{+}$, where $m=1 + {c \choose 2}$. Following the previous definitions, a solution has the Pareto efficient disparate mistreatment effect if it belongs to the PEF solution set of the optimization in~(\ref{eq:mul-fair}) using $\boldsymbol{\mathrm{h}}_{\text{DM}}(\bm{w})$.

\section{Missing Proofs}\label{app:missp}
In this appendix, we provide the missing proofs from main body.
\subsection{Proof of Theorem~\ref{theo:converge_pareto}}\label{app:theo:converge_pareto}
\begin{proof}
Denoting the solution of optimization~(\ref{eq:bilevel}) by $\w_b^*$ and the set of Pareto efficient fair solutions of vector optimization~(\ref{eq:mul-fair}) by $\Omega_\text{P}$. If $\w_b^* \notin \Omega_\text{P}$, then there would be a $\w_\text{P} \in \Omega_\text{P}$ that dominates $\w_b^*$, which means that:
\begin{align}
    \mathrm{h}_i\left(\w_\text{P}\right) &\leq  \mathrm{h}_i\left(\w_b^*\right), \quad \forall i\in[m]\nonumber\\
    \mathrm{h}_i\left(\w_\text{P}\right) & <  \mathrm{h}_i\left(\w_b^*\right), \quad \text{for at least one}\; i\in[m] \nonumber
\end{align}
Then, for any $\al \in \Delta_m$, we can write:
\begin{align}
    \sum_{i=1}^m \alpha_i \mathrm{h}_i\left(\w_\text{P}\right) & < \sum_{i=1}^{m} \alpha_i \mathrm{h}_i\left(\w_b^*\right)\nonumber\\
    \Psi\left(\bm{\mathrm{h}}\left(\w_\text{P}\right), \al \right) & < \Psi\left(\bm{\mathrm{h}}\left(\w_b^*\right), \al \right),
\end{align}
where it contradicts the assumption that $\w_b^*$ is the solution of optimization~(\ref{eq:bilevel}). Hence, $\w_b^* \in \Omega_\text{P}$.
\end{proof}

\subsection{Proof of Lemma~\ref{theo:paret}}\label{app:lemma}
\begin{proof}
First, for the sake of simplicity, we use the notation of $\bm{\mathrm{g}}^{(t)} = \bm{\mathrm{g}}(\w^{(t)})$.
We define the set of feasible solutions for the inner level optimization problem in~(\ref{eq:bilevel}):
\begin{equation}
    {\Theta}(\w^{(t)}) = \left\{ \sum_{i=1}^{m} \alpha_i \bm{\mathrm{g}}_i^{(t)} \;|\; \alpha_i \geq 0 \;\;\forall i \in \{1,\ldots,m \}, \sum_{i=1}^m \alpha_i = 1  \right\}\;.
\end{equation}
Now, we consider the solution of the inner level optimization of~(\ref{eq:bilevel}) to be $\al^*\left(\w^{(t)}\right)$ for the iteration $t$. Then the gradient of outer level can be written as:
\begin{equation}
    \bar{\bm{\mathrm{g}}}^{(t)} = \nabla_{\w} \Psi\left(\bm{\mathrm{h}}\left(\w^{(t)}\right),\al^*\left(\w^{(t)}\right)\right) = \sum_{i=1}^m \alpha_i^*\left(\w^{(t)}\right) \bm{\mathrm{g}}_i\left(\w^{(t)}\right)
    \label{eq:descent_dir}\;.
\end{equation}
Hence, we consider that, if (\ref{eq:paretcond}) was wrong, then there would be at least one $\bm{\mathrm{g}}_i^{(t)} \in {\Theta}(\w^{(t)})$, where $\langle\bar{\bm{\mathrm{g}}}^{(t)}, \bm{\mathrm{g}}_i^{(t)}\rangle < 0$. Then, we would consider the following optimization problem:
\begin{equation}\label{eq:proof-opt}
    \underset{\gamma \in [0,1]}{\min} \left\|(1-\gamma) \bm{\mathrm{g}}_i^{(t)} + \gamma \bar{\bm{\mathrm{g}}}^{(t)} \right\|_2^2\;,
\end{equation}
note that $(1-\gamma) \bm{\mathrm{g}}_i^{(t)} + \gamma \bar{\bm{\mathrm{g}}}^{(t)} \in \Theta(\w^{(t)})$, thus, it is similar to the inner level optimization problem in (\ref{eq:bilevel}). On the other side, based on (\ref{eq:descent_dir}), we know  that $\|\bar{\bm{\mathrm{g}}}^{(t)}\|_2^2$ is the solution to the optimization in the inner level of~(\ref{eq:bilevel}). It shows that the function in (\ref{eq:proof-opt}) is monotonically decreasing in $\gamma$, and the minimum happens at $\gamma^*=1$. By writing the first-order condition on this point we have:
\begin{align}\nonumber
    2\left( \bar{\bm{\mathrm{g}}}^{(t)} - \bm{\mathrm{g}}_i^{(t)}\right)^\top \left( \bm{\mathrm{g}}_i^{(t)} + \gamma\left(\bar{\bm{\mathrm{g}}}^{(t)} - \bm{\mathrm{g}}_i^{(t)}\right)\right) &\leq 0 \\\nonumber
    \left(\bar{\bm{\mathrm{g}}}^{(t)}\right)^\top\left( \bar{\bm{\mathrm{g}}}^{(t)} - \bm{\mathrm{g}}_i^{(t)}\right) & \stackrel{\text{\ding{192}}}{\leq} 0 \\\label{eq:proof-cont}
    \|\bar{\bm{\mathrm{g}}}^{(t)}\|_2^2 & \leq \left(\bar{\bm{\mathrm{g}}}^{(t)}\right)^\top\bm{\mathrm{g}}_i^{(t)}\;,\end{align}
where \ding{192} is resulted from using $\gamma^*=1$. Inequality in (\ref{eq:proof-cont}) contradicts our assumption on $\left(\bar{\bm{\mathrm{g}}}^{(t)}\right)^\top\bm{\mathrm{g}}_i^{(t)} < 0$. Therefore, $\bar{\bm{\mathrm{g}}}^{(t)}$ is a descent direction for all objectives, meaning, $-\langle\bar{\bm{\mathrm{g}}}^{(t)},\bm{\mathrm{g}}_i^{(t)}\rangle < 0$ for every $1 \leq i \leq m$.
\end{proof}

\subsection{Proof of Proposition~\ref{prop:grad_kl}}\label{app:grad_kl}
\begin{proof}
We start by the gradient of the Softmax function. For an input vector $\bm{z}=\left[z_1,\ldots,z_m\right]$ applying the Softmax function would yield to $\sigma_i \triangleq \sigma_i\left(\bm{z}\right) = e^{z_i}/\sum_{j\in[m]} e^{z_j},\; \forall i\in[m]$. Then, it is straightforward to show the derivative of this function with respect to each input is:
\begin{equation}\label{eq:softmax_grad}
  \frac{\partial \sigma_i}{\partial z_j}=\begin{cases}
    \sigma_i\left(1-\sigma_i\right), & \text{if $i=j$}.\\
    \sigma_i\sigma_j, & \text{otherwise}.
  \end{cases}
\end{equation}
Now, if we consider $z_i = \pi_i \mathrm{h_i}\left(\bm{w}\right),\; \forall i\in[m]$, we can write the partial gradient of the objective $\mathrm{h}_\mathsf{KL}$ with respect to $z_j$ is:
\begin{align}
    \frac{\partial \mathrm{h}_\mathsf{KL}\left(\bm{w},\bm{\pi}\right)}{\partial z_j} &= \sum_{i\in[m]} \frac{\partial \sigma_i}{\partial z_j} \cdot \left( \log\left(m\sigma_i\right) + 1 \right) \nonumber\\
    & \stackrel{\text{\ding{192}}}{=} \sigma_j \left(1-\sigma_j\right) \left( \log\left(m\sigma_j\right) + 1 \right) - \sum_{i \in [m], i\neq j} \sigma_i\sigma_j \left( \log\left(m\sigma_i\right) + 1 \right) \nonumber\\
    &=\sigma_j \left( \log\left(m\sigma_j\right) + 1 \right) - \sum_{i \in [m]} \sigma_i\sigma_j \left( \log\left(m\sigma_i\right) + 1 \right) \nonumber\\
    &=\sigma_j \left( \log\left(m\sigma_j\right) + 1 \right) - \sigma_j \sum_{i \in [m]} \sigma_i \left( \log\left(m\sigma_i\right) + 1 \right) \nonumber\\
    &\stackrel{\text{\ding{193}}}{=}\sigma_j \left( \log\left(m\sigma_j\right) + 1 \right) - \sigma_j  \left( \mathrm{h}_\mathsf{KL}\left(\bm{w},\bm{\pi}\right) + 1 \right) \nonumber\\
    &= \sigma_j \left(\log\left(m\sigma_j\right) - \mathrm{h}_\mathsf{KL}\left(\bm{w},\bm{\pi}\right)\right)\;,
\end{align}
where in \ding{192}, we use~(\ref{eq:softmax_grad}), and in \ding{193}, we use~(\ref{eq:kl-loss}) and the fact that $\sum_{i\in[m]}\sigma_i = 1$. Now, by considering the definition of $z_j$ and noting that $\bm{\mathrm{g}}_i = \partial \mathrm{h}_i\left(\bm{w}\right)/\partial \bm{w}$, the proof is complete.
\end{proof}

\subsection{Proof of Lemma~\ref{lemma:ge}}\label{app:lemma:ge}
\begin{proof}\label{pr:ge}
We start by expanding the error term  for every $\w$ and its corresponding $\hat{\al} = \hat{\al}\left(\w\right)$ and $\al^* = \al^*\left(\w\right)$ as follows. For brevity, we replace $\Psi\left(\bm{\mathrm{h}}\left(\w\right),{\al}\left(\w\right)\right)$ and  $\Phi\left(\bm{\mathrm{h}}\left(\w\right),{\al}\left(\w\right)\right)$ with  $\Psi\left(\w,{\al}\right)$ and  $\Phi\left(\w,{\al}\right)$, respectively, and their gradients accordingly. 
\begin{align}\label{eq:error_expan}
    \llVert \nabla\Psi\left(\w, \hat{\al}\right) -\nabla\Psi\left(\w, \al^*\right) \rrVert & = \left\lVert \nabla_{\w} \Psi\left(\w, \hat{\al}\right) + \nabla \hat{\al}\left(\w\right)\nabla_{\al} \Psi\left(\w, \hat{\al}\right) \right. \nonumber\\
    & \qquad\qquad\qquad \qquad \left. -\nabla_{\w} \Psi\left(\w, \al^*\right) - \nabla \al^*\left(\w\right)\nabla_{\al} \Psi\left(\w, \al^*\right)\right\rVert \nonumber\\
    & \leq \llVert \nabla_{\w} \Psi\left(\w, \hat{\al}\right) - \nabla_{\w} \Psi\left(\w, \al^*\right) \rrVert \nonumber\\ 
    & \quad + \llVert \nabla \hat{\al}\left(\w\right) \left(\nabla_{\al} \Psi\left(\w, \hat{\al}\right) - \nabla_{\al} \Psi\left(\w, \al^*\right) \right)\rrVert \nonumber\\ 
    &\quad + \llVert \left(\nabla \hat{\al}\left(\w\right) - \nabla \al^*\left(\w\right)\right) \nabla_{\al} \Psi\left(\w, \al^*\right) \rrVert\;.
\end{align}
Then, using the linear relationship of the outer level function $\Psi(\w,\al) = \al^\top\bm{\mathrm{h}}$ and bounded gradient in Assumption~\ref{ass:f1}, we have:
\begin{align}
    \llVert  \nabla_{\w} \Psi\left(\w, \hat{\al}\right) -  \nabla_{\w} \Psi\left(\w, \al^* \right) \rrVert & = \llVert {\mathrm{G}}\hat{\al} -  {\mathrm{G}}\al^*\rrVert \nonumber\\
    & \leq \llVert \hat{\al} - \al^*\rrVert \llVert \mathrm{G} \rrVert_\text{F} \nonumber\\
    & \leq\mathsf{B}_{\bm{\mathrm{h}}}\sqrt{m} \llVert \hat{\al} - \al^*\rrVert\;.
    \label{eq:error_expan_aux1}
\end{align}
Replacing~(\ref{eq:error_expan_aux1}) in~(\ref{eq:error_expan}), we have:
\begin{align}\label{eq:error_expan1}
\llVert \nabla\Psi\left(\w, \hat{\al}\right) -\nabla\Psi\left(\w, \al^*\right) \rrVert & \leq \mathsf{B}_{\bm{\mathrm{h}}}\sqrt{m} \llVert \hat{\al} - \al^*\rrVert + \llVert \nabla \hat{\al}\left(\w\right) \rrVert_{\text{F}} \llVert\nabla_{\al} \Psi\left(\w, \hat{\al}\right) - \nabla_{\al} \Psi\left(\w, \al^*\right) \rrVert \nonumber\\
 &\quad + \llVert \nabla \hat{\al}\left(\w\right) - \nabla \al^*\left(\w\right)\rrVert_\text{F} \llVert\nabla_{\al} \Psi\left(\w, \al^*\right) \rrVert.
\end{align}
 Again, we use the linear relationship of the outer level, and hence, we can write $\nabla_{\al} \Psi\left(\w,\hat{\al}\right) = \nabla_{\al} \Psi\left(\w, \al^* \right) = \bm{\mathrm{h}}$, which makes the second term in~(\ref{eq:error_expan1}) zero and for the third term we use the Assumption~\ref{ass:f} on the function values and have $\llVert \bm{\mathrm{h}}\rrVert \leq \mathsf{D}_{\bm{\mathrm{h}}}$. Also, replace $ \nabla \hat{\al}\left(\w\right)$ and $\nabla \al^*\left(\w\right)$ with approximation in~(\ref{eq:grad2}), we will have:
 \begin{align}\label{eq:error_expan2}
 \left\lVert \nabla\Psi\left(\w, \hat{\al}\right) - \right. & \left. \nabla\Psi\left(\w, \al^*\right) \right\rVert \nonumber\\
   &\leq \mathsf{B}_{\bm{\mathrm{h}}} \sqrt{m}\llVert \hat{\al} - \al^* \rrVert + \mathsf{D}_{\bm{\mathrm{h}}} \sqrt{m} \left\lVert \nabla^2_{\w\al} \Phi\left(\w,\hat{\al}\right)\left[\nabla^2_{\al\al} \Phi\left(\w,\hat{\al}\right)\right]^{-1} \right. \nonumber\\
   & \qquad\qquad\qquad\qquad\qquad\qquad\qquad\left. - \nabla^2_{\w\al} \Phi\left(\w,\al^*\right)\left[\nabla^2_{\al\al}\Phi\left(\w,\al^*\right)\right]^{-1} \right\rVert_\text{F} \nonumber \\
    & \leq \mathsf{B}_{\bm{\mathrm{h}}} \sqrt{m}\llVert \hat{\al} - \al^* \rrVert \nonumber \\
    & \quad + \mathsf{D}_{\bm{\mathrm{h}}} \sqrt{m} \llVert \left(\nabla^2_{\w\al} \Phi\left(\w,\hat{\al}\right) - \nabla^2_{\w\al} \Phi\left(\w,\al^*\right)\right)\left[\nabla^2_{\al\al} \Phi\left(\w,\al^*\right)\right]^{-1}\rrVert_\text{F} \nonumber \\
    &\quad + \mathsf{D}_{\bm{\mathrm{h}}} \sqrt{m} \llVert \nabla^2_{\w\al} \Phi\left(\w,\hat{\al}\right) \left(\left[\nabla^2_{\al\al} \Phi\left(\w,\hat{\al}\right)\right]^{-1} - \left[\nabla^2_{\al\al} \Phi\left(\w,\al^*\right)\right]^{-1}\right)\rrVert_\text{F}\;.
 \end{align}
Next, we use the fact that $\nabla^2_{\al\al} \Phi\left(\w,\al^*\right) = \nabla^2_{\al\al} \Phi\left(\w,\hat{\al}\right) = \mathrm{G}^\top\mathrm{G}$, which makes the last term zero. Also, using Assumption~\ref{ass:inner}, we have the strong convexity property of the inner problem $\llVert \mathrm{G}^\top\mathrm{G}\rrVert \geq \mu_\Phi$; and the Lipschitz continuity of $\nabla^2_{\w\al} \Phi\left(\w,\al\right)$ from Assumption~\ref{ass:inner2}:
\begin{align}\label{eq:error_expan3}
 \llVert \nabla\Psi\left(\w, \hat{\al}\right) -\nabla\Psi\left(\w, \al^*\right) \rrVert &  \leq \mathsf{B}_{\bm{\mathrm{h}}} \sqrt{m}\llVert \hat{\al} - \al^* \rrVert \nonumber \\
 &+ \mathsf{D}_{\bm{\mathrm{h}}} \sqrt{m} \llVert \nabla^2_{\w\al} \Phi\left(\w,\hat{\al}\right) - \nabla^2_{\w\al} \Phi\left(\w,\al^*\right)\rrVert_\text{F}  / \llVert\nabla^2_{\al\al} \Phi\left(\w,\al^*\right)\rrVert_\text{F} \nonumber\\
& \leq \left(\mathsf{B}_{\bm{\mathrm{h}}} \sqrt{m} +  \frac{\mathsf{D}_{\bm{\mathrm{h}}} \sqrt{m}}{\mu_\Phi} L_{\w\al}\right) \llVert \hat{\al} - \al^* \rrVert\;, 
 \end{align}
\end{proof}

\subsection{Proof of Lemma~\ref{lemma:smooth}}\label{app:lemma:smooth}
\begin{proof}
It is straightforward from the definition of $\Psi(\w,\al)$ in~(\ref{eq:bilevel}). We start by expanding the term using the definition of the $\Psi$ function. For brevity we replace $\Psi\left(\bm{\mathrm{h}}\left(\w\right),{\al}\left(\w\right)\right)$ and  $\Phi\left(\bm{\mathrm{h}}\left(\w\right),{\al}\left(\w\right)\right)$ with  $\Psi\left(\w,{\al}\right)$ and  $\Phi\left(\w,{\al}\right)$, respectively, and their gradients accordingly:
\begin{align}
    \left\lVert \nabla \Psi\left(\w_1,\al^*\left(\w_1\right)\right)  \right. &\left. - \nabla \Psi\left(\w_2,\al^*\left(\w_2\right)\right) \right\rVert \nonumber\\
    & = \llVert \mathrm{G}\left(\w_1\right)\al^*\left(\w_1\right) - \mathrm{G}\left(\w_2\right)\al^*\left(\w_2\right)\rrVert \nonumber\\
    & = \llVert \left(\mathrm{G}\left(\w_1\right) - \mathrm{G}\left(\w_2\right) \right)\al^*\left(\w_1\right) + \mathrm{G}\left(\w_2\right) \left(\al^*\left(\w_1\right) - \al^*\left(\w_2\right) \right) \rrVert \nonumber\\
    & \leq \llVert \left(\mathrm{G}\left(\w_1\right) - \mathrm{G}\left(\w_2\right) \right)\al^*\left(\w_1\right)\rrVert + \llVert \mathrm{G}\left(\w_2\right) \left(\al^*\left(\w_1\right) - \al^*\left(\w_2\right) \right) \rrVert \nonumber\\
    & \leq \llVert \alpha_1^*\left(\w_1\right)\left(\bm{\mathrm{g}}_1\left(\w_1\right) - \bm{\mathrm{g}}_1\left(\w_2\right)\right) + \ldots + \alpha_m^*\left(\w_1\right)\left(\bm{\mathrm{g}}_m\left(\w_1\right) - \bm{\mathrm{g}}_m\left(\w_2\right)\right) \rrVert \nonumber \\
    & \quad + \llVert \mathrm{G}\left(\w_2\right)\rrVert_\text{F} \llVert \al^*\left(\w_1\right) - \al^*\left(\w_2\right) \rrVert\,.
    \label{eq:lemma:smooth1}
\end{align}
Then, we use the triangle inequality for the first term. As for the second term, we use the definition in~(\ref{eq:grad2}) to bound the gradient of the inner level variable as:
\begin{align}
    \llVert\nabla \al^*\left(\w\right)\rrVert & = \llVert\nabla^2_{\bm{w}\bm{\alpha}}\Phi\left(\bm{w},\bm{\alpha}^*(\bm{w})\right)\left[\nabla^2_{\bm{\alpha}\bm{\alpha}}\Phi\left(\bm{w},\bm{\alpha}^*(\bm{w})\right)\right]^{-1}\rrVert \nonumber\\
    & \leq \llVert \nabla^2_{\bm{w}\bm{\alpha}}\Phi\left(\bm{w},\bm{\alpha}^*(\bm{w})\right)\rrVert / \llVert\nabla^2_{\bm{\alpha}\bm{\alpha}}\Phi\left(\bm{w},\bm{\alpha}^*(\bm{w})\right)\rrVert \nonumber\\
    & \leq \mathsf{H}_{\Phi}/\mu_\Phi\;,
    \label{eq:lemma:smooth:aux1}
\end{align}
where we use the Assumptions~\ref{ass:inner2} and~\ref{ass:inner} for bounding of $\nabla^2_{\w\al} \Phi\left(\w,\al^*\right)$ and strong convexity parameter of $\nabla^2_{\al\al} \Phi\left(\w,\al^*\right)$. This implies that $ \al^*\left(\w\right)$ is Lipschitz continuous with the constant of $\mathsf{H}_{\Phi}/\mu_\Phi$. Plugging back~(\ref{eq:lemma:smooth:aux1}) into~(\ref{eq:lemma:smooth1}), we will have:
\begin{align}
    \llVert \nabla \Psi\left(\w_1,\al^*\left(\w_1\right)\right) - \nabla \Psi\left(\w_2,\al^*\left(\w_2\right)\right) \rrVert & \leq \alpha_1^*\left(\w_1\right)\llVert \bm{\mathrm{g}}_1\left(\w_1\right) - \bm{\mathrm{g}}_1\left(\w_2\right)\rrVert + \ldots \nonumber \\
    & \quad + \alpha_m^*\left(\w_1\right)\llVert\bm{\mathrm{g}}_m\left(\w_1\right) - \bm{\mathrm{g}}_m\left(\w_2\right) \rrVert \nonumber \\
    & \quad + \mathsf{B}_{\bm{\mathrm{h}}} \sqrt{m}\cdot \frac{\mathsf{H}_{\Phi}}{\mu_\Phi} \llVert \w_1 - \w_2\rrVert\;.
    \label{eq:lemma:smooth2}
\end{align}
Now, we use the Assumption~\ref{ass:f2}, for smoothness of each objective function with their respective parameter $L_i, i\in[m]$:
\begin{align}
    \llVert \nabla \Psi\left(\w_1,\al^*\left(\w_1\right)\right) - \nabla \Psi\left(\w_2,\al^*\left(\w_2\right)\right) \rrVert & \leq L_1\alpha_1^*\left(\w_1\right)\llVert\w_1 - \w_2\rrVert + \ldots \nonumber\\ 
    & \quad + L_m\alpha_m^*\left(\w_1\right)\llVert\w_1 - \w_2\rrVert \nonumber\\
    &\quad +  \frac{\mathsf{H}_{\Phi}\mathsf{B}_{\bm{\mathrm{h}}} \sqrt{m}}{\mu_\Phi} \llVert \w_1 - \w_2\rrVert\;.
    \label{eq:lemma:smooth3}
\end{align}
Finally, we replace each smoothness parameter with $L_{\text{max}} \triangleq \max\left\{L_1,\ldots,L_m\right\}$ and note that $\al^*\left(\w\right) \in \Delta_m$, we will have:
\begin{align}
    \llVert \nabla \Psi\left(\w_1,\al^*\left(\w_1\right)\right) - \nabla \Psi\left(\w_2,\al^*\left(\w_2\right)\right) \rrVert & \leq  \left(L_\text{max} + \frac{\mathsf{H}_{\Phi}\mathsf{B}_{\bm{\mathrm{h}}} \sqrt{m}}{\mu_\Phi} \right) \llVert \w_1 - \w_2\rrVert\;.
\end{align}
\end{proof}

\subsection{Proof of Theorem~\ref{theorem:convex_convergence}}\label{app:theorem:convex_convergence}
Equipped with Lemma~\ref{lemma:ge}, we can now turn to the proof of convergence for the convex outer objectives:
\begin{proof}
Following the convexity of each objective function of $f_i$, we can write the following inequality:
\begin{equation}\label{eq:theorem:convex_convergence:aux1}
    \mathrm{h}_i(\w^{(t)}) - \mathrm{h}_i(\w^*) \leq \nabla \mathrm{h}_i(\w^{(t)})^{\top} (\w^{(t)} - \w^*)\;,
\end{equation}
where $\w^*$ is the solution to the problem in~(\ref{eq:bilevel}). Considering that the weights for each objectives $\al_i$ are always non-negative values, their weighted sum is also convex and we have:

\begin{align}
    \Psi\left(\w^{(t)},\hat{\al}\left(\w^{(t)}\right) \right) - \Psi\left(\w^*,\al^*\left(\w^*\right) \right) &  \leq \nabla_{\w} \Psi\left(\w^{(t)},\hat{\al}\left(\w^{(t)}\right)\right)^\top \left(\w^{(t)} - \w^*\right) \nonumber\\
    & \stackrel{\text{\ding{192}}}{=} \frac{1}{\eta}\langle \w^{(t)} - \w^{(t+1)},\w^{(t)} - \w^*\rangle \nonumber\\
    & = \frac{1}{2\eta} \left(\llVert \w^{(t)} - \w^* \rrVert^2 + \llVert \w^{(t)} - \w^{(t+1)}
\rrVert^2 \right. \nonumber\\
& \qquad\qquad\qquad\qquad\qquad\qquad\left. - \llVert \w^{(t+1)} - \w^* \rrVert^2 \right), \label{eq:theorem:convex_convergence:aux2}
\end{align}
where, \ding{192} comes from the update rule of the outer level. Now, from the smoothness of the outer function from Lemma~\ref{lemma:smooth}, we have:
\begin{align}
     \Psi\left(\w^{(t+1)},\al^*\left(\w^{(t+1)}\right) \right) &  \leq \Psi\left(\w^{(t)},\al^*\left(\w^{(t)}\right)\right) + \nabla\Psi\left(\w^{(t)},\al^*\left(\w^{(t)}\right)\right)^\top\left(\w^{(t+1)} - \w^{(t)}\right) \nonumber\\
     &  \quad + \frac{L_\Psi}{2}\llVert \w^{(t+1)} - \w^{(t)} \rrVert^2\;.
      \label{eq:theorem:convex_convergence1}.
\end{align}
Then, by plugging the inequality of~(\ref{eq:theorem:convex_convergence:aux2}) in~(\ref{eq:theorem:convex_convergence1}) and the fact that $\Psi\left(\w^{(t)},\al^*\left(\w^{(t)}\right) \right) \leq \Psi\left(\w^{(t)},\hat{\al}\left(\w^{(t)}\right) \right)$, we will have:
\begin{align}
    \Psi\left(\w^{(t+1)},\al^*\left(\w^{(t+1)}\right) \right) & \leq \Psi\left(\w^*,\al^*\left(\w^*\right) \right) + \nabla\Psi\left(\w^{(t)},\al^*\left(\w^{(t)}\right)\right)^\top\left(\w^{(t+1)} - \w^{(t)}\right) \nonumber\\
     & \quad +  \frac{1}{2\eta} \left(\llVert \w^{(t)} - \w^* \rrVert^2 - \llVert \w^{(t+1)} - \w^* \rrVert^2 \right) \nonumber\\
     &\quad + \left(\frac{1}{2\eta} + \frac{L_\Psi}{2}\right) \llVert \w^{(t+1)} - \w^{(t)} \rrVert^2\;.
    \label{eq:theorem:convex_convergence2}
\end{align}
Then, we are adding and subtracting the inexact gradient term of $\nabla\Psi\left(\w^{(t)},\hat{\al}\left(\w^{(t)}\right)\right)$. Also, we use the update rule of the outer level and denote the gradient error by $\bm{\Lambda}^{(t)} \triangleq \nabla \Psi\left(\w^{(t)},\hat{\al}\left(\w^{(t)}\right)\right) - \nabla \Psi\left(\w^{(t)},\al^*\left(\w^{(t)}\right)\right)$ for every iteration of the outer level, hence, we will have:
\begin{align}
    \Psi\left(\w^{(t+1)},\right. & \left.\al^*\left(\w^{(t+1)}\right) \right) \nonumber\\
    &\leq \Psi\left(\w^*,\al^*\left(\w^*\right) \right) \nonumber\\
    &\quad + \left\langle \nabla\Psi\left(\w^{(t)},\al^*\left(\w^{(t)}\right)\right) -  \nabla\Psi\left(\w^{(t)},\hat{\al}\left(\w^{(t)}\right)\right),\w^{(t+1)} - \w^{(t)}\right\rangle \nonumber\\
  &\quad + \nabla\Psi\left(\w^{(t)},\hat{\al}\left(\w^{(t)}\right)\right)^\top\left(\w^{(t+1)} - \w^{(t)}\right) \nonumber\\
  &\quad +  \frac{1}{2\eta} \left(\llVert \w^{(t)} - \w^* \rrVert^2 - \llVert \w^{(t+1)} - \w^* \rrVert^2 \right) + \left(\frac{1}{2\eta} + \frac{L_\Psi}{2}\right) \llVert \w^{(t+1)} - \w^{(t)} \rrVert^2 \nonumber\\
  & = \Psi\left(\w^*,\al^*\left(\w^*\right) \right) +  \left\langle \bm{\Lambda}^{(t)},\w^{(t)} - \w^{(t+1)}\right\rangle \nonumber\\ 
  & \quad + \frac{1}{2\eta} \left(\llVert \w^{(t)} - \w^* \rrVert^2 - \llVert \w^{(t+1)} - \w^* \rrVert^2 \right) - \eta \llVert \nabla\Psi\left(\w^{(t)},\hat{\al}\left(\w^{(t)}\right)\right) \rrVert^2  \nonumber\\
  &\quad+ \left( \frac{1}{2\eta} + \frac{L_\Psi}{2}\right) \eta^2 \llVert \nabla\Psi\left(\w^{(t)},\hat{\al}\left(\w^{(t)}\right)\right) \rrVert^2\;.
    \label{eq:theorem:convex_convergence3}
\end{align}
Next, we set the outer level learning rate to be $\eta \leq 1/L_\Psi$ and use the fact that $\langle \bm{a},\bm{b}\rangle \leq \lVert\bm{a}\rVert\lVert\bm{b}\rVert$ and $R = \max_{\w} \lVert\w_1 - \w_2\rVert$ for any $\w_1,\w_2 \in \Omega$:
\begin{align}
    \Psi\left(\w^{(t+1)},\al^*\left(\w^{(t+1)}\right) \right) & \leq \Psi\left(\w^*,\al^*\left(\w^*\right) \right) \nonumber\\
    &\quad +  \frac{1}{2\eta} \left(\llVert \w^{(t)} - \w^* \rrVert^2 - \llVert \w^{(t+1)} - \w^* \rrVert^2 \right) + R \llVert \bm{\Lambda}^{(t)} \rrVert\;.
     \label{eq:theorem:convex_convergence4}
\end{align}
Finally, we are use Lemma~\ref{lemma:sconv-converg} and~\ref{lemma:ge} to bound the gradient error:
\begin{align}
    \Psi\left(\w^{(t+1)},\al^*\left(\w^{(t+1)}\right) \right) & \leq \Psi\left(\w^*,\al^*\left(\w^*\right) \right) +  \frac{1}{2\eta} \left(\llVert \w^{(t)} - \w^* \rrVert^2 - \llVert \w^{(t+1)} - \w^* \rrVert^2 \right) \nonumber\\
       & \quad + R A_\Psi \exp\left(-\frac{K}{2\kappa}\right) \llVert \al^{(0)}\left(\w^{(t)}\right) - \al^*\left(\w^{(t)}\right)\rrVert\;.
    \label{eq:theorem:convex_convergence5}
\end{align}
Then, by summing up both sides over $t$ from $0$ to $T-1$ and dividing by $T$, and by denoting $\bar{\w} = \sum_{t=1}^{T}\w^{(t)}/T$ and $\w^{T+1}=\w^*$, we have:
\begin{align}
    \Psi\left(\bar{\w},\bar{\al} \right) - \Psi\left(\w^*,\al^*\left(\w^*\right)\right) &\leq \frac{1}{2\eta T}\llVert \w^{(0)} - \w^* \rrVert^2 \nonumber\\
    & \quad + \frac{RA_\Psi }{T}\exp\left(-\frac{K}{2\kappa}\right)\sum_{t=0}^{T-1}  \llVert \al^{(0)}\left(\w^{(t)}\right) - \al^*\left(\w^{(t)}\right)\rrVert,
\end{align}
where $\bar{\al} = \left[\bar{\alpha}_1,\ldots,\bar{\alpha}_m\right]$, and $\bar{\alpha}_i = \min_t \alpha^*_i\left(\w^{(t)}\right)$ for $1\leq t\leq T$ and $i \in [m]$.
\end{proof}

\subsection{Proof of Theorem~\ref{theorem:nonconvex_convergence}}\label{app:theorem:nonconvex_convergence}
\begin{proof}
When the main function is generally non-convex, we can follow the convergence analysis with a modification for the residual error because of the inner level approximation. Note that, the inner level function is strongly convex, and hence, the results from Lemma~\ref{lemma:ge} are valid. Now, by considering the smoothness of the outer level from Lemma~\ref{lemma:smooth} and the update rule of the outer level, we can write:
\begin{align}
     \Psi\left(\w^{(t+1)},\al^*\left(\w^{(t+1)}\right) \right) &  \leq \Psi\left(\w^{(t)},\al^*\left(\w^{(t)}\right)\right) + \nabla\Psi\left(\w^{(t)},\al^*\left(\w^{(t)}\right)\right)^\top\left(\w^{(t+1)} - \w^{(t)}\right) \nonumber\\
     &  \quad + \frac{L_\Psi}{2}\llVert \w^{(t+1)} - \w^{(t)} \rrVert^2 \nonumber\\
     & = \Psi\left(\w^{(t)},\al^*\left(\w^{(t)}\right)\right) \nonumber\\
     & \quad - \eta \nabla\Psi\left(\w^{(t)},\al^*\left(\w^{(t)}\right)\right)^\top \nabla\Psi\left(\w^{(t)},\hat{\al}\left(\w^{(t)}\right)\right) \nonumber \\
     & \quad  + \frac{L_\Psi\eta^2}{2}\llVert \nabla\Psi\left(\w^{(t)},\hat{\al}\left(\w^{(t)}\right)\right)  \rrVert^2.
     \label{eq:theorem:nonconvex_convergence1} 
\end{align}
Then, using the definition of $\bm{\Lambda}^{(t)}$ from Appendix~\ref{app:theorem:convex_convergence}, we can write:
\begin{align}
     \Psi\left(\w^{(t+1)},\al^*\left(\w^{(t+1)}\right) \right) &  \leq \Psi\left(\w^{(t)},\al^*\left(\w^{(t)}\right)\right) - \eta \llVert \nabla\Psi\left(\w^{(t)},\al^*\left(\w^{(t)}\right)\right) \rrVert^2 \nonumber\\
     & \quad  - \eta\left\langle\nabla\Psi\left(\w^{(t)},\al^*\left(\w^{(t)}\right)\right), \bm{\Lambda}^{(t)} \right\rangle \nonumber\\
     &\quad + \frac{L_\Psi\eta^2}{2}\llVert \nabla\Psi\left(\w^{(t)},\al^*\left(\w^{(t)}\right)\right) + \bm{\Lambda}^{(t)}  \rrVert^2 \nonumber \\
     & = \Psi\left(\w^{(t)},\al^*\left(\w^{(t)}\right)\right) + \left(\frac{L_\Psi\eta^2}{2} - \eta\right) \llVert \nabla\Psi\left(\w^{(t)},\al^*\left(\w^{(t)}\right)\right) \rrVert^2 \nonumber\\
     &\quad + \frac{L_\Psi\eta^2}{2} \llVert \bm{\Lambda}^{(t)}\rrVert^2 - \left(\eta - L_\Psi\eta^2\right)\left\langle\nabla\Psi\left(\w^{(t)},\al^*\left(\w^{(t)}\right)\right), \bm{\Lambda}^{(t)} \right\rangle\;.
    \label{eq:theorem:nonconvex_convergence2} 
\end{align}
Next, by setting the learning rate $\eta \leq 1/L_\Psi$, we have:
\begin{align}
    \Psi\left(\w^{(t+1)},\al^*\left(\w^{(t+1)}\right) \right) &  \leq \Psi\left(\w^{(t)},\al^*\left(\w^{(t)}\right)\right) + \left(\frac{L_\Psi\eta^2}{2} - \eta\right) \llVert \nabla\Psi\left(\w^{(t)},\al^*\left(\w^{(t)}\right)\right) \rrVert^2 \nonumber\\
    &\quad + \frac{L_\Psi\eta^2}{2} \llVert \bm{\Lambda}^{(t)}\rrVert^2\;.
    \label{eq:theorem:nonconvex_convergence3} 
\end{align}
 Finally, we use Lemmas~\ref{lemma:sconv-converg} and~\ref{lemma:ge} to bound the gradient error of the outer level:
 \begin{align}
   \Psi\left(\w^{(t+1)},\al^*\left(\w^{(t+1)}\right) \right) &  \leq \Psi\left(\w^{(t)},\al^*\left(\w^{(t)}\right)\right) + \left(\frac{L_\Psi\eta^2}{2} - \eta\right) \llVert \nabla\Psi\left(\w^{(t)},\al^*\left(\w^{(t)}\right)\right) \rrVert^2 \nonumber\\
     & + \frac{L_\Psi\eta^2}{2}\cdot A_\Psi^2\exp\left(-\frac{K}{\kappa}\right) \llVert \al^{(0)}\left(\w^{(t)}\right) - \al^*\left(\w^{(t)}\right)\rrVert^2.
    \label{eq:theorem:nonconvex_convergence4} 
 \end{align}
 Now, by rearranging the terms and summing both sides over $t$ we will have:
\begin{align}
    \eta\left(1-\frac{L_\Psi\eta}{2}\right) \sum_{t=0}^{T-1} \left\lVert \nabla\Psi\left(\w^{(t)},\right.\right. & \left.\left.\al^*\left(\w^{(t)}\right)\right) \right\rVert^2 \nonumber\\
    & \leq  \Psi\left(\w^{(0)},\al^*\left(\w^{(0)}\right)\right) - \Psi\left(\w^*,\al^*\left(\w^*\right)\right)  \nonumber\\ 
    & \quad + \frac{L_\Psi\eta^2A_\Psi^2}{2}\exp\left(-\frac{K}{\kappa}\right)\sum_{t=0}^{T-1} \llVert \al^{(0)}\left(\w^{(t)}\right) - \al^*\left(\w^{(t)}\right)\rrVert^2\;.\label{eq:nonconv_last}
\end{align}
Then, considering the choice for $\eta$, which makes $\eta\left(1-\frac{L_\Psi\eta}{2}\right)\geq \frac{\eta}{2}$, and divide both sides of~(\ref{eq:nonconv_last}) by $T$, we have:
\begin{align}
    \frac{1}{T}\sum_{t=0}^{T-1} \llVert \nabla\Psi\left(\w^{(t)},\al^*\left(\w^{(t)}\right)\right) \rrVert^2 &\leq \frac{2}{\eta T}\left(\Psi\left(\w^{(0)},\al^*\left(\w^{(0)}\right)\right) - \Psi\left(\w^*,\al^*\left(\w^*\right)\right)\right) \nonumber \\
    &\quad + \frac{L_\Psi\eta A_\Psi^2}{ T}\exp\left(-\frac{K}{\kappa}\right)\sum_{t=0}^{T-1} \llVert \al^{(0)}\left(\w^{(t)}\right) - \al^*\left(\w^{(t)}\right)\rrVert^2\;.
\end{align}
\end{proof}

\section{Meta-data of Datasets}\label{app:exp}
The following tables show the meta-data related to the datasets we used and their sensitive features.
\begin{table}[ht!]
\parbox{.45\linewidth}{
\centering
\begin{tabular}{cc|c|c|c|}
\cline{3-5}
 &  & \multicolumn{2}{c|}{Groups} & \multirow{2}{*}{Total} \\ \cline{3-4}
 &  & Female & Male &  \\ \hline
\multicolumn{1}{|c|}{\multirow{2}{*}{Labels}} & +1 & 1196 & 6912 & 8108 \\ \cline{2-5} 
\multicolumn{1}{|c|}{} & -1 & 9352 & 15101 & 24453 \\ \hline
\multicolumn{2}{|c|}{Total} & 10548 & 22013 & 32561 \\ \hline
\end{tabular}
\caption{Adult train dataset with gender as the sensitive feature}
}
\hfill
\parbox{.45\linewidth}{
\centering
\begin{tabular}{cc|c|c|c|}
\cline{3-5}
 &  & \multicolumn{2}{c|}{Groups} & \multirow{2}{*}{Total} \\ \cline{3-4}
 &  & Female & Male &  \\ \hline
\multicolumn{1}{|c|}{\multirow{2}{*}{Labels}} & +1 & 473 & 2627 & 3100 \\ \cline{2-5} 
\multicolumn{1}{|c|}{} & -1 & 3674 & 5887 & 9561 \\ \hline
\multicolumn{2}{|c|}{Total} & 4147 & 8514 & 12661 \\ \hline
\end{tabular}
\caption{Adult test dataset with gender as the sensitive feature}
}
\end{table}
\begin{table}[h!]
\centering
\begin{tabular}{cc|c|c|c|c|c|c|}
\cline{3-8}
 &  & \multicolumn{5}{c|}{Groups} & \multirow{2}{*}{Total} \\ \cline{3-7}
 &  & AIE & API & Black & White & Other &  \\ \hline
\multicolumn{1}{|c|}{\multirow{2}{*}{Labels}} & +1 & 37 & 265 & 402 & 7380 & 24 & 8108 \\ \cline{2-8} 
\multicolumn{1}{|c|}{} & -1 & 275 & 697 & 2645 & 20614 & 222 & 24453 \\ \hline
\multicolumn{2}{|c|}{Total} & 312 & 962 & 3047 & 27994 & 246 & 32561 \\ \hline
\end{tabular}
\caption{Adult train dataset with race as the sensitive feature. AIE: Amer-Indian-Eskimo, API: Asian-Pac-Islander }
\end{table}
\begin{table}[h!]
\centering
\begin{tabular}{cc|c|c|c|c|c|c|}
\cline{3-8}
 &  & \multicolumn{5}{c|}{Groups} & \multirow{2}{*}{Total} \\ \cline{3-7}
 &  & AIE & API & Black & White & Other &  \\ \hline
\multicolumn{1}{|c|}{\multirow{2}{*}{Labels}} & +1 & 16 & 104 & 132 & 2827 & 21 & 3100 \\ \cline{2-8} 
\multicolumn{1}{|c|}{} & -1 & 107 & 237 & 1049 & 8082 & 86 & 9561 \\ \hline
\multicolumn{2}{|c|}{Total} & 123 & 341 & 1181 & 10909 & 107 & 12661 \\ \hline
\end{tabular}
\caption{Adult test dataset with race as the sensitive feature. AIE: Amer-Indian-Eskimo, API: Asian-Pac-Islander}
\end{table}

\begin{table}[h!]

\parbox{.45\linewidth}{
\centering
\begin{tabular}{cc|c|c|c|}
\cline{3-5}
 &  & \multicolumn{2}{c|}{Groups} & \multirow{2}{*}{Total} \\ \cline{3-4}
 &  & Female & Male &  \\ \hline
\multicolumn{1}{|c|}{\multirow{2}{*}{Labels}} & +1 & 422 & 1416 & 1838 \\ \cline{2-5} 
\multicolumn{1}{|c|}{} & -1 & 609 & 2831 & 1006 \\ \hline
\multicolumn{2}{|c|}{Total} & 1031 & 4247 & 5278 \\ \hline
\end{tabular}
\caption{COMPAS dataset with sex as the sensitive feature}
}
\hfill 
\parbox{.45\linewidth}{
\centering
\resizebox{0.45\textwidth}{!}{%
\begin{tabular}{cc|c|c|c|}
\cline{3-5}
 &  & \multicolumn{2}{c|}{Groups} & \multirow{2}{*}{Total} \\ \cline{3-4}
 &  & Caucasian & Not Caucasian &  \\ \hline
\multicolumn{1}{|c|}{\multirow{2}{*}{Labels}} & +1 & 859 & 979 & 1838 \\ \cline{2-5} 
\multicolumn{1}{|c|}{} & -1 & 1244 & 2196 & 1006 \\ \hline
\multicolumn{2}{|c|}{Total} & 2103 & 3175 & 5278 \\ \hline
\end{tabular}}
\caption{COMPASS dataset with race as a sensitive feature}
}
\end{table}
\end{document}